\documentclass[journal,letterpaper]{IEEEtran}

\usepackage{amsmath,amsfonts}
\usepackage{algorithmic}
\usepackage{algorithm}
\usepackage{array}
\usepackage[caption=false,font=normalsize,labelfont=sf,textfont=sf]{subfig}
\usepackage{textcomp}
\usepackage{stfloats}
\usepackage{url}
\usepackage{verbatim}
\usepackage{graphicx}
\usepackage{cite}
\usepackage{mathtools}
\usepackage{makecell}
\usepackage{diagbox}

\newtheorem{defn}{Definition}
\newtheorem{prob}{Problem}
\newtheorem{lemma}{Lemma}
\newtheorem{remark}{Remark}

\newtheorem{proposition}{Proposition}

\begin{document}

\title{Localization-Uncertainty-Aware UAV Swarm Search for GNSS-Denied Maritime Surveillance}

\author{Siyuan Yang, Junning Lyu, Xiaoyu He, Lakmal Seneviratne and Shaoming He\IEEEauthorrefmark{1}
	\thanks{This work is supported by the National Key Research and Development Program under Grant No. 2022YFE0204400 and National Natural Science Foundation of China under Grant No. 52302449.}
	\thanks{Siyuan Yang is with the School of Aerospace Engineering, Beijing Institute of Technology, Beijing 100081, China (e-mail: siyuan.yang@bit.edu.cn).}
	\thanks{Junning Lyu is with the School of Aerospace Engineering, Beijing Institute of Technology, Beijing 100081, China (e-mail: 3120215027@bit.edu.cn).}
	\thanks{Xiaoyu He is with the School of Aerospace Engineering, Beijing Institute of Technology, Beijing 100081, China (e-mail: 3120230048@bit.edu.cn).}
	\thanks{Lakmal Seneviratne is Professor Emeritus with King's College London, London WC2R 2LS, United Kingdom (e-mail: Lakmal.seneviratne@kcl.ac.uk).}
	\thanks{Shaoming He is with the School of Aerospace Engineering, Beijing Institute of Technology, Beijing 100081, China (e-mail: shaoming.he@bit.edu.cn).}
	\thanks{\IEEEauthorrefmark{1}Corresponding author.}}

\maketitle

\begin{abstract}
This paper studies UAV swarm search in GNSS-denied maritime environments,
where inter-UAV ranging drives cooperative localization
and uncertainty propagates along the multi-hop reference chain,
affecting connectivity, collision avoidance, and search effectiveness.
We propose an uncertainty-aware control barrier function (CBF) framework
approximating cascading localization covariance through Fisher information matrices.
First, the cascading covariance model captures how ranging noise and neighbor
uncertainty accumulate along the localization chain.
Second, uncertainty-tightened CBFs incorporate the covariance model and its
rate into margins for connectivity maintenance and collision avoidance,
with conditional invariance and graceful-degradation bounds.
Third, a hierarchical Voronoi-target strategy with formation keeping supports
efficient forward-looking coverage.
The distributed controller is validated with a standard unbiased extended
Kalman filter in the loop.
The framework is demonstrated on several polygon shapes and swarm sizes from 6 to 30 sUAVs.
An estimator-in-the-loop run completes the search in $302.5$\,s,
20 Monte Carlo deployments confirm repeatable completion with all local QPs feasible
and containment in at least $98.97\%$ of UAV-frames,
and six MBZIRC runs provide estimator-in-the-loop evidence.
The source code is publicly available\footnote{Core CBF library: \url{https://github.com/xirhxq/cbf/tree/paper/cbf2026-tvt} (commit \texttt{7e1a3f6}); ROS 2 implementation: \url{https://github.com/xirhxq/cbf-ros2}}.
\end{abstract}

\begin{IEEEkeywords}
UAV swarm, Control barrier functions, GNSS-denied maritime search, Position uncertainty, Formation control, Cooperative localization
\end{IEEEkeywords}
\section{Introduction}
\label{sec:introduction}

\begin{figure}[!t]
	\centering
	\includegraphics[width=\columnwidth]{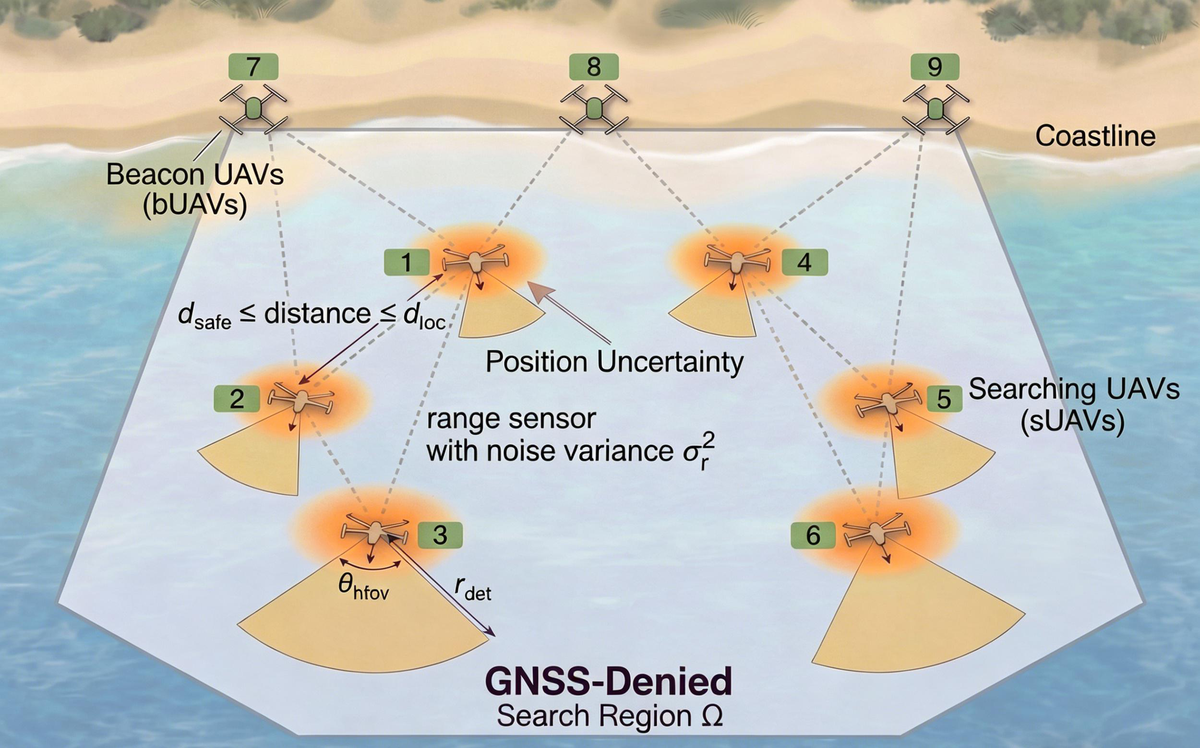}
	\caption{The uGdMS scenario: sUAVs search a GNSS-denied region with
		forward-sector sensing while coastal bUAVs provide localization anchors.}
	\label{fig:gdms}
\end{figure}

\IEEEPARstart{U}{nmanned} aerial vehicle (UAV) swarms have received growing attention for search-and-rescue and surveillance because they cover large areas more efficiently than a single platform \cite{bogdanowicz2017flying}.
In maritime surveillance, UAV swarms are attractive for search, border monitoring, and wide-area patrol over oceans.
Practical maritime operations are often challenged by Global Navigation Satellite System (GNSS)-denied conditions (jamming, spoofing, interference),
so UAVs must search, navigate, and coordinate with onboard sensing and inter-UAV cooperation.

This paper considers a GNSS-denied maritime surveillance scenario in which a swarm systematically searches a large ocean region extending from the coastline (Fig.~\ref{fig:gdms}).
Hovering beacon UAVs (bUAVs) near the coast localize through coastal features (e.g., lidar-visual-inertial odometry \cite{shan_lvi-sam_2021}),
while searching UAVs (sUAVs) perform coverage.
sUAVs carry forward-looking sensors (cameras or radar) and ranging sensors for cooperative localization;
because direct ranging to coastal bUAVs cannot support long-range offshore search,
references propagate from the bUAVs into the search area in a multi-hop chain.
A central difficulty is that localization uncertainty accumulates along this chain:
each sUAV estimates its position from noisy ranges and from the already uncertain estimates of its reference neighbors,
which directly affects control design.
Localization-linked distances must remain within sensing range after accounting for position errors,
and collision-avoidance constraints must include uncertainty margins for both vehicles.
Maritime search therefore requires an uncertainty-aware mechanism coordinating coverage, localization, and inter-UAV safety.

We refer to this problem as \emph{uncertainty-aware GNSS-denied Maritime Search (uGdMS)}.
Compared with conventional coverage control, uGdMS operates without global positioning,
its uncertainty is cascading (each UAV depends on the uncertain estimates of upstream references),
and its forward-looking sensors couple position control with yaw orientation;
unlike our prior work under perfect localization \cite{yang2024aerial},
we explicitly incorporate the cascading uncertainty into the control design.

This paper develops a CBF-based framework that integrates cooperative-localization uncertainty into swarm search and safety control.
The main contributions are:

\begin{enumerate}
    \item \textit{Cascading uncertainty modeling for cooperative localization:}
    We develop a Fisher-information-based model approximating position covariance in range-only cooperative localization,
    capturing how ranging noise and reference-neighbor uncertainty accumulate along the multi-hop localization chain.

    \item \textit{Robust CBF design under localization uncertainty:}
    We construct uncertainty-tightened CBFs for localization-connectivity maintenance and collision avoidance by incorporating a position-error envelope and its rate into the constraint design,
    establish a conditional invariance statement under the envelope and rate conditions,
    and a graceful-degradation bound quantifying the residual constraint margin when the envelope is breached,
    and validate the resulting distributed controller with the estimator in the loop.

    \item \textit{Hierarchical search strategy with forward-looking sensing:}
    A task-oriented search controller lets squad leaders select targets through Voronoi partitioning while followers maintain formation, with decoupled distance and yaw objectives for forward-looking sensors.

    \item \textit{Estimator-in-the-loop validation:}
    We validate the estimator-agnostic framework end-to-end with a standard unbiased EKF in the control loop,
    demonstrating complete coverage with all quadratic program (QP) rows feasible and no true-distance constraint violations
    across the representative run and 20 Monte Carlo deployments.
\end{enumerate}

The resulting framework applies to arbitrary convex polygonal search regions and varying swarm sizes.
Numerical simulation with the estimator in the control loop demonstrates complete coverage of a $3000\,\mathrm{m}\times 3000\,\mathrm{m}$ region with 14 sUAVs and 3 bUAVs in $302.5$~s;
Monte Carlo studies confirm completion repeatability,
and six high-fidelity MBZIRC runs complete coverage with no constraint violations.

The remainder presents related work (Section~\ref{sec:relatedwork}), models and CBF preliminaries
(Section~\ref{sec:preliminaries}), uncertainty propagation and robust constraints
(Section~\ref{sec:constraint_satisfaction}), the search strategy (Section~\ref{sec:search_strategy}),
numerical and Gazebo results (Sections~\ref{sec:results} and~\ref{sec:validation}), and conclusions
(Section~\ref{sec:conclusion}).
\section{Related Work}
\label{sec:relatedwork}

\begin{table*}[t]
\centering
\caption{Comparison of related works ($\checkmark$: addressed; $\triangle$: partially addressed; $\times$: not addressed).}
\label{tab:comparison}
\begin{tabular}{l|cccc}
\hline
\textbf{Feature} & \textbf{Prior GdMS} \cite{yang2024aerial} & \textbf{Cooperative Localization Works} & \textbf{Uncertainty-aware Coverage Works} & \textbf{This paper} \\
\hline
GNSS-denied maritime search & $\checkmark$ & $\triangle$ & $\triangle$ & $\checkmark$ \\
Cooperative localization & $\times$ & $\checkmark$ & $\times$ & $\checkmark$ \\
Cascading uncertainty propagation & $\times$ & $\times$ & $\times$ & $\checkmark$ \\
Robust connectivity maintenance & $\times$ & $\triangle$ & $\times$ & $\checkmark$ \\
Collision avoidance under uncertainty & $\times$ & $\triangle$ & $\triangle$ & $\checkmark$ \\
Forward-looking sensing & $\times$ & $\times$ & $\times$ & $\checkmark$ \\
\hline
\end{tabular}
\end{table*}

This section reviews GNSS-denied cooperative localization, maritime search and coverage control,
and CBF-based multi-UAV coordination.

\subsection{GNSS-denied Cooperative Localization and Maritime Search}

Cooperative localization through inter-UAV ranging is essential in GNSS-denied maritime environments \cite{cooperative_localization_survey_2025};
vision-based \cite{tron_distributed_2016} and ultra-wideband (UWB)-based \cite{xu_decentralized_2020, zhang_agile_2022} methods are extensively studied for relative localization and formation control,
along with decentralized scheduling for cooperative localization \cite{peng_decentralized_2019}
and UAV position estimation with collision avoidance \cite{luo_uav_2013},
but localization methods typically do not integrate uncertainty into safety-critical control design.

Maritime search and rescue has attracted considerable attention \cite{wu_maritime_rl_2024},
offshore coverage path planning is studied for autonomous surface vehicles \cite{zhao_spcpp_2026},
and multi-UAV search with connectivity maintenance is addressed in vehicular networks \cite{yanmaz_dynamic_2024};
existing approaches usually assume reliable positioning and do not address the coupling among search effectiveness, localization connectivity, and collision avoidance under uncertain cooperative localization.

Several works consider coverage control under localization uncertainty:
AWGV \cite{papatheodorou2018awgv} builds guaranteed Voronoi cells from known circular regions but assumes independent bounds and no connectivity maintenance;
Chen et al. \cite{chen2020collision} contract uncertain Voronoi cells with each agent's own uncertainty, still with independent bounds;
Boldrer et al. \cite{boldrer2026uncertainty} achieve communication-less flocking without modeling error accumulation along reference chains;
and resilient coverage strategies \cite{rezaee_uav_coverage_2024} likewise assume bounded independent uncertainty.
In contrast, this work models cascading localization covariance through Fisher information matrices
and incorporates the resulting radii into safety-critical control design.

\subsection{CBF-Based UAV Swarm Control}

Control barrier functions provide a systematic framework for enforcing state and input constraints in safety-critical systems \cite{ames_control_2017, ames_cbf_survey_2019}.
Recent advances extend CBFs to uncertain dynamics \cite{garg_robust_2021, alan_cbf_iss_2023}, safety-critical vehicular platooning \cite{bohara_adaptive_2024}, and multi-UAV applications including collision avoidance \cite{wang_multi-objective_2016} and coverage with connectivity maintenance \cite{luo2019connectivity}.

Existing CBF-based methods assume exact state information or externally specified, independent uncertainty bounds,
ignoring uncertainty derived from cooperative localization (sensing geometry, ranging noise, and inherited upstream uncertainty),
and most coverage-control formulations assume downward-looking sensors with circular footprints \cite{funada_nonsmooth_cbf_2024},
while maritime surveillance typically uses forward-looking sensors with limited fields of view, coupling translational motion with yaw alignment.
Safe navigation under uncertainty has been addressed via model predictive control \cite{saccani_multi_trajectory_uav_2023} or invariant set planning \cite{greiff_invariant_set_uav_2025}, but these approaches do not combine cascading localization uncertainty with forward-looking sensing constraints.
The robust CBF construction \cite{garg_robust_2021} provides the uncertainty-margin machinery we adopt;
our contribution is to derive that margin online from a cascading cooperative-localization model and its rate
rather than from an externally specified bound,
and to embed it in a distributed QP whose rows cover every pairwise constraint.

Our previous work \cite{yang2024aerial} assumed perfect localization;
here sUAVs localize through range-based multi-hop cooperation,
and the resulting uncertainty enters localization-connectivity maintenance, collision avoidance, and search control.
The differences from existing methods are summarized in Table~\ref{tab:comparison}.

\section{Preliminaries}
\label{sec:preliminaries}

This section introduces the system models, cooperative localization, the uGdMS problem, and CBF theory with state uncertainty.

\subsection{Modeling}
\label{subsec:modeling}

Consider a swarm of $N$ sUAVs and $M$ bUAVs in a GNSS-denied environment,
indexed $i \in \{1, \ldots, N\}$ and $i \in \{N+1, \ldots, N+M\}$,
with a fixed altitude and full communication,
so search and localization are planar.
Each UAV has state $\boldsymbol{x}_i = [x_i, y_i, \psi_i]^T$,
where $\boldsymbol{p}_i = [x_i, y_i]^T \in \mathbb{R}^2$ is the 2D position and $\psi_i \in (-\pi, \pi]$ the yaw angle,
with dynamics
\begin{equation}
\dot{\boldsymbol{x}}_i = \boldsymbol{u}_i,
\label{eq:dynamics}
\end{equation}
where $\boldsymbol{u}_i = [u_{i,x}, u_{i,y}, \omega_i]^T \in \mathbb{R}^3$ comprises velocity commands and yaw rate.
This planar velocity-command model abstracts the underlying attitude and translational dynamics;
the abstraction is standard for distributed CBF coordination and focuses on the
cooperative-localization and constraint layer;
extending to higher-order dynamics via high-order CBFs or inner tracking loops is a natural next step.
Let
$\mathcal{S}:=\{1,\ldots,N\}$
be the controlled sUAV index set and let
\begin{equation}
\mathcal{P}:=\{\{i,j\}:i,j\in\mathcal{S},\ i<j\}
\label{eq:collision_pair_set}
\end{equation}
be the unordered sUAV collision-pair set.
For each $i\in\mathcal{S}$,
let $\mathcal{U}_i\subseteq\mathbb{R}^3$ be its explicitly specified admissible command set and define
$\mathcal{U}:=\prod_{i\in\mathcal{S}}\mathcal{U}_i$.
When input limits are disabled,
$\mathcal{U}_i=\mathbb{R}^3$;
bounded-input claims therefore require an explicit choice of $\mathcal{U}_i$.

For the continuous-time controller interface,
we write
$\boldsymbol{v}_i:=[u_{i,x},u_{i,y}]^T$
and assume the estimated position supplied to the QP is absolutely continuous and satisfies
\begin{equation}
\dot{\hat{\boldsymbol{p}}}_i=\boldsymbol{v}_i
\quad\text{almost everywhere.}
\label{eq:ideal_estimated_position_interface}
\end{equation}
In the numerical validation this interface is realized by advancing the EKF estimate
with the commanded velocity and a coast prediction when no range measurement is accepted;
the estimator-rate residual
$\dot{\hat{\boldsymbol{p}}}_i-\boldsymbol{v}_i$
is not exactly zero and its effect is assessed in the closed-loop runs of Section~\ref{sec:results}.

Each UAV carries four sensors: a front-sector detection sensor, an altimeter, an inertial measurement unit (IMU), and a ranging sensor.
The IMU provides the yaw angle $\psi_i$ and the altimeter the height coordinate $z_i$;
unlike the downward circular sensing of prior work \cite{yang2024aerial}, the front-sector model accounts for yaw.
With $\angle(\boldsymbol{a}, \boldsymbol{b})$ the azimuth from $\boldsymbol{b}$ to $\boldsymbol{a}$,
the sensor covers (Fig.~\ref{fig:front-sector}) a sector of radius $r_{\text{det}}$
and horizontal field-of-view $\theta_{\text{hfov}}$,
as typical of maritime forward-looking cameras or radar.
Formally, a point $\boldsymbol{q} \in \Omega \subset \mathbb{R}^2$ is considered searched by UAV $i$ if it satisfies both distance and angular constraints:
\begin{equation}\label{eq:front_sector_sensing}
\|\boldsymbol{q} - \boldsymbol{p}_i\| \le r_{\text{det}}, \ |\angle(\boldsymbol{q}, \boldsymbol{p}_i) - \psi_i| \le \theta_{\text{hfov}}/2.
\end{equation}

\begin{figure}[ht]
	\centering
	\subfloat[]{\includegraphics[width=.45\linewidth]{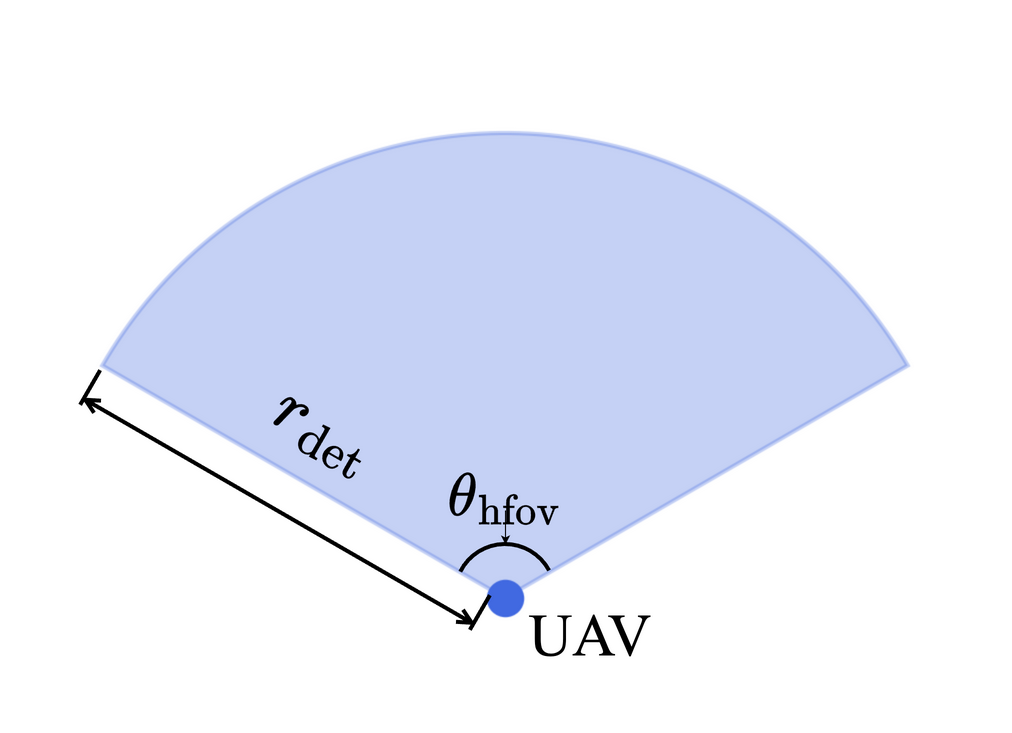}}
	\hfil
	\subfloat[]{\includegraphics[width=.29\linewidth]{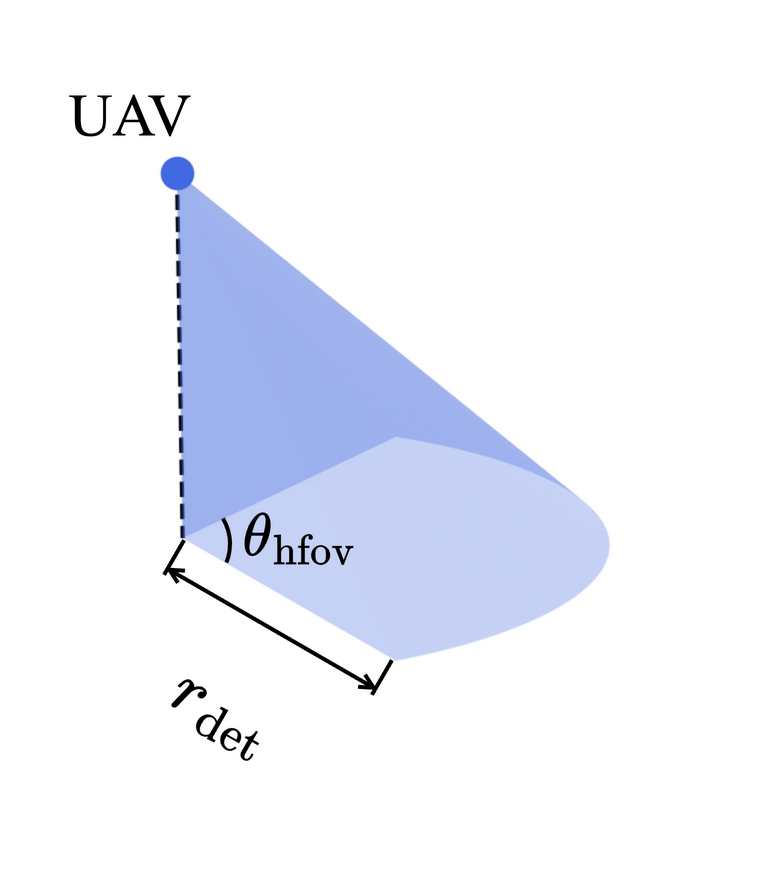}}
	\caption{(a) Top-down view of the front-sector sensing model; (b) 3D view of the sensing cone.}
	\label{fig:front-sector}
\end{figure}

The ranging sensor (e.g., UWB) provides distance measurements to neighboring UAVs within maximum range $d_{\text{loc}}$.
The ranging measurement between UAV $i$ and neighbor $j$ is:
\begin{equation}
\hat{d}_{ij} = \|\boldsymbol{p}_i - \boldsymbol{p}_j\| + \eta_{ij},
\label{eq:ranging}
\end{equation}
where $\eta_{ij} \sim \mathcal{N}(0, \sigma_r^2(d_{ij}))$ is ranging noise
with the distance-dependent variance $\sigma_r^2(d)$ specified in Section~\ref{sec:results}
(Table~\ref{tab:paras}).

\subsection{Cooperative Localization}
\label{subsec:cooperative_localization}

Because the environment is GNSS-denied, sUAVs cannot access absolute positions directly and must maintain state estimates through cooperative localization.
Each sUAV $i$ maintains a state estimate $\hat{\boldsymbol{x}}_i = [\hat{\boldsymbol{p}}_i^T, \hat{\psi}_i]^T$, where $\hat{\boldsymbol{p}}_i \in \mathbb{R}^2$ is the estimated 2D position and $\hat{\psi}_i$ is the estimated yaw angle.
The yaw angle is obtained from the IMU and is treated as accurately measured, so $\hat{\psi}_i = \psi_i$;
IMU yaw error is small relative to the position uncertainty and does not enter the envelope conditions.
The 2D position $\hat{\boldsymbol{p}}_i$ is obtained through cooperative localization using inter-UAV ranging measurements \eqref{eq:ranging}.

The position covariance matrix $\boldsymbol{\Sigma}_{\boldsymbol{p}_i} \in \mathbb{R}^{2 \times 2}$ characterizes the quality of the position estimate.
For bUAVs ($i \in \{N+1, \ldots, N+M\}$), the position is known exactly ($\hat{\boldsymbol{p}}_i = \boldsymbol{p}_i$) and $\boldsymbol{\Sigma}_{\boldsymbol{p}_i} = \boldsymbol{0}$.
For sUAVs, the position estimate is computed from inter-UAV ranging measurements;
both the numerical study and the Gazebo implementation use the same standard unbiased extended Kalman filter
with distance-dependent ranging noise and sequential scalar range updates.
The EKF receives ranging measurements $\hat{d}_{ij}$ to active references $j \in \mathcal{R}_i(t)$
and the neighbors' position estimates $\hat{\boldsymbol{p}}_j$ and covariances $\boldsymbol{\Sigma}_{\boldsymbol{p}_j}$,
with the yaw angle obtained from the IMU.
The control design is agnostic to the estimator, requiring only a position estimate with an uncertainty radius;
the estimator can therefore be shared verbatim across the numerical and Gazebo studies.

\begin{figure}[ht]
  \centering
  \includegraphics[width=0.8\linewidth]{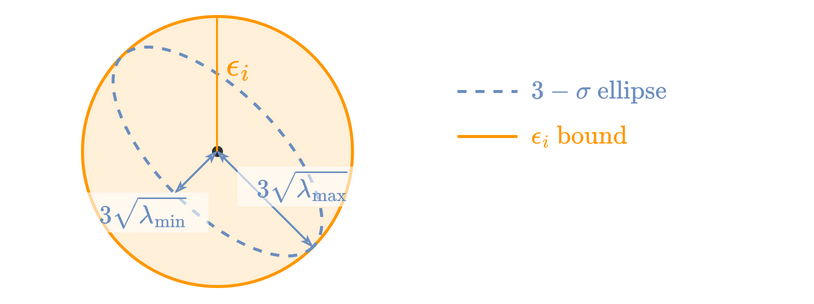}
  \caption{The uncertainty radius $\epsilon_i$ (circle) contains the Mahalanobis-radius-3 ellipse
  	of the position covariance.}
  \label{fig:uncertainty-bound}
\end{figure}

We further define the scalar uncertainty radius:
\begin{equation}
\epsilon_i = 3\sqrt{\lambda_{\max}(\boldsymbol{\Sigma}_{\boldsymbol{p}_i})},
\label{eq:epsilon}
\end{equation}
where $\lambda_{\max}(\boldsymbol{\Sigma}_{\boldsymbol{p}_i})$ is the maximum eigenvalue of $\boldsymbol{\Sigma}_{\boldsymbol{p}_i}$.
This circular radius contains the Mahalanobis-radius-3 ellipse (Fig.~\ref{fig:uncertainty-bound})
and is not itself an upper bound on the estimator error;
$\epsilon_i=0$ for bUAVs, and the radius enters the robust CBFs of Section~\ref{sec:constraint_satisfaction}.

To extend the operating range, sUAVs localize through a multi-hop cascade in which each maintains
at least two references (bUAVs or other sUAVs) and inherits uncertainty from ranging noise and neighbor estimates;
in the triangular ladder used here every sUAV references only bUAVs or smaller-local-index sUAVs,
forming a directed acyclic graph rooted at the bUAVs (Section~\ref{sec:constraint_satisfaction}).

\subsection{uGdMS Problem Formulation}
\label{subsec:ugdms_problem}

To describe search progress, define the time-varying indicator function $\phi: \Omega \times \mathbb{R}_+ \to \{0, 1\}$ as:
\begin{equation}
\phi(\boldsymbol{q}, t) = \begin{cases}
0 & \text{if } \exists \tau\in[t_0,t],\ i\in\mathcal{S},\ \text{Eq.~\eqref{eq:front_sector_sensing} holds at }\tau \\
1 & \text{otherwise}
\end{cases}
\end{equation}
Thus $\phi(q,t)=1$ means point $q$ remains unsearched at time $t$, and $\phi(q,t)=0$ that it has been covered by at least one sUAV footprint.
The search objective is therefore to drive $\iint_{\boldsymbol{q} \in \Omega} \phi(\boldsymbol{q}, t)\,d\boldsymbol{q}$ towards 0 (complete coverage).

We now formulate the uncertainty-aware GNSS-denied maritime search problem as follows.

\begin{prob}[Uncertainty-aware GNSS-denied Maritime Search]
Given a swarm of $N$ sUAVs and $M\ge 2$ bUAVs with dynamics~\eqref{eq:dynamics}, search region $\Omega \subset \mathbb{R}^2$, sensing parameters $(r_{\mathrm{det}},\theta_{\mathrm{hfov}})$, maximum localization range $d_{\mathrm{loc}}$, minimum safe separation distance $d_{\mathrm{safe}}$, and uncertainty radius $\epsilon_i$ for each UAV, design control inputs $\boldsymbol{u}_i$ for $i\in\mathcal{S}$ such that the uncovered area:
\begin{equation}
\int_{\Omega}\phi(q,t)\,dq
\end{equation}
decreases toward zero while the following conditions are maintained for all $t\ge 0$:
\begin{itemize}
\item \textit{Localization-connectivity maintenance:} For each sUAV $i$, the two assigned reference UAVs $\alpha_i,\beta_i \in \mathcal{R}_i^{\mathrm{CBF}} \subseteq \{1,\dots,N+M\}$ satisfy
    \begin{equation}
    \|\hat{\boldsymbol{p}}_i-\hat{\boldsymbol{p}}_{\alpha_i}\|
    \le d_{\mathrm{loc}}-(\epsilon_i+\epsilon_{\alpha_i}),
    \end{equation}
    \begin{equation}
    \|\hat{\boldsymbol{p}}_i-\hat{\boldsymbol{p}}_{\beta_i}\|
    \le d_{\mathrm{loc}}-(\epsilon_i+\epsilon_{\beta_i}),
    \end{equation}
\item \textit{Collision avoidance:}
    For every $\{i,j\}\in\mathcal{P}$,
    \begin{equation}
    \|\hat{\boldsymbol{p}}_i-\hat{\boldsymbol{p}}_j\|
    \ge d_{\mathrm{safe}}+(\epsilon_i+\epsilon_j).
    \end{equation}
\end{itemize}
\end{prob}

\subsection{CBF and Robust CBF}
\label{subsec:cbf}

Consider a control-affine system:
\begin{equation}
\dot{\boldsymbol{x}} = \boldsymbol{f}(\boldsymbol{x}) + \boldsymbol{g}(\boldsymbol{x})\boldsymbol{u},
\end{equation}
where $\boldsymbol{x} \in \mathbb{R}^n$ is the state and $\boldsymbol{u} \in \mathbb{R}^m$ is the control input.

\begin{defn}[Control Barrier Function \cite{ames_control_2017}]
A continuously differentiable function $h: \mathbb{R}^n \to \mathbb{R}$ is a CBF for the safe set $\mathcal{C} = \{\boldsymbol{x} \in \mathbb{R}^n : h(\boldsymbol{x}) \ge 0\}$ if there exists a class-$\mathcal{K}$ function $\alpha: \mathbb{R} \to \mathbb{R}$ such that for all $\boldsymbol{x} \in \mathbb{R}^n$:
\begin{equation}
\sup_{\boldsymbol{u} \in \mathcal{U}} \left[ L_f h(\boldsymbol{x}) + L_g h(\boldsymbol{x})\boldsymbol{u} \right] \ge -\alpha(h(\boldsymbol{x})).
\end{equation}
\end{defn}

The CBF framework has been extended to handle high-order safety constraints via differentiable optimization \cite{wei_collision_convex_2026}.

In this work, the controller uses estimated rather than exact states.
Suppose the true and estimated states satisfy $\|x-\hat{x}\| \le \epsilon$;
we use the robust CBF construction in~\cite{garg_robust_2021}, previously applied to quadrotor landing with external uncertainties \cite{lin_safety_landing_quadrotor_2024}.

\begin{lemma}[Robust CBF \cite{garg_robust_2021}]
Let $h: \mathbb{R}^n \times \mathbb{R}_+ \to \mathbb{R}$ be continuously differentiable with Lipschitz constant $l = \sup \|\frac{\partial h}{\partial \boldsymbol{x}}\|$.
Let $\epsilon$ be a constant valid error bound.
If the uncertainty-tightened barrier function $\hat{h}(\hat{\boldsymbol{x}}, t) = h(\hat{\boldsymbol{x}}, t) - l\epsilon$ satisfies the standard CBF condition, then there exists a control input $\boldsymbol{u} \in \mathcal{U}$ such that, for all initial estimated states $\hat{\boldsymbol{x}}(0) \in \hat{\mathcal{C}}_\epsilon(0) = \{\hat{\boldsymbol{x}} : h(\hat{\boldsymbol{x}}, 0) \ge l\epsilon\}$, the safety set
\begin{equation}
\mathcal{C}_h(t) = \{\boldsymbol{x} : h(\boldsymbol{x}, t) \ge 0\}
\end{equation}
remains forward invariant.
\label{thm:robust_cbf}
\end{lemma}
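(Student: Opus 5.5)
The argument has two parts. First, a Lipschitz comparison shows that the tightened set in estimated coordinates lies inside the true safe set. Second, Nagumo-type invariance is applied to the tightened barrier along the estimated trajectory. Fix $t$ and a pair $(\boldsymbol{x},\hat{\boldsymbol{x}})$ with $\|\boldsymbol{x}-\hat{\boldsymbol{x}}\|\le\epsilon$. By the mean value theorem and $l=\sup\|\partial h/\partial\boldsymbol{x}\|$,
\begin{equation*}
h(\boldsymbol{x},t)\ \ge\ h(\hat{\boldsymbol{x}},t)-l\|\boldsymbol{x}-\hat{\boldsymbol{x}}\|\ \ge\ h(\hat{\boldsymbol{x}},t)-l\epsilon=\hat h(\hat{\boldsymbol{x}},t).
\end{equation*}
Hence $\hat h(\hat{\boldsymbol{x}}(t),t)\ge 0$ implies $\boldsymbol{x}(t)\in\mathcal{C}_h(t)$. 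It therefore suffices to show that the estimated trajectory keeps $\hat h\ge 0$ for all $t\ge 0$. The initial condition $\hat{\boldsymbol{x}}(0)\in\hat{\mathcal{C}}_\epsilon(0)$ is exactly $\hat h(\hat{\boldsymbol{x}}(0),0)\ge 0$.

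For the second part, I would use the hypothesis that $\hat h$ satisfies the standard CBF condition, in its time-varying form. For every $(\hat{\boldsymbol{x}},t)$ there is $\boldsymbol{u}\in\mathcal{U}$ with
\begin{equation*}
\partial_t \hat h+L_f\hat h+L_g\hat h\,\boldsymbol{u}\ \ge\ -\alpha(\hat h).
\end{equation*}
Since $\epsilon$ is constant, $\partial_t\hat h=\partial_t h$ and the gradient of $\hat h$ equals that of $h$. I would then pick a feedback $\boldsymbol{u}(\hat{\boldsymbol{x}},t)$ from the admissible set, for instance the CBF-QP minimizer. The closed-loop estimated state evolves under the same control-affine model, which is the interface \eqref{eq:ideal_estimated_position_interface} in the paper's setting. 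Along it, $\dot{\hat h}\ge-\alpha(\hat h)$ holds almost everywhere. The comparison lemma with $\dot y=-\alpha(y)$, $y(0)=\hat h(0)\ge 0$, has the solution $y\equiv0$ or a positive decaying solution, which gives $\hat h(\hat{\boldsymbol{x}}(t),t)\ge y(t)\ge 0$. Combined with the first part, $\boldsymbol{x}(t)\in\mathcal{C}_h(t)$ for all $t$, which is forward invariance of the true safe set.

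I expect the main obstacle to be regularity. The comparison lemma needs an absolutely continuous $\hat h$ along solutions, and the selected control must admit solutions, so the QP feedback must be locally Lipschitz. The usual sufficient condition is $L_g\hat h\ne0$ near the boundary, together with a Lipschitz or extended class-$\mathcal K$ $\alpha$. If Lipschitz feedback cannot be guaranteed, I would fall back on a Carathéodory solution argument that only uses the inequality almost everywhere. A secondary point is that the bound $\|\boldsymbol{x}-\hat{\boldsymbol{x}}\|\le\epsilon$ must hold at every $t$, not just initially. The statement assumes $\epsilon$ is a valid bound, so I would state this explicitly as a standing hypothesis. This is the hypothesis the later envelope conditions are designed to supply.
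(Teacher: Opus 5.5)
Your proposal is correct and matches the argument the paper relies on. The paper gives no standalone proof of this lemma, which it imports from \cite{garg_robust_2021}. However, the ``two-body argument'' invoked in the proof of Proposition~\ref{prop:rate_aware_invariance} has exactly your two steps: the tightened row is enforced along the estimated trajectory via the interface \eqref{eq:ideal_estimated_position_interface}, the comparison lemma keeps the tightened barrier nonnegative, and the (reverse) triangle inequality serves as the Lipschitz transfer to the true distance.

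The two hypotheses you make explicit are the ones the paper imposes there: that the estimate obeys the control-affine model, and that $\|\boldsymbol{x}-\hat{\boldsymbol{x}}\|\le\epsilon$ holds for all $t$. Without the first, the $l\epsilon$ tightening alone does not in general suffice, and additional margins involving the Lipschitz constants of $L_f h$ and $L_g h$ would be needed.
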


The above result underpins the constraint design in the next section,
where the rate of a time-varying radius enters the barrier derivative
and localization-connectivity and collision-avoidance constraints are enforced
through robust CBFs tightened by uncertainty-dependent margins.

\section{Position Uncertainty Propagation and Robust Constraints}
\label{sec:constraint_satisfaction}

This section derives the Fisher-information propagation model, the localization graph,
and the robust CBF formulations for cascading localization uncertainty.

\subsection{Uncertainty Propagation}
\label{subsec:uncertainty_propagation}

We now derive the propagation model for $\boldsymbol{\Sigma}_{\boldsymbol{p}_i}$ from the ranging model \eqref{eq:ranging}.
For all controller and FIM geometry terms,
define the estimated relative vector,
scalar separation,
and line-of-sight unit vector as
\begin{equation}
\hat{\boldsymbol{r}}_{ij}
:=\hat{\boldsymbol{p}}_i-\hat{\boldsymbol{p}}_j,
\qquad
s_{ij}:=\|\hat{\boldsymbol{r}}_{ij}\|,
\qquad
\boldsymbol{n}_{ij}:=\frac{\hat{\boldsymbol{r}}_{ij}}{s_{ij}},
\label{eq:estimated_los}
\end{equation}
whenever $s_{ij}>0$.

For sUAVs, UAV $i$ receives the neighbor's estimate $\hat{\boldsymbol{p}}_j$ through communication,
which carries the position estimation error:
\begin{equation}
\hat{\boldsymbol{p}}_j = \boldsymbol{p}_j + \boldsymbol{\delta}_j,
\label{eq:p_hat}
\end{equation}
where $\boldsymbol{\delta}_j$ is the position estimation error.
Assuming unbiased estimation, $E[\boldsymbol{\delta}_j] = \boldsymbol{0}$ with covariance $E[\boldsymbol{\delta}_j \boldsymbol{\delta}_j^T] = \boldsymbol{\Sigma}_{\boldsymbol{p}_j}$;
for bUAVs, $\boldsymbol{\Sigma}_{\boldsymbol{p}_j} = \boldsymbol{0}$ since their positions are known.

Expressing the true distance via the estimated position and a first-order Taylor expansion around $\boldsymbol{\delta}_j = \boldsymbol{0}$:
\begin{equation}
d_{ij} = \|\boldsymbol{p}_i - \boldsymbol{p}_j\|
= \|\boldsymbol{p}_i - \hat{\boldsymbol{p}}_j + \boldsymbol{\delta}_j\|
\approx \|\boldsymbol{p}_i - \hat{\boldsymbol{p}}_j\| + \Delta_{ij},
\label{eq:true_distance_decomp}
\end{equation}
where $\Delta_{ij} := \boldsymbol{n}_{ij}^T \boldsymbol{\delta}_j$ captures the position error effect
and the direction uses $\boldsymbol{n}_{ij}$ from \eqref{eq:estimated_los}.

Substituting \eqref{eq:true_distance_decomp} into \eqref{eq:ranging} yields the measurement equation accounting for position error:
\begin{equation}
\hat{d}_{ij} \approx \|\boldsymbol{p}_i - \hat{\boldsymbol{p}}_j\| + \Delta_{ij} + \eta_{ij}
= \|\boldsymbol{p}_i - \hat{\boldsymbol{p}}_j\| + \boldsymbol{n}_{ij}^T \boldsymbol{\delta}_j + \eta_{ij}.
\label{eq:meas_combined}
\end{equation}

Let $\mathcal{R}_i(t) = \{j_1, \ldots, j_{|\mathcal{R}_i(t)|}\}$ denote the active reference set admitted by the localization graph at time $t$;
each reference corresponds to one measurement equation \eqref{eq:meas_combined}.
UAV $i$ estimates $\boldsymbol{p}_i$ through a recursive filter fusing all reference measurements.
The Fisher information matrix quantifies the information content of the current measurements \cite{xu2025optimal};
for range-only localization, it is:
\begin{equation}
\boldsymbol{\Phi}_i = \boldsymbol{J}_i^T \boldsymbol{W}_i^{-1} \boldsymbol{J}_i \in \mathbb{R}^{2 \times 2},
\label{eq:fim}
\end{equation}
where the Jacobian is the stacked line-of-sight matrix
$\boldsymbol{J}_i=[\boldsymbol{n}_{ij_1},\ldots,\boldsymbol{n}_{ij_{|\mathcal{R}_i(t)|}}]^T
\in\mathbb{R}^{|\mathcal{R}_i(t)|\times2}$
and the measurement covariance is
$\boldsymbol{W}_i=\operatorname{diag}(\varrho_{ij_1}^2,\ldots,\varrho_{ij_{|\mathcal{R}_i(t)|}}^2)$.
The diagonal form neglects cross-correlation between effective measurement errors,
including correlations induced when different references share upstream localization ancestors;
it is part of the covariance approximation,
not an assumption of the controller design.
Each $\varrho_{ij}^2$ combines the neighbor position-error variance $\boldsymbol{n}_{ij}^T \boldsymbol{\Sigma}_{\boldsymbol{p}_j} \boldsymbol{n}_{ij}$ from \eqref{eq:meas_combined} with the sensor noise $\sigma_r^2$:
\begin{equation}
\varrho_{ij}^2 = \underbrace{\sigma_r^2}_{\text{sensor noise}}
+ \underbrace{\boldsymbol{n}_{ij}^T \boldsymbol{\Sigma}_{\boldsymbol{p}_j} \boldsymbol{n}_{ij}}_{\text{neighbor uncertainty}}.
\label{eq:varrho_final}
\end{equation}
The estimator implementation of Section~\ref{sec:results} uses the same effective variance,
scaling the neighbor term by $\kappa^2$ to compensate for shared-ancestor correlations
omitted by the diagonal FIM model.
The FIM update is admitted only when every active reference provides finite position and covariance data,
each effective variance is finite and positive,
and the FIM is positive definite.

For an unbiased local estimator,
$\boldsymbol{\Phi}_i^{-1}$ is the Cram\'er--Rao lower bound (CRLB) on its covariance,
not an upper bound on the estimator error.
We use the inverse FIM as a covariance approximation:
\begin{equation}
\boldsymbol{\Sigma}_{\boldsymbol{p}_i} \approx \boldsymbol{\Phi}_i^{-1}.
\label{eq:sigma_approx}
\end{equation}
Consequently,
$\epsilon_i$ is a calibrated design radius derived from the cascading covariance model,
and its closed-loop validity is supported by the containment statistics of Section~\ref{sec:results}.

Equations \eqref{eq:varrho_final}--\eqref{eq:sigma_approx} establish the recursive dependency characterizing cascading uncertainty propagation:
each UAV's position covariance is a function of its reference neighbors' information
\begin{equation}
\boldsymbol{\Sigma}_{\boldsymbol{p}_i} = \mathcal{F}\bigl(\{(\boldsymbol{\Sigma}_{\boldsymbol{p}_j}, \boldsymbol{n}_{ij}) : j \in \mathcal{R}_i(t)\}, \sigma_r^2\bigr),
\label{eq:sigma_dependency}
\end{equation}
where $\mathcal{F}$ denotes the composition of the FIM computation \eqref{eq:fim} and matrix inversion \eqref{eq:sigma_approx}.

This recursion makes explicit how uncertainty propagates along the chain:
sUAVs localizing directly relative to bUAVs depend only on sensor noise and geometry,
while farther along each sUAV inherits upstream uncertainty.
Following \eqref{eq:epsilon}, the scalar uncertainty radius $\epsilon_i$ yields a circular radius enclosing the Mahalanobis-radius-3 ellipse shown in Fig.~\ref{fig:uncertainty-bound}.
For a calibrated two-dimensional Gaussian error $\boldsymbol{e}_i\sim\mathcal{N}(\boldsymbol{0},\boldsymbol{\Sigma}_{\boldsymbol{p}_i})$,
\begin{equation}
\begin{split}
\Pr\!\left(\|\boldsymbol{e}_i\|\le\epsilon_i\right)
&\ge
\Pr\!\left(\boldsymbol{e}_i^T\boldsymbol{\Sigma}_{\boldsymbol{p}_i}^{-1}\boldsymbol{e}_i\le9\right)\\
&=1-\exp(-9/2)
\approx98.89\%.
\end{split}
\label{eq:gaussian_pointwise_coverage}
\end{equation}
The familiar one-dimensional 99.7\% ``three-sigma'' interpretation would require the coefficient
$\sqrt{-2\log(0.003)}\approx3.409$ here.
Neither coefficient certifies a mission-wide envelope a priori;
whether the radius remains valid across UAVs and time is supported
by the closed-loop containment statistics of Section~\ref{sec:results}.

The uncertainty radius $\epsilon_i$ depends on the reference geometry (through the conditioning of $\boldsymbol{J}_i$),
the inherited neighbor covariances $\{\boldsymbol{\Sigma}_{\boldsymbol{p}_j} : j \in \mathcal{R}_i(t)\}$,
and the ranging noise variance $\sigma_r^2$.

Because the reference geometry and inherited covariances vary with time,
$\epsilon_i$ is time varying;
let $w_{ij}:=\varrho_{ij}^2$ and, on each open interval with no active-set transition, suppress the time argument of $\mathcal{R}_i(t)$.
The FIM derivative is then
\begin{equation}
\dot{\boldsymbol{\Phi}}_i
=\sum_{j\in\mathcal{R}_i}
\left(
-\frac{\dot{w}_{ij}}{w_{ij}^2}
\boldsymbol{n}_{ij}\boldsymbol{n}_{ij}^T
+\frac{1}{w_{ij}}
\left(
\dot{\boldsymbol{n}}_{ij}\boldsymbol{n}_{ij}^T
+\boldsymbol{n}_{ij}\dot{\boldsymbol{n}}_{ij}^T
\right)
\right),
\label{eq:fim_derivative}
\end{equation}
For $s_{ij}>0$,
\begin{align}
\dot{\boldsymbol{n}}_{ij}
&=\frac{1}{s_{ij}}
\left(\boldsymbol{I}-\boldsymbol{n}_{ij}\boldsymbol{n}_{ij}^T\right)\times
\left(\dot{\hat{\boldsymbol{p}}}_i-\dot{\hat{\boldsymbol{p}}}_j\right),
\\
\dot{w}_{ij}
&=2\dot{\boldsymbol{n}}_{ij}^T
\boldsymbol{\Sigma}_{\boldsymbol{p}_j}\boldsymbol{n}_{ij}
+\boldsymbol{n}_{ij}^T
\dot{\boldsymbol{\Sigma}}_{\boldsymbol{p}_j}
\boldsymbol{n}_{ij},
\label{eq:fim_components_derivative}
\end{align}
where the ranging-noise variance is constant.
With $\boldsymbol{\Sigma}_{\boldsymbol{p}_i}=\boldsymbol{\Phi}_i^{-1}$,
matrix differentiation gives
$\dot{\boldsymbol{\Sigma}}_{\boldsymbol{p}_i}
=-\boldsymbol{\Sigma}_{\boldsymbol{p}_i}\dot{\boldsymbol{\Phi}}_i\boldsymbol{\Sigma}_{\boldsymbol{p}_i}$.
Using the upper right Dini derivative $D^+$ to cover repeated maximum eigenvalues,
define
\begin{equation}
\nu_i^{\mathrm{ub}}(t)
:=
\frac{3\|\boldsymbol{\Sigma}_{\boldsymbol{p}_i}\|_2^2
\|\dot{\boldsymbol{\Phi}}_i\|_2}
{2\sqrt{\lambda_{\max}(\boldsymbol{\Sigma}_{\boldsymbol{p}_i})}}.
\label{eq:nu_upper_definition}
\end{equation}
Whenever the admitted reference geometry satisfies
$\boldsymbol{\Phi}_i\succeq\underline{\mu}_i\boldsymbol{I}\succ0$
and the quantities in \eqref{eq:fim_derivative} are bounded,
\begin{equation}
D^+\epsilon_i(t)\le\nu_i^{\mathrm{ub}}(t)\le\bar{\nu}_i,
\qquad
\epsilon_i(t)\le\frac{3}{\sqrt{\underline{\mu}_i}},
\label{eq:epsilon_rate_bound}
\end{equation}
for a finite design bound $\bar{\nu}_i$ over the admitted geometry and covariance domain.
For bUAVs,
$\nu_i^{\mathrm{ub}}=\bar{\nu}_i=0$.
The first inequality follows from
$D^+\lambda_{\max}(\boldsymbol{\Sigma})\le\|\dot{\boldsymbol{\Sigma}}\|_2$
and $\|\dot{\boldsymbol{\Sigma}}\|_2
\le\|\boldsymbol{\Sigma}\|_2^2\|\dot{\boldsymbol{\Phi}}\|_2$
(using $\|\boldsymbol{\Sigma}\|_2=\lambda_{\max}(\boldsymbol{\Sigma})\ge0$).
These derivatives hold piecewise while the active reference set is unchanged;
at an addition or removal, $\boldsymbol{\Phi}_i$, $\boldsymbol{\Sigma}_{\boldsymbol{p}_i}$, and $\epsilon_i$ may jump.

\subsection{Cascading Localization Graph}
\label{subsec:localization_graph}

The uncertainty propagation model above applies to any reference configuration;
we now formalize which UAVs serve as references through the localization graph.

\begin{defn}[Localization Graph]
The active localization graph $\mathcal{G}_{\text{loc}}(t) = (\mathcal{V}, \mathcal{E}(t))$ is directed, with $\mathcal{V} = \{1, \ldots, N+M\}$ the set of all UAVs and $(j, i) \in \mathcal{E}(t)$ if UAV $i$ uses UAV $j$ as an active localization reference at time $t$.
\end{defn}

For valid cooperative localization, every active $\mathcal{G}_{\text{loc}}(t)$ must satisfy:
(1) each sUAV has in-degree $\ge 2$;
(2) each sUAV is reachable from at least one bUAV via directed edges.
These conditions alone do not imply acyclicity,
so the absence of directed cycles must be imposed separately or proved from the reference-selection rule \cite{bondy_graph_2008}.
For each sUAV $i$,
let $\mathcal{R}_i^{\mathrm{CBF}}$ contain its two assigned references,
$\mathcal{B}_i^{\mathrm{visible}}(t)$ the other bUAVs within $d_{\mathrm{loc}}$,
and $\mathcal{A}_i^{\mathrm{visible}}(t)$ the other same-squad sUAVs within $d_{\mathrm{loc}}$ with strictly smaller local indices.
The active information set is
\begin{equation}
\mathcal{R}_i(t)
=
\mathcal{R}_i^{\mathrm{CBF}}
\cup\mathcal{B}_i^{\mathrm{visible}}(t)
\cup\mathcal{A}_i^{\mathrm{visible}}(t).
\label{eq:active_reference_set}
\end{equation}
Assigned references (bUAVs or lower-index sUAVs) are always admitted;
other bUAV and eligible sUAV references are range gated,
and every admitted reference enters \eqref{eq:fim}.
Different missions may motivate different topologies, such as mesh structures for redundancy, tree structures for minimal edge count, or chain structures for corridor-like search.
We adopt a \emph{triangular ladder} topology, which balances localization robustness and formation simplicity:
its fixed CBF references create overlapping triangular units propagating from the bUAVs into the search region,
while the optional references in \eqref{eq:active_reference_set} add FIM information only.
An example localization graph with 3 bUAVs and 6 sUAVs is shown in Fig.~\ref{fig:localization_graph}.

\begin{figure}[ht]
  \centering
  \includegraphics[width=0.6\linewidth]{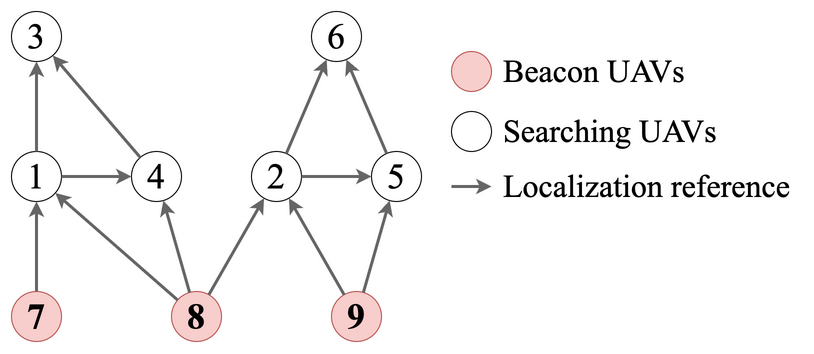}
  \caption{Fixed triangular-ladder CBF references with $M=3$ bUAVs and $N=6$ sUAVs;
  	range-visible references can enter the FIM without adding CBF edges.}
  \label{fig:localization_graph}
\end{figure}

For coordinated search, the swarm is partitioned into $s$ squads of $n_k$ sUAVs each ($\sum_k n_k=N$);
within each squad the UAVs form a cascading localization chain, with the UAV farthest from the bUAV side designated as leader.
Each squad is assigned an ordered bUAV pair $\mathcal{B}_k=(b_k^1,b_k^2)$ that initializes the chain;
neighboring squads may share one bUAV, so $M=s+1$ bUAVs suffice for $s$ squads
(or $M=s$ with cyclic sharing, as in case (f)).

In the triangular ladder topology, each sUAV has exactly two assigned CBF reference neighbors ($|\mathcal{R}_i^{\mathrm{CBF}}| = 2$).
For squad $k$ with $n_k$ sUAVs, we use local indexing $\{1, 2, \ldots, n_k\}$ where sUAV $n_k$ is the leader.
Given the ordered bUAV pair $\mathcal{B}_k = (b_k^1, b_k^2)$, the assigned CBF reference set for the UAV with local index $i'$ is:
\begin{equation}
\mathcal{R}_{k,i'}^{\mathrm{CBF}} = \begin{cases}
\{b_k^1, b_k^2\} & \text{if } i' = 1 \\
\{1, b_k^1\} & \text{if } i' = 2 \\
\{i'-1, i'-2\} & \text{if } i' > 2,
\end{cases}
\label{eq:reference_set}
\end{equation}
where $\{i'-1, i'-2\}$ are the local indices of the two preceding UAVs in the same squad.
Ordering bUAVs first and then sUAVs by squad-local index gives a common topological ordering,
so every active localization graph is a DAG rooted at the bUAVs
(by edge orientation, not graph fixedness, and valid as visible references enter or leave).
The chain is cascading:
the first UAV localizes relative to both bUAVs,
the second relative to the first UAV and one bUAV,
and subsequent UAVs relative to the two preceding UAVs.
Table~\ref{tab:localization_pairs} illustrates the reference assignments for an example configuration with $s=2$ squads.
\begin{table}[htbp]
		\centering
		\caption{Assigned CBF references for the triangular ladder ($s=2$, $n_1=n_2=7$);
			global indices.}
		\label{tab:localization_pairs}
		\begin{tabular}{c|c|c|c}
			\hline
			\multicolumn{2}{c|}{\textbf{Squad 1}} & \multicolumn{2}{c}{\textbf{Squad 2}} \\
			UAV & $\mathcal{R}_i^{\mathrm{CBF}}$ & UAV & $\mathcal{R}_i^{\mathrm{CBF}}$ \\
			\hline
			1 & $\{15, 16\}$ & 8 & $\{16, 17\}$ \\
			2 & $\{1, 16\}$ & 9 & $\{8, 16\}$ \\
			3 & $\{2, 1\}$ & 10 & $\{9, 8\}$ \\
			4 & $\{3, 2\}$ & 11 & $\{10, 9\}$ \\
			5 & $\{4, 3\}$ & 12 & $\{11, 10\}$ \\
			6 & $\{5, 4\}$ & 13 & $\{12, 11\}$ \\
			7 (leader) & $\{6, 5\}$ & 14 (leader) & $\{13, 12\}$ \\
			\hline
		\end{tabular}
	\end{table}

\subsection{Robust CBF Formulation}
\label{subsec:robust_cbf}

We now incorporate the uncertainty radii above into the CBF constraint design.
The two safety-critical requirements are localization-connectivity maintenance and collision avoidance.

\subsubsection{Localization-connectivity maintenance}
For each sUAV $i$,
the localization constraint is imposed on its two assigned references $j \in \mathcal{R}_i^{\mathrm{CBF}}$:
\begin{equation}
s_{ij} \le d_{\text{loc}} - (\epsilon_i + \epsilon_j),
\quad \forall j \in \mathcal{R}_i^{\mathrm{CBF}},
\label{eq:loc_constraint}
\end{equation}
where $\epsilon_j = 0$ for bUAVs due to their known positions.

The robust localization CBF for UAV $i$ relative to assigned reference $j \in \mathcal{R}_i^{\mathrm{CBF}}$ is:
\begin{equation}
\hat{h}_{\text{loc}}^{ij}(\hat{\boldsymbol{p}}_i, \hat{\boldsymbol{p}}_j)
= g(d_{\text{loc}} - s_{ij}) - g(\epsilon_i + \epsilon_j),
\label{eq:robust_loc_cbf}
\end{equation}
where $g > 0$ is the gain.
This unifies sUAV-to-bUAV and sUAV-to-sUAV localization: for bUAV references ($\epsilon_j = 0$) it reduces to $g(d_{\text{loc}} - \|\hat{\boldsymbol{p}}_i - \boldsymbol{p}_j\|) - g\epsilon_i$.

The Lipschitz constant of $h_{\text{loc}}(\boldsymbol{p}) = g(d_{\text{loc}} - \|\boldsymbol{p} - \boldsymbol{p}_a\|)$ follows from the gradient magnitude:
\begin{align}
l &= \sup_{\boldsymbol{p} \in \mathcal{C}} \left\|\frac{\partial h_{\text{loc}}}{\partial \boldsymbol{p}}\right\| \notag \\
&= \sup_{\boldsymbol{p} \in \mathcal{C}} \left\| -g \frac{\boldsymbol{p} - \boldsymbol{p}_a}{\|\boldsymbol{p} - \boldsymbol{p}_a\|} \right\| \notag \\
&= g.
\label{eq:lipschitz}
\end{align}

When a localization reference $j$ is a hovering bUAV,
$\boldsymbol{v}_j=\boldsymbol{0}$.
After dividing by the positive gain $g$,
the rate-aware continuous-time localization row for a chosen $\gamma>0$ is
\begin{equation}
-\boldsymbol{n}_{ij}^T(\boldsymbol{v}_i-\boldsymbol{v}_j)
-\bar{\nu}_i-\bar{\nu}_j
+\gamma b_{\text{loc}}^{ij}
\ge0,
\quad
b_{\text{loc}}^{ij}:=
d_{\text{loc}}
-s_{ij}
-\epsilon_i-\epsilon_j.
\label{eq:rate_aware_loc_row}
\end{equation}
This row accounts for the possible growth of both uncertainty radii via \eqref{eq:epsilon_rate_bound}.
In the implementation, $\gamma$ is the constraint class-$\mathcal{K}$ coefficient $\alpha_c(h)=0.1h$ of Table~\ref{tab:paras}.

\subsubsection{Collision avoidance}
The collision avoidance CBF encodes a minimum safe distance $d_{\text{safe}}$ for every sUAV pair in $\mathcal{P}$.
For $\{i,j\}\in\mathcal{P}$,
the robust collision CBF is:
\begin{equation}
\hat{h}_{\text{col}}^{ij}(\hat{\boldsymbol{p}}_i, \hat{\boldsymbol{p}}_j)
= g(s_{ij} - d_{\text{safe}}) - g(\epsilon_i + \epsilon_j),
\label{eq:robust_col_cbf}
\end{equation}
where $g > 0$ is the gain and $d_{\text{safe}}$ the minimum safe separation distance.
The Lipschitz constant of $h_{\text{col}}$ is also $l = g$.
For the two-body barriers, Lemma~\ref{thm:robust_cbf} applies to the relative position
$\boldsymbol{r}_{ij}=\boldsymbol{p}_i-\boldsymbol{p}_j$,
whose estimate error is bounded by $\epsilon_i+\epsilon_j$;
since each barrier depends on $\|\boldsymbol{r}_{ij}\|$ only, its Lipschitz constant is $g$
and the lemma yields the tightening $g(\epsilon_i+\epsilon_j)$
(with $\epsilon_j=0$ for bUAV references).
The corresponding rate-aware continuous-time collision row is
\begin{equation}
\boldsymbol{n}_{ij}^T(\boldsymbol{v}_i-\boldsymbol{v}_j)
-\bar{\nu}_i-\bar{\nu}_j
+\gamma b_{\text{col}}^{ij}
\ge0,
\quad
b_{\text{col}}^{ij}:=
s_{ij}
-d_{\text{safe}}
-\epsilon_i-\epsilon_j.
\label{eq:rate_aware_col_row}
\end{equation}

\begin{proposition}[Conditional invariance of the rate-aware robust CBF design]
\label{prop:rate_aware_invariance}
Consider the continuous-time planar velocity-command model \eqref{eq:dynamics}
with the estimated-position interface \eqref{eq:ideal_estimated_position_interface}.
Let the uncertainty radii $\epsilon_i$ be locally absolutely continuous and satisfy
$\|\boldsymbol{p}_i-\hat{\boldsymbol{p}}_i\|\le\epsilon_i$
and $D^+\epsilon_i\le\nu_i^{\mathrm{ub}}\le\bar{\nu}_i$
over the interval of interest.
Suppose that every local QP of Section~\ref{subsec:cbf_integration} is feasible,
that each UAV's command satisfies its assigned localization rows \eqref{eq:rate_aware_loc_row}
and its directed collision rows \eqref{eq:rate_aware_col_row},
that $s_{ij}>0$ on every enforced row,
and that all tightened barriers are initially nonnegative.
Then the tightened localization and collision sets are forward invariant:
true localization distances remain no greater than $d_{\text{loc}}$,
and true distances of all pairs in $\mathcal{P}$ remain no less than $d_{\text{safe}}$.
\end{proposition}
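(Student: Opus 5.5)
The plan is to work with the scalar tightened barriers $b_{\text{loc}}^{ij}(t)$ and $b_{\text{col}}^{ij}(t)$ along closed-loop trajectories. We show each satisfies a differential inequality $D^+ b \ge -\gamma b$ almost everywhere. A comparison lemma then gives $b(t)\ge b(0)e^{-\gamma t}\ge 0$. The true-distance conclusions follow from the error envelope through the triangle inequality. The argument is carried out on each open interval with no active-set transition, where all quantities are differentiable almost everywhere. Because the CBF reference pairs $\mathcal{R}_i^{\mathrm{CBF}}$ and the pair set $\mathcal{P}$ are fixed, the enforced rows do not change at such transitions. The possible jumps of $\epsilon_i$ at transitions are excluded by the hypothesis that $\epsilon_i$ is locally absolutely continuous.

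First, fix a localization row $(i,j)$. Since $s_{ij}>0$ on enforced rows, $s_{ij}=\|\hat{\boldsymbol{r}}_{ij}\|$ is locally absolutely continuous. By the interface \eqref{eq:ideal_estimated_position_interface} it satisfies $\dot s_{ij}=\boldsymbol{n}_{ij}^T(\boldsymbol{v}_i-\boldsymbol{v}_j)$ a.e., with $\boldsymbol{v}_j=\boldsymbol{0}$ for a hovering bUAV. The radii are absolutely continuous, so $\dot\epsilon_i$ exists a.e. and equals $D^+\epsilon_i\le\bar\nu_i$ there, and likewise for $j$ (with $\bar\nu_j=0$ for bUAVs). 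Hence, a.e.,
\[
\dot b_{\text{loc}}^{ij} = -\boldsymbol{n}_{ij}^T(\boldsymbol{v}_i-\boldsymbol{v}_j)-\dot\epsilon_i-\dot\epsilon_j \ge -\boldsymbol{n}_{ij}^T(\boldsymbol{v}_i-\boldsymbol{v}_j)-\bar\nu_i-\bar\nu_j \ge -\gamma b_{\text{loc}}^{ij},
\]
where the last step is the row \eqref{eq:rate_aware_loc_row}, which the hypotheses say the command satisfies. The collision case is identical with the sign of $\dot s_{ij}$ reversed, using \eqref{eq:rate_aware_col_row}. For pairs in $\mathcal{P}$, both UAVs move. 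I would note that both directed rows are imposed, so the inequality holds for the actual relative velocity $\boldsymbol{v}_i-\boldsymbol{v}_j$ produced by the two local QPs. This step needs care: the row must constrain the joint command, and I would rely on the proposition's phrasing that each UAV's command satisfies its directed rows with the neighbor's actual velocity substituted.

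Second, I apply Grönwall, or comparison for absolutely continuous functions, to $e^{\gamma t}b(t)$. Its derivative is nonnegative a.e. and it is absolutely continuous, so it is nondecreasing, and $b(0)\ge0$ yields $b(t)\ge0$. Third, the envelope gives $\|\boldsymbol{r}_{ij}-\hat{\boldsymbol{r}}_{ij}\|\le\epsilon_i+\epsilon_j$, hence $\|\boldsymbol{p}_i-\boldsymbol{p}_j\|\le s_{ij}+\epsilon_i+\epsilon_j\le d_{\text{loc}}$. In the collision case we get $\|\boldsymbol{p}_i-\boldsymbol{p}_j\|\ge s_{ij}-\epsilon_i-\epsilon_j\ge d_{\text{safe}}$. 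This is exactly the Lipschitz tightening of Lemma~\ref{thm:robust_cbf} with $l=g$ after dividing by $g$.

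The main obstacle is regularity. We must justify that $b$ is absolutely continuous on the whole interval, including across active-set transitions and when command switching makes $\boldsymbol{v}_i$ only measurable. We must also ensure $s_{ij}$ stays away from zero so $\boldsymbol{n}_{ij}$ is defined. I would handle the first by the assumed absolute continuity of $\epsilon_i$ and $\hat{\boldsymbol{p}}_i$. I would handle the second by the hypothesis $s_{ij}>0$, noting that for collision rows it follows from $b_{\text{col}}\ge0$ since $d_{\text{safe}}>0$.
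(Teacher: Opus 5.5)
Your proposal is correct and follows essentially the same route as the paper: you show $\dot b\ge-\gamma b$ for the tightened barriers $b_{\text{loc}}^{ij}$ and $b_{\text{col}}^{ij}$ using the interface \eqref{eq:ideal_estimated_position_interface}, the rate bound, and the enforced rows, conclude nonnegativity by comparison, and pass to true distances by the (reverse) triangle inequality; your absolute-continuity/Gr\"onwall step is a more careful version of the paper's Dini-derivative comparison lemma. One small correction: the design does not impose both directed rows for a pair in $\mathcal{P}$, since each unordered pair is enforced only in the higher-index UAV's QP, but that single row already involves the actual relative velocity $\boldsymbol{v}_i-\boldsymbol{v}_j$, so your argument is unaffected.
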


\begin{IEEEproof}
Each pair in $\mathcal{P}$ is enforced by exactly one local QP (that of the higher-index UAV),
so the distributed commands together satisfy every assigned localization row
and every pairwise collision row.
For each two-body row, Lemma~\ref{thm:robust_cbf} applied to the relative position
$\boldsymbol{r}_{ij}=\boldsymbol{p}_i-\boldsymbol{p}_j$
(whose estimate error is bounded by $\epsilon_i+\epsilon_j$)
yields the tightening $g(\epsilon_i+\epsilon_j)$ of Section~\ref{subsec:robust_cbf}.
Applying the lemma's two-body argument with the time-varying radius,
the rate terms enter the barrier derivative through
$D^+\epsilon_i\le\nu_i^{\mathrm{ub}}\le\bar{\nu}_i$:
with $b_{\text{loc}}^{ij}=d_{\text{loc}}-s_{ij}-\epsilon_i-\epsilon_j$ and
$D^+s_{ij}=\boldsymbol{n}_{ij}^T(\boldsymbol{v}_i-\boldsymbol{v}_j)$
(by \eqref{eq:ideal_estimated_position_interface}),
the localization row \eqref{eq:rate_aware_loc_row} gives
$D^+b_{\text{loc}}^{ij}\ge-\gamma b_{\text{loc}}^{ij}$,
and the collision row \eqref{eq:rate_aware_col_row} gives
$D^+b_{\text{col}}^{ij}\ge-\gamma b_{\text{col}}^{ij}$ analogously.
By the comparison lemma \cite{haddad_nonlinear_2008},
every initially nonnegative tightened barrier remains nonnegative:
for a localization edge the triangle inequality gives
$\|\boldsymbol{p}_i-\boldsymbol{p}_j\|\le\|\hat{\boldsymbol{p}}_i-\hat{\boldsymbol{p}}_j\|+\epsilon_i+\epsilon_j\le d_{\text{loc}}$,
and for a collision pair the reverse triangle inequality gives
$\|\boldsymbol{p}_i-\boldsymbol{p}_j\|\ge\|\hat{\boldsymbol{p}}_i-\hat{\boldsymbol{p}}_j\|-\epsilon_i-\epsilon_j\ge d_{\text{safe}}$.
\end{IEEEproof}

\begin{proposition}[Graceful degradation under bounded envelope breach]
\label{prop:graceful_degradation}
Under the same control conditions as
Proposition~\ref{prop:rate_aware_invariance}
(every local QP feasible, barriers initially nonnegative, rate-aware rows satisfied),
suppose the envelope is relaxed to
$\|\boldsymbol{p}_i-\hat{\boldsymbol{p}}_i\|\le(1+\rho_i)\epsilon_i$
with $\rho_i\in[0,\bar\rho]$ for all $i$.
Then, for every assigned localization edge,
the true localization distance remains no greater than
$d_{\text{loc}}+\bar\rho(\epsilon_i+\epsilon_j)$,
and for every pair $\{i,j\}\in\mathcal{P}$,
the true inter-UAV distance remains no less than
$d_{\text{safe}}-\bar\rho(\epsilon_i+\epsilon_j)$.
\end{proposition}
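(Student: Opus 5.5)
The plan is to split the argument of Proposition~\ref{prop:rate_aware_invariance} into its two independent halves. The first half is purely about the estimated quantities. The second half converts estimated distances into true distances using the envelope. Only the second half uses the envelope $\|\boldsymbol{p}_i-\hat{\boldsymbol{p}}_i\|\le\epsilon_i$, so relaxing it should change only the final triangle-inequality step.

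First, I will establish that the tightened barriers remain nonnegative, exactly as before. Each pair in $\mathcal{P}$ is enforced by the higher-index UAV's QP, and each assigned localization edge by UAV $i$'s QP. With $D^+s_{ij}=\boldsymbol{n}_{ij}^T(\boldsymbol{v}_i-\boldsymbol{v}_j)$ from \eqref{eq:ideal_estimated_position_interface}, the rows \eqref{eq:rate_aware_loc_row} and \eqref{eq:rate_aware_col_row} combined with $D^+\epsilon_i\le\bar\nu_i$ give $D^+b^{ij}\ge-\gamma b^{ij}$. The comparison lemma then yields $b_{\text{loc}}^{ij}(t)\ge0$ and $b_{\text{col}}^{ij}(t)\ge0$ from nonnegative initial values.

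The point I will stress is that this step never invoked the envelope. It depends only on the estimated-position interface, the rate bound on $\epsilon_i$, QP feasibility, and $s_{ij}>0$, all of which are retained as hypotheses. Hence $s_{ij}\le d_{\text{loc}}-\epsilon_i-\epsilon_j$ and $s_{ij}\ge d_{\text{safe}}+\epsilon_i+\epsilon_j$ continue to hold for all $t$.

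Second, I will write $\boldsymbol{p}_i-\boldsymbol{p}_j=\hat{\boldsymbol{r}}_{ij}-(\boldsymbol{\delta}_i-\boldsymbol{\delta}_j)$ with the relaxed bound
\[
\|\boldsymbol{\delta}_i-\boldsymbol{\delta}_j\|\le(1+\rho_i)\epsilon_i+(1+\rho_j)\epsilon_j\le(1+\bar\rho)(\epsilon_i+\epsilon_j),
\]
using $\epsilon_i,\epsilon_j\ge0$ and $\rho\le\bar\rho$. For a bUAV reference, $\epsilon_j=0$ and $\boldsymbol{\delta}_j=\boldsymbol{0}$. For a localization edge, the triangle inequality gives
\[
\|\boldsymbol{p}_i-\boldsymbol{p}_j\|\le s_{ij}+(1+\bar\rho)(\epsilon_i+\epsilon_j)\le d_{\text{loc}}+\bar\rho(\epsilon_i+\epsilon_j).
\]
For a collision pair, the reverse triangle inequality gives
\[
\|\boldsymbol{p}_i-\boldsymbol{p}_j\|\ge s_{ij}-(1+\bar\rho)(\epsilon_i+\epsilon_j)\ge d_{\text{safe}}-\bar\rho(\epsilon_i+\epsilon_j),
\]
with both bounds evaluated at the current time $t$.

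The only delicate point is the one in the first step: checking carefully that the invariance of the tightened barriers is genuinely decoupled from the envelope. The earlier proof phrases this step through Lemma~\ref{thm:robust_cbf}, which nominally assumes a valid error bound. I will therefore argue directly through the comparison lemma on $b^{ij}$ rather than through the lemma, so that no envelope assumption is implicitly reused.
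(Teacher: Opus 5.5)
Your proposal is correct and follows the same route as the paper: you first establish nonnegativity of the tightened barriers in the estimated quantities, which never uses the envelope, and then apply the triangle and reverse triangle inequalities with the relaxed error bound $(1+\rho_i)\epsilon_i+(1+\rho_j)\epsilon_j\le(1+\bar\rho)(\epsilon_i+\epsilon_j)$. Your explicit comparison-lemma step spells out what the paper compresses into ``the rate-aware row enforces $s_{ij}\le d_{\text{loc}}-\epsilon_i-\epsilon_j$,'' and it correctly shows that this step relies on the retained rate bound $D^+\epsilon_i\le\bar\nu_i$.
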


\begin{IEEEproof}
For a localization edge, the triangle inequality gives
$\|\boldsymbol{p}_i-\boldsymbol{p}_j\|
\le\|\hat{\boldsymbol{p}}_i-\hat{\boldsymbol{p}}_j\|+(1+\rho_i)\epsilon_i+(1+\rho_j)\epsilon_j$.
The rate-aware localization row \eqref{eq:rate_aware_loc_row} enforces
$s_{ij}\le d_{\text{loc}}-\epsilon_i-\epsilon_j$,
so $\|\boldsymbol{p}_i-\boldsymbol{p}_j\|\le d_{\text{loc}}+\rho_i\epsilon_i+\rho_j\epsilon_j
\le d_{\text{loc}}+\bar\rho(\epsilon_i+\epsilon_j)$.
The collision case follows from the reverse triangle inequality and
\eqref{eq:rate_aware_col_row}:
$\|\boldsymbol{p}_i-\boldsymbol{p}_j\|
\ge d_{\text{safe}}-\rho_i\epsilon_i-\rho_j\epsilon_j
\ge d_{\text{safe}}-\bar\rho(\epsilon_i+\epsilon_j)$.
\end{IEEEproof}

\begin{remark}
\label{rem:rate_aware_cbf}
Proposition~\ref{prop:rate_aware_invariance} states the conditional property for which the robust rows are designed:
whenever the envelope and rate conditions hold, the true distances respect the design bounds.
It is a continuous-time statement on the idealized interface \eqref{eq:ideal_estimated_position_interface};
the sampled-data distributed implementation is validated empirically in
Sections~\ref{sec:results} and~\ref{sec:validation},
where every command was feasible after the bounded fallback re-solve (27 first-attempt infeasibilities).
Because the FIM-based covariance \eqref{eq:sigma_approx} is an approximation and the CRLB is a lower bound,
the envelope and rate conditions are assessed in the closed-loop runs,
with their validity supported by the containment statistics of Section~\ref{sec:results}.
The proposition further relies on an a priori rate bound $\bar{\nu}_i$ from
\eqref{eq:epsilon_rate_bound}, whereas the implementation substitutes the causal
backward-difference estimate $\tilde{\nu}_i^k$ of \eqref{eq:backward_uncertainty_rate};
the two coincide only approximately under sampled data (2\,Hz holds, one-step delays, EKF jumps),
so the rate condition holds only approximately in the validated runs.
This gap is absorbed by the conservative localization target $d_{\text{loc}}=850$\,m
(below the physical ranging limit) and the large true-distance margins at the envelope-breach frames
(at least $165.9$\,m localization and $116.1$\,m collision, versus a worst-case degradation of about $11.9$\,m),
so the validated runs satisfy the intended safety property.
Proposition~\ref{prop:graceful_degradation} bounds the consequence of a bounded envelope breach:
with the observed maximum breach ratio $\bar\rho\approx 0.277$,
the per-pair localization degradation is at most about $11.9$\,m,
far below the margins retained at the $103$ breach frames;
for collision pairs the bound is tight where it matters,
and the observed minimum separation is $12.25$\,m, above $d_{\text{safe}}$.
The breach ratio is the finite-sample maximum observed over the 20 Monte Carlo deployments.
\end{remark}
\section{Task-Oriented Search Strategy}
\label{sec:search_strategy}

This section presents the task-oriented search strategy subject to the localization-connectivity and collision-avoidance rows of Section~\ref{sec:constraint_satisfaction}:
hierarchical target selection, distance and yaw task CBFs, and their QP-based integration.

\subsection{Target Selection and Task CBFs}
\label{subsec:target_selection_task_cbfs}

Target selection is hierarchical:
squad leaders partition $\Omega$ through a centroidal Voronoi tessellation (CVT)-inspired construction \cite{cortes_coverage_2004, hu_voronoi_2020} and select targets in their subregions;
followers then form a triangular ladder extending from the bUAV side toward the leader target.
Only squad leaders participate in the Voronoi partition.
For squad leader $L_k$, the Voronoi cell is defined as
\begin{equation}
V_{L_k} = \{\boldsymbol{q} \in \Omega : \|\boldsymbol{q} - \boldsymbol{p}_{L_k}\| \le \|\boldsymbol{q} - \boldsymbol{p}_{L_j}\|, \forall j \ne k\}.
\end{equation}

Let $\boldsymbol{c}_{L_k} = \boldsymbol{p}_{L_k} + r_{\mathrm{det}}[\cos\psi_{L_k}, \sin\psi_{L_k}]^T$ be the center of the leader's field of view and $G_{L_k} = \frac{\int_{V_{L_k}} \boldsymbol{q} \, d\boldsymbol{q}}{\int_{V_{L_k}} d\boldsymbol{q}}$ the geometric centroid of $V_{L_k}$.
Leader $L_k$ selects its target as:
\begin{equation}
\boldsymbol{p}_{\mathrm{target}}^{(L_k)} = \begin{cases}
\displaystyle\arg\min_{\boldsymbol{q} \in V_{L_k} : \phi(\boldsymbol{q}, t) = 1} \|\boldsymbol{q} - \boldsymbol{c}_{L_k}\|, \\[6pt]
\quad \text{if } \exists \boldsymbol{q} \in V_{L_k}, \phi(\boldsymbol{q}, t) = 1 \\[6pt]
G_{L_k}, & \text{otherwise}.
\end{cases}
\end{equation}
Targets outside $\Omega$ due to discretization or projection effects are projected onto the nearest boundary point.

For squad $k$, the followers maintain a triangular ladder formation toward the leader target $\boldsymbol{p}_{\mathrm{target}}^{(n_k)}$,
with $\boldsymbol{p}_{\mathrm{base}} = \frac{\boldsymbol{p}_{b_k^1} + \boldsymbol{p}_{b_k^2}}{2}$ the midpoint of the ordered bUAV pair $\mathcal{B}_k = (b_k^1, b_k^2)$.
Define the section vector as:
\begin{equation}
\boldsymbol{v}_{\text{section}} = \frac{\boldsymbol{p}_{\text{target}}^{(n_k)} - \boldsymbol{p}_{\text{base}}}{\lfloor(n_k+1)/2\rfloor},
\label{eq:section_vector}
\end{equation}
which specifies the nominal longitudinal spacing along the direction from the bUAV side to the leader target.
The base target position for UAV $i$ is:
\begin{equation}
\boldsymbol{p}_{\text{base}}^{(i)} = \boldsymbol{p}_{\text{target}}^{(n_k)} - \left\lfloor\frac{n_k-i+1}{2}\right\rfloor \cdot \boldsymbol{v}_{\text{section}}.
\end{equation}

To form the ladder, UAVs at alternating positions are offset by rotating the section vector;
define $\theta_k = (-1)^k\pi/3$ for one-based squad index $k$,
so that adjacent squads use mirrored ladder orientations.
The target position is:
\begin{equation}
\boldsymbol{p}_{\text{target}}^{(i)} = \begin{cases}
\boldsymbol{p}_{\text{base}}^{(i)} + \boldsymbol{R}(\theta_k)\boldsymbol{v}_{\text{section}} & \text{if } (n_k-i) \equiv 1 \pmod{2} \\
\boldsymbol{p}_{\text{base}}^{(i)} & \text{otherwise},
\end{cases}
\label{eq:formation_target}
\end{equation}
where $\boldsymbol{R}(\theta)$ is the 2D rotation matrix.
This creates a ``ladder'' formation of alternating triangles that supports localization connectivity and forward coverage.

Each squad operates independently with its own CVT partition;
inter-squad requirements are represented by separate localization CBFs.
Fig.~\ref{fig:target_point} illustrates the process for 14 sUAVs and 3 bUAVs in 2 squads.

\begin{figure}[htbp]
	\centering
	\includegraphics[width=\linewidth]{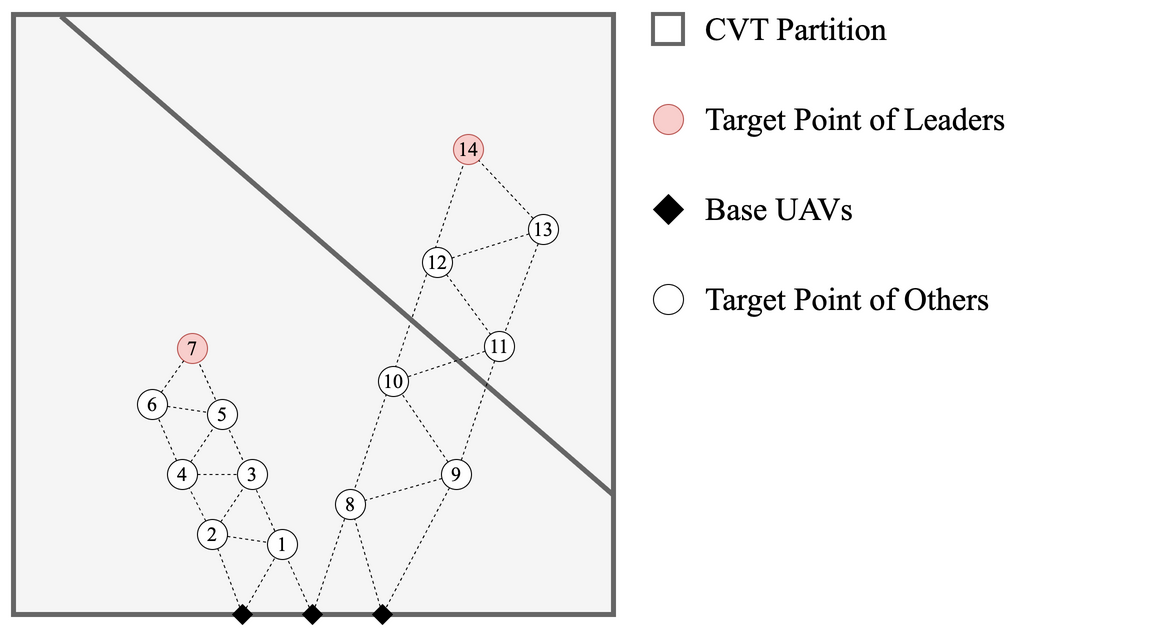}
	\caption{Target points for 14 sUAVs and 3 bUAVs in 2 squads: leaders (7, 14) partition via
		Voronoi cells; followers form the triangular ladder.}
	\label{fig:target_point}
\end{figure}

Given the target positions, task CBFs guide each UAV toward its target.
Because coupling distance and yaw objectives into a single function caused orbiting (competing gradients of comparable magnitude),
we decompose the objective into a distance task CBF for target approach and a yaw task CBF for heading alignment.

\subsubsection{Distance task CBF}
The distance task CBF encourages UAV $i$ to move toward its assigned target $\boldsymbol{p}_{\mathrm{target}}$.
Let $k_p > 0$ denote the distance task gain.
We define
\begin{equation}
h_{\text{distance}}(\boldsymbol{p})
=-k_p\|\boldsymbol{p}_{\text{target}}-\boldsymbol{p}\|,
\end{equation}
which increases toward its maximum value zero as the UAV approaches the target.

\subsubsection{Yaw task CBF}
Let the desired heading be the azimuth angle from the current position to the target,
with gain $k_{\text{yaw}}>0$:
\begin{equation}
\psi_d = \angle(\boldsymbol{p}_{\mathrm{target}},\boldsymbol{p}),
\label{eq:desired_heading}
\end{equation}
We define
\begin{equation}
h_{\mathrm{yaw}}(\boldsymbol{p},\psi)
=
\frac{k_{\text{yaw}}}{2}
\bigl(\cos(\psi_d-\psi)-1\bigr),
\label{eq:yaw_task_cbf}
\end{equation}
which reaches its maximum value zero when $\psi=\psi_d$
and is negative under heading misalignment.

\subsection{CBF Integration}
\label{subsec:cbf_integration}

The constraint and task CBFs are integrated into a single optimization-based controller
that is deliberately distributed:
each UAV solves an independent local QP from exchanged estimates,
scaling with the local neighborhood rather than a centralized stacked system.

For the implemented controller,
let $\mathcal{C}_i$ contain the two localization rows of $\mathcal{R}_i^{\mathrm{CBF}}$
plus the directed collision rows for every lower-index sUAV $j<i$,
and let $\mathcal{T}_i$ denote the task CBFs $h_{\mathrm{distance}}$ and $h_{\mathrm{yaw}}$.
The selected constraint CBFs are hard inequalities,
while the task CBFs are softened by slack variables so that temporary task degradation is permitted when hard rows are active.
At update $k$,
the uncertainty growth rate is estimated online from the backward difference
\begin{equation}
\tilde{\nu}_i^k
:=
\max\!\left(
0,
\frac{\epsilon_i^k-\epsilon_i^{k-1}}{\Delta t}
\right).
\label{eq:backward_uncertainty_rate}
\end{equation}
In the distributed setting this is the rate information available without a central coordinator;
the local rows substitute
$\tilde{\nu}_i^k+\tilde{\nu}_j^k$
for
the a priori bound
$\bar{\nu}_i+\bar{\nu}_j$
used in the rate-aware design.
Both the localization and collision rows use a one-step-delayed neighbor-command predictor
(the most recently broadcast command);
the same shared-information mechanism supplies neighbor states and uncertainty terms.
With these directed rows,
each unordered pair $\{i,j\}$, $i<j$, is hard-enforced by exactly one local QP (that of UAV $j$),
so the $N$ local QPs together cover all $\binom{N}{2}$ pairs,
but solve each row independently.
For UAV $i$, the implemented control input is computed by solving the following local quadratic program:
\begin{prob}[Implemented Distributed uGdMS Controller]
\label{prob:ugdms_integration}
Find control input $\boldsymbol{u}_i$ and slack variables $\{\delta_\lambda\}_{\lambda \in \mathcal{T}_i}$ that:
\begin{align}
\min_{\boldsymbol{u}_i, \{\delta_\lambda\}} \quad
& \|\boldsymbol{u}_i\|^2
+ k_\delta\sum_{\lambda \in \mathcal{T}_i}\delta_\lambda
\notag \\
\text{s.t.} \quad & L_f h_c(\hat{\boldsymbol{x}}_i) + L_g h_c(\hat{\boldsymbol{x}}_i)\boldsymbol{u}_i \notag \\
& \quad + \alpha(h_c(\hat{\boldsymbol{x}}_i)) \ge 0, \ \forall c \in \mathcal{C}_i \notag \\
& L_f h_\lambda(\hat{\boldsymbol{x}}_i) + L_g h_\lambda(\hat{\boldsymbol{x}}_i)\boldsymbol{u}_i \notag \\
& \quad + \alpha(h_\lambda(\hat{\boldsymbol{x}}_i)) \ge -\delta_\lambda, \ \forall \lambda \in \mathcal{T}_i \notag \\
& \boldsymbol{u}_i\in\mathcal{U}_i,\qquad
\delta_\lambda\ge0,\ \forall\lambda\in\mathcal{T}_i,
\end{align}
where $k_\delta>0$ is the common task-slack weight.
\end{prob}

Problem~\ref{prob:ugdms_integration} defines the distributed controller:
each agent solves its local QP from exchanged state estimates
and the most recent broadcast neighbor commands,
keeping every selected localization and collision row hard
while task objectives remain soft (input bounds enter through $\mathcal{U}_i$ when enabled).
Whenever each local QP is feasible, the distributed commands satisfy every row of
Proposition~\ref{prop:rate_aware_invariance};
feasibility and constraint satisfaction are validated end-to-end in Section~\ref{sec:results}.
Because the local QPs are solved independently,
a local QP can occasionally be infeasible even though a neighborhood solution exists;
the failed agent and up to three most-violated partners are then re-solved
as one bounded QP over the same hard rows
(27 first-attempt infeasibilities in 4 of the 20 runs).
This preserves directed-pairwise coverage, and every command was feasible in the validated runs.
The continuous-time Proposition~\ref{prop:rate_aware_invariance} is realized in discrete time
($2$\,Hz command holds, backward-difference rates, and EKF estimate jumps
that violate the ideal interface \eqref{eq:ideal_estimated_position_interface} pointwise);
these sampled-data effects are accommodated by the conservative localization target
and the substantial true-distance margins in the validated runs.
The QP is solved online with standard solvers (Gurobi, OSQP, HiGHS)~\cite{gurobi, osqp, highs},
and the complete online implementation is summarized in Algorithm~\ref{alg:ugdms}.

\begin{algorithm}[t]
\caption{Distributed uGdMS Solution (for each UAV $i$)}\label{alg:ugdms}
\begin{algorithmic}
\REQUIRE Search region $\Omega$, initial state estimate $\hat{\boldsymbol{x}}_i$
\LOOP
    \STATE \textbf{// Measurement, estimation, and exchange}
    \STATE Obtain ranging measurements from visible references and compute $\hat{\boldsymbol{p}}_i$ using the onboard EKF from exchanged estimates (the two assigned references always enter the CBF rows)
    \STATE \textbf{// Uncertainty update}
    \STATE Form $\mathcal{R}_i(t)$ using \eqref{eq:active_reference_set} and compute $\boldsymbol{\Phi}_i$ using \eqref{eq:fim}
    \STATE Update $\epsilon_i = 3\sqrt{\lambda_{\max}(\boldsymbol{\Phi}_i^{-1})}$ and its backward-difference rate estimate $\tilde{\nu}_i^k$
    \STATE \textbf{// Target generation}
    \IF{UAV $i$ is the squad leader}
        \STATE Compute the Voronoi cell $V_i$ and select a target point in the locally unexplored region
        \STATE Broadcast the leader target to the followers in the same squad
    \ELSE
        \STATE Receive the leader target and compute the follower target using \eqref{eq:section_vector}-\eqref{eq:formation_target}
    \ENDIF
    \STATE \textbf{// Constraint and task construction}
    \STATE Construct the localization, directed collision, and task CBF rows
    \STATE \textbf{// Control update}
    \STATE Solve Problem~\ref{prob:ugdms_integration} and apply $\boldsymbol{u}_i^\ast$
    \STATE If infeasible, re-solve jointly with up to three most-violated partners (always restored feasibility in the validated runs)
    \STATE \textbf{// Coverage update}
    \STATE Update the local coverage map from onboard sensing observations
\ENDLOOP
\end{algorithmic}
\end{algorithm}
\section{Numerical Simulations}
\label{sec:results}

\subsection{Estimator Implementation}
\label{subsec:estimator_implementation}

For each sUAV,
the numerical study uses a standard unbiased extended Kalman filter (EKF)
whose prediction follows the commanded velocity:
\begin{equation}
\hat{\boldsymbol{p}}_i^-
=\hat{\boldsymbol{p}}_i+\boldsymbol{v}_i\Delta t,
\qquad
\boldsymbol{\Sigma}_i^-
=\boldsymbol{\Sigma}_i+q^2\Delta t^2\boldsymbol{I},
\label{eq:ekf_predict}
\end{equation}
where $q$ is the process-noise speed.
The available range references are then applied as sequential scalar updates,
each with effective variance
\begin{equation}
\varrho_{ij}^2
= \sigma_r^2(d_{ij})
+ \kappa^2\boldsymbol{n}_{ij}^T\boldsymbol{\Sigma}_{\boldsymbol{p}_j}\boldsymbol{n}_{ij},
\label{eq:ekf_meas_var}
\end{equation}
where $\sigma_r(d)=\sigma_{r,0}\bigl(1+(d/d_0)^2\bigr)$ models the
distance-dependent ranging noise, with $\sigma_{r,0}=0.5$\,m and $d_0=850$\,m,
and $\kappa$ calibrates the anchor-covariance contribution for the
shared-ancestor correlation omitted by the diagonal FIM model.
Both the coefficient $3$ in $\epsilon_i=3\sqrt{\lambda_{\max}(\boldsymbol{\Sigma}_i)}$
and $\kappa=3$ follow the three-sigma calibration of a Gaussian error
(pointwise coverage $98.89\%$ in \eqref{eq:gaussian_pointwise_coverage});
they are calibration choices, not deterministic error bounds.
Replaying the 20 Monte Carlo trajectories with $\kappa=2$, $3$, $4$
yields pooled containment $97.9\%$, $99.95\%$, $99.999\%$;
$\kappa=3$ is a moderate three-sigma-calibrated choice
on the same trajectories,
and the sensitivity analysis confirms robustness across $\kappa\in\{2,3,4\}$,
with the conservative $d_{\text{loc}}=850$\,m target mitigating optimism.
The filter initializes from the known deployment positions,
processes references in increasing-local-index order,
runs in about $0.5$\,ms per frame---well below the $0.5$\,s control period---,
and uses $3\sigma$ innovation gating.
When no measurement is accepted,
the prediction is used directly (coast),
so the estimate remains well defined without any deployment reset;
the controller places no estimator-specific assumption on the filter.
In the estimator-in-the-loop run,
the estimate and its calibrated covariance directly drive the tightened CBF rows.

\subsection{Simulation Setup}
\label{subsec:simulation_setup}

The numerical configuration uses a $3000\times3000$\,m region
(Tables~\ref{tab:config} and~\ref{tab:paras});
the QP is solved with Gurobi.
The comparison and scalability runs predate the dynamic-DAG rate-aware
interface (static margins, no explicit $\mathcal{U}_i$)
and provide coverage and completion evidence only.
Bounded-input feasibility, rate-aware rows, the estimator-in-the-loop Monte Carlo study,
and the Gazebo validation are reported in
Sections~\ref{subsec:performance}, \ref{subsec:statistical}, and \ref{sec:validation}.

\begin{table}[!ht]
	\caption{Numerical baseline configuration.\label{tab:config}}
	\centering
	\footnotesize
	\setlength{\tabcolsep}{3pt}
	\begin{tabular}{l|l}
		\hline
		\textbf{Item} & \textbf{Value} \\
		\hline
		Search region $\Omega$ & $x\in[-1500,1500]$\,m, $y\in[0,3000]$\,m \\
		Swarm configuration & $N=14$ sUAVs, $M=3$ bUAVs, $s=2$ squads \\
		Formation & Triangular ladder \\
		Coverage grid & $10$\,m resolution \\
		Initial positions (m) & Squad 1 (UAV 1--7): $(-120,10),\dots,(-180,130)$ \\
			& Squad 2 (UAV 8--14): $(120,10),\dots,(180,130)$ \\
			& bUAVs (UAV 15--17): $(-300,0),(0,0),(300,0)$ \\
		\hline
	\end{tabular}
\end{table}

\begin{table}[!ht]
	\caption{Numerical baseline parameters.\label{tab:paras}}
	\centering
	\footnotesize
	\setlength{\tabcolsep}{3.5pt}
	\begin{tabular}{p{1.35in}|c|p{1.35in}}
		\hline
		\textbf{Parameter} & \textbf{Symbol} & \textbf{Value} \\
		\hline
		Safe distance & $d_\text{safe}$ & 10\,m \\
		Localization range & $d_\text{loc}$ & 850\,m \\
		Detection radius & $r_\text{det}$ & 400\,m \\
		Horizontal field of view & $\theta_\text{hfov}$ & 120$^\circ$ \\
		Ranging noise std & $\sigma_r(d)$ & $0.5(1+(d/850)^2)$\,m \\
		Measurement availability & $p(d)$ & $1/(1+(d/850)^2)$ \\
		Innovation gate & -- & $3\sigma$ \\
		\hline
		Constraint CBF gain & $g$ & 1.0 \\
		Constraint class-$\mathcal{K}$ & $\alpha_c(h)$ & $0.1h$ \\
		Task class-$\mathcal{K}$ & $\alpha_t(h)$ & $h$ \\
		Slack weight & $k_\delta$ & 10.0 \\
		Distance task CBF gain & $k_p$ & 5.0 \\
		Yaw CBF gain & $k_\text{yaw}$ & 1.0 \\
		\hline
		Process-noise speed & $q$ & 1.0\,m/s \\
		Anchor-covariance scale & $\kappa$ & 3.0 \\
		Initial estimate std & $p_0$ & 1.0\,m \\
		\hline
		Safety QP & -- & Enabled (directed pairwise) \\
		Uncertainty-rate compensation & -- & Backward difference \\
		Input limits & $\mathcal{U}_i$ & $|u_{i,x}|,|u_{i,y}|\le50$\,m/s; $|\omega_i|\le0.35$\,rad/s \\
		Information set & -- & Dynamic \\
		\hline
		Control period & $\Delta t$ & 0.5\,s \\
		\hline
	\end{tabular}
\end{table}

\subsection{Performance Analysis}
\label{subsec:performance}

This subsection presents the baseline demonstration and ablation studies.

Squad leaders advance while followers hold the triangular ladder formation (Fig.~\ref{fig:sim1}),
covering the central region by $t=100$\,s and most of the region by $t=200$\,s.
The estimator-in-the-loop run completes the search at $t=302.5$\,s,
only $1.0$\,s ($0.3\%$) later than the same configuration with perfect localization
and the FIM-derived radius ($t=301.5$\,s).
Coverage reaches 25\%, 50\%, 75\%, and 100\% at $t=50.5$, $128.0$, $173.5$, and $302.5$\,s (Fig.~\ref{fig:coverage_progress}).
Every safety and communication row is feasible in the sequential quadratic program at every frame.

\begin{figure}[htbp]
	\centering
	\subfloat[$t=0$s]{\includegraphics[width=.28\linewidth]{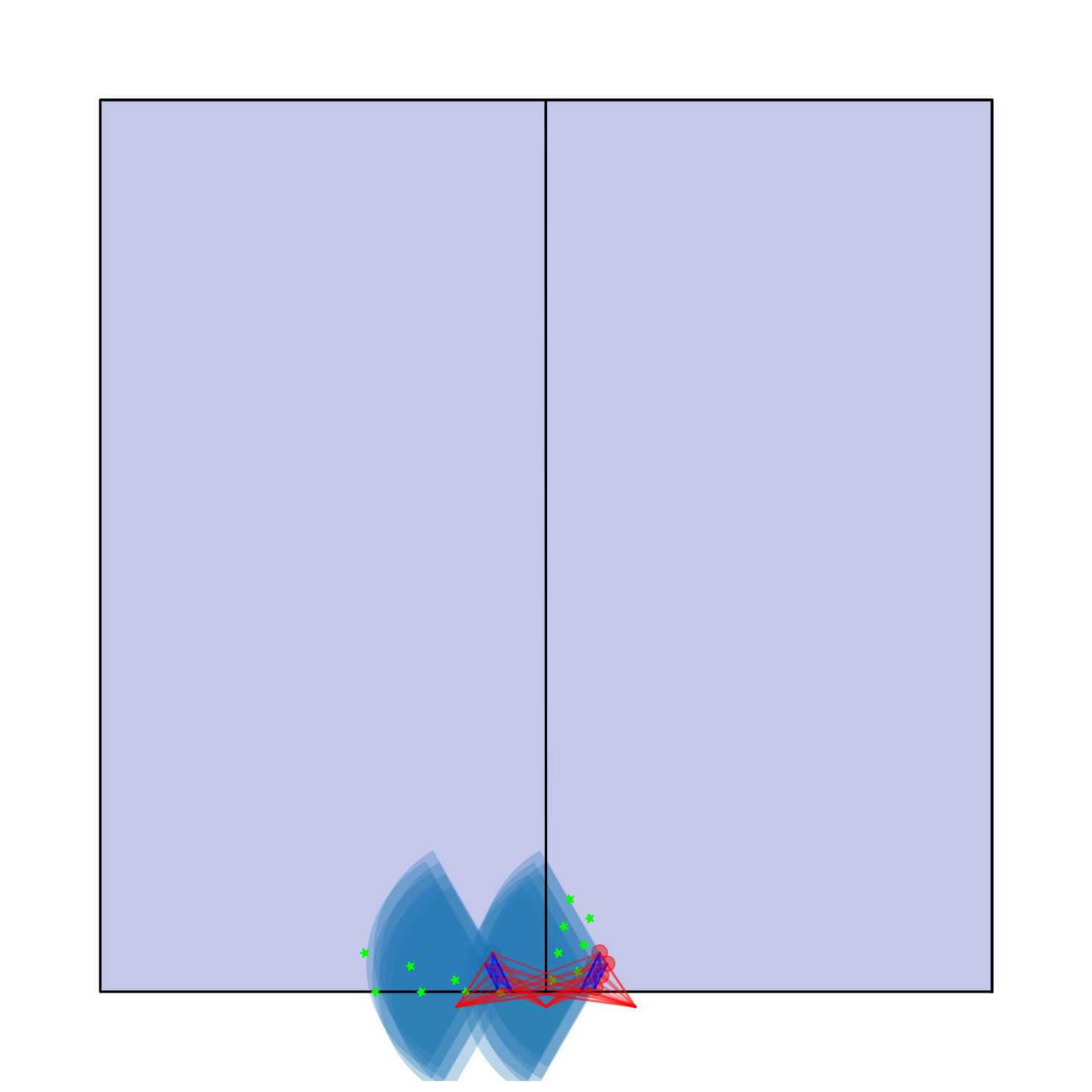}%
		\label{fig:sim_t0}}
	\hfil
	\subfloat[$t=50$s]{\includegraphics[width=.28\linewidth]{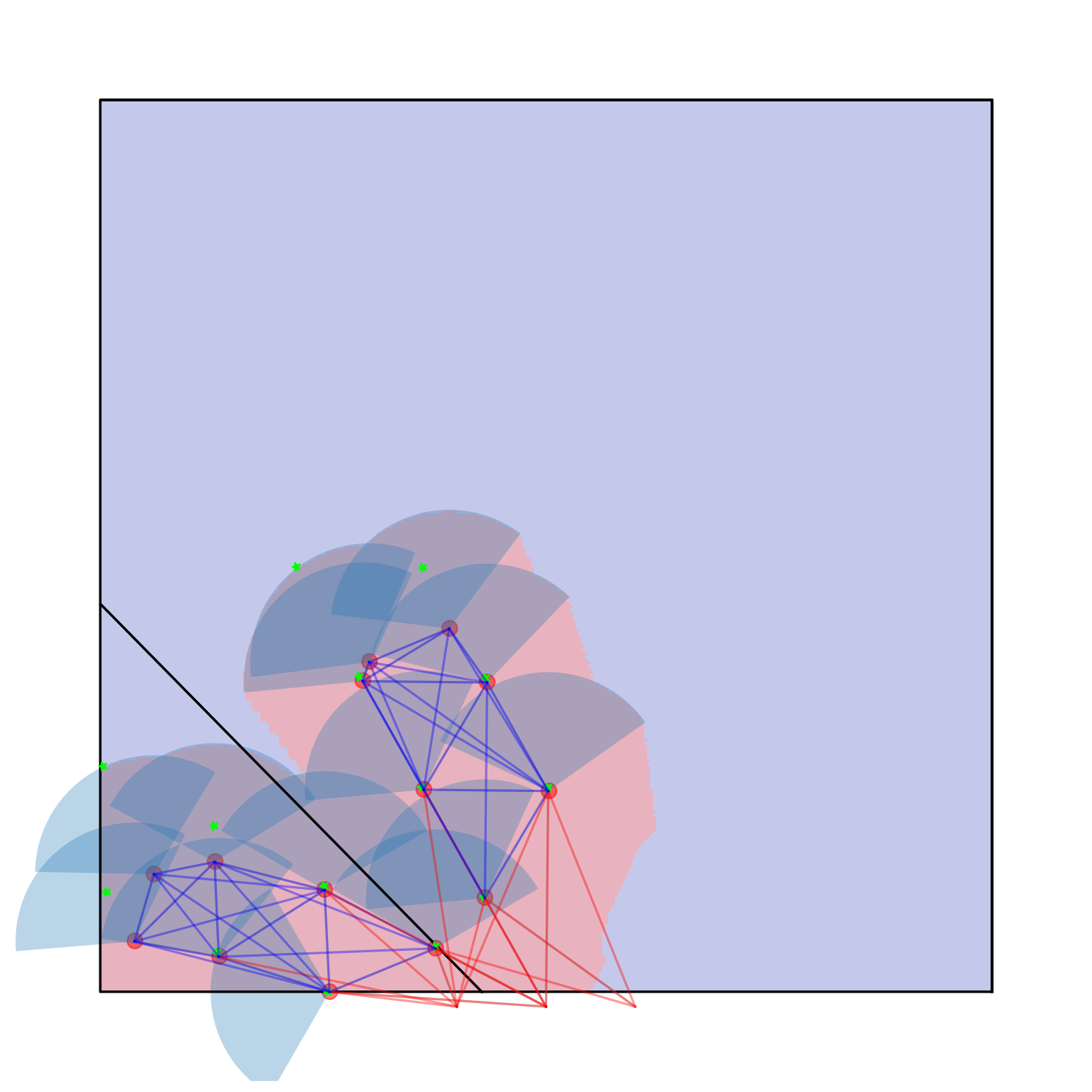}%
		\label{fig:sim_t50}}
	\hfil
	\subfloat[$t=100$s]{\includegraphics[width=.28\linewidth]{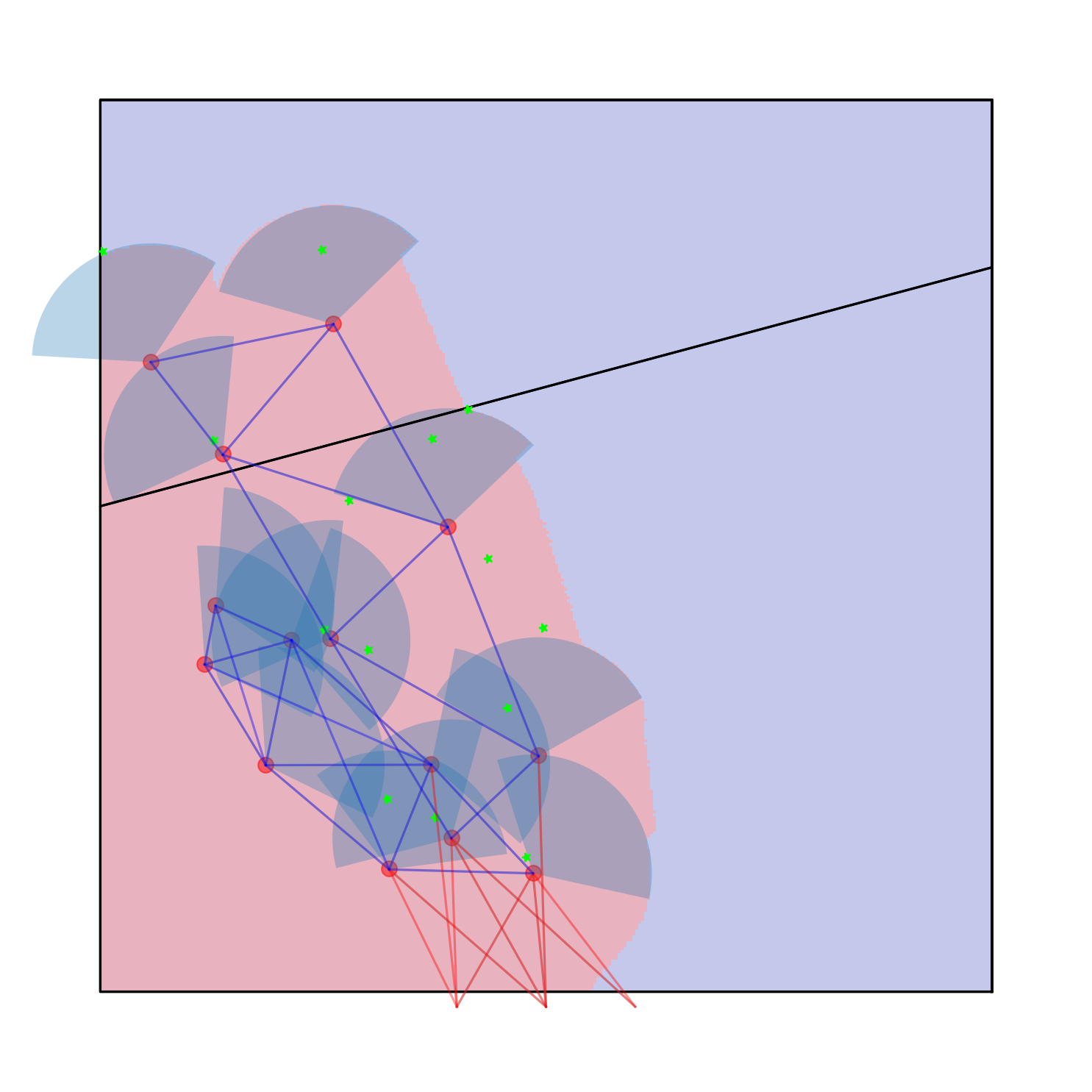}%
		\label{fig:sim_t100}}
	\\
	\subfloat[$t=200$s]{\includegraphics[width=.28\linewidth]{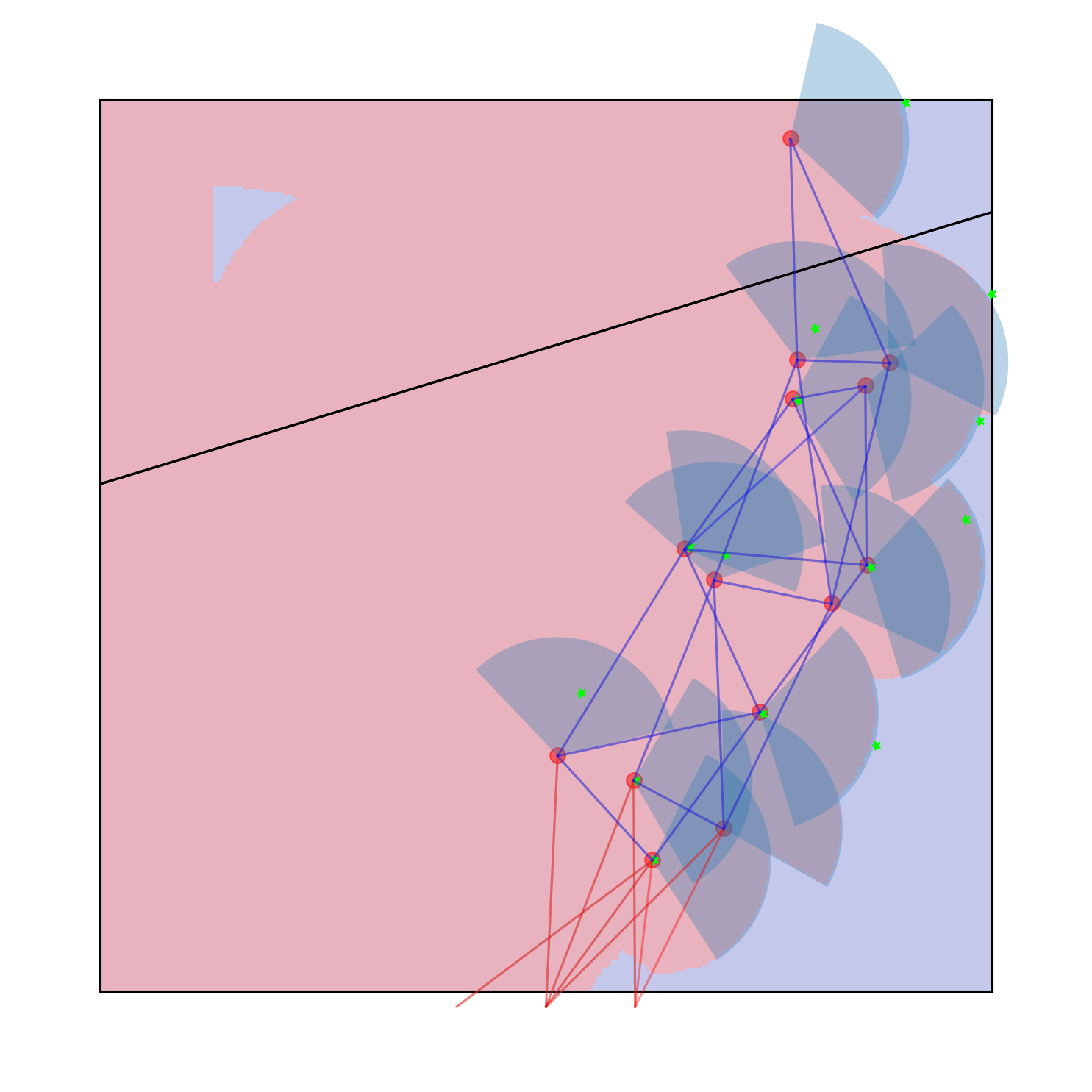}%
		\label{fig:sim_t200}}
	\hfil
	\subfloat[$t=300$s]{\includegraphics[width=.28\linewidth]{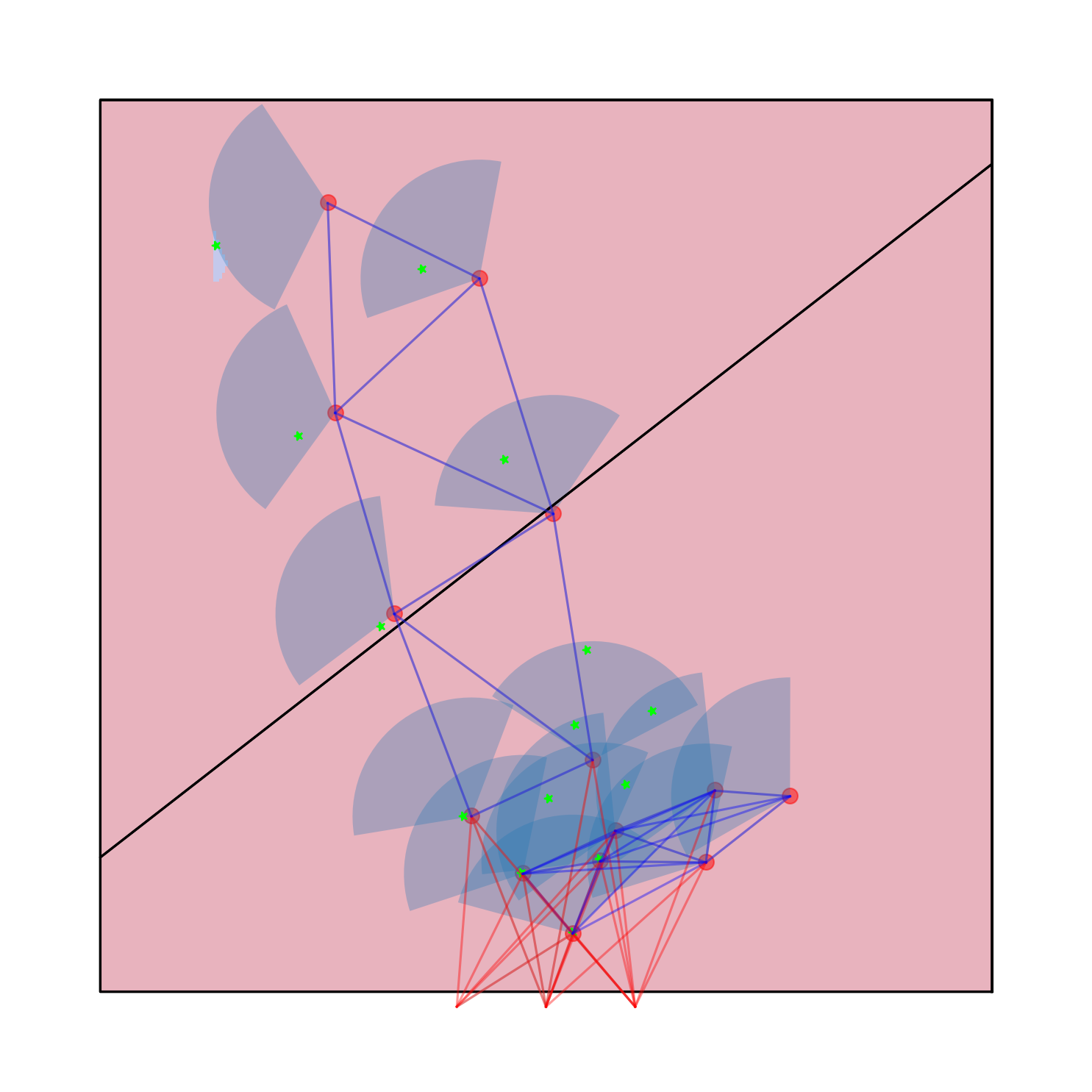}%
		\label{fig:sim_t300}}
	\hfil
	\subfloat[$t=302.5$s]{\includegraphics[width=.28\linewidth]{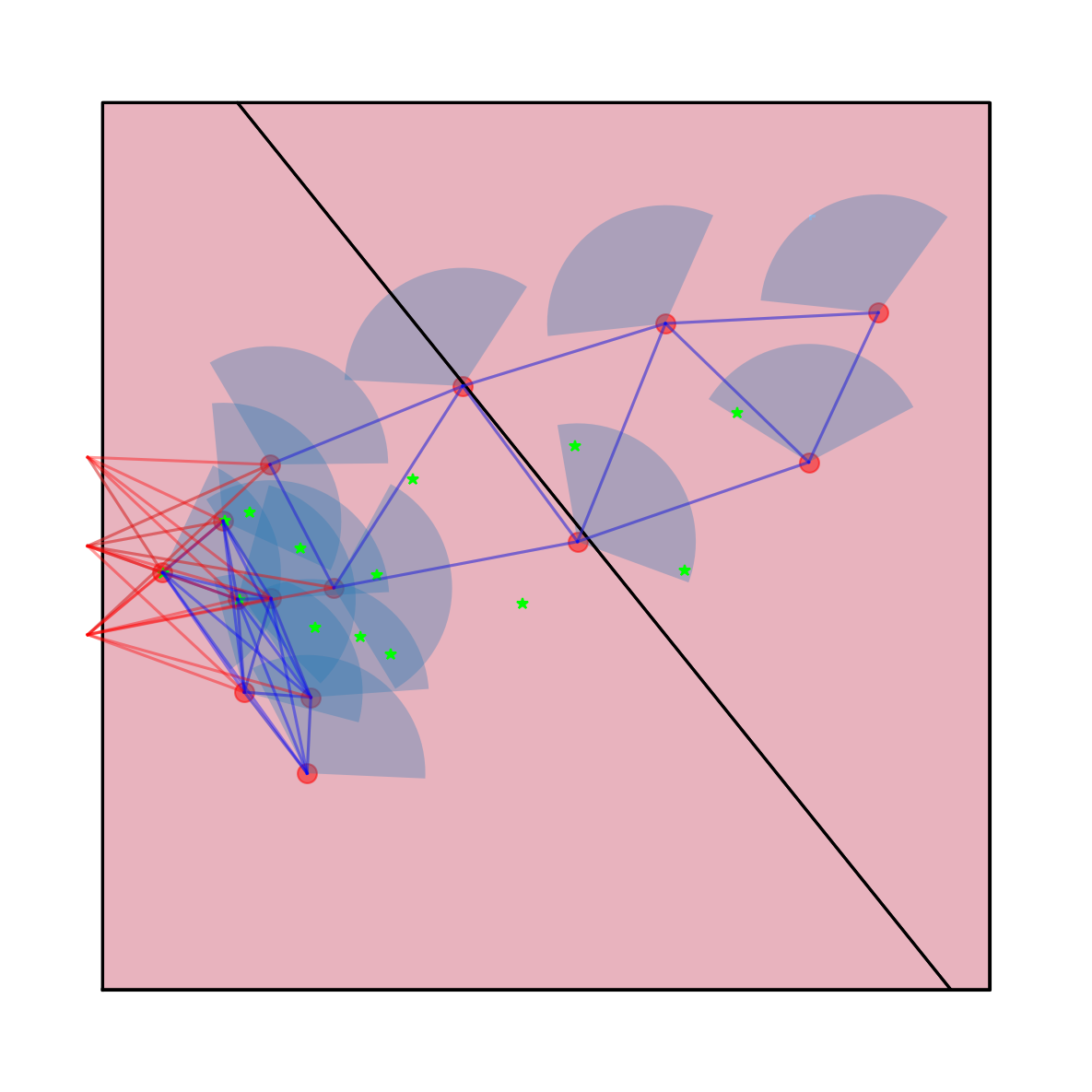}%
		\label{fig:sim_t302}}
	\caption{Swarm search progression with uncertainty radii, Voronoi boundaries,
		CVT targets, and dependency chains.}
	\label{fig:sim1}
\end{figure}

\begin{figure}[htbp]
	\centering
	\includegraphics[width=0.85\linewidth]{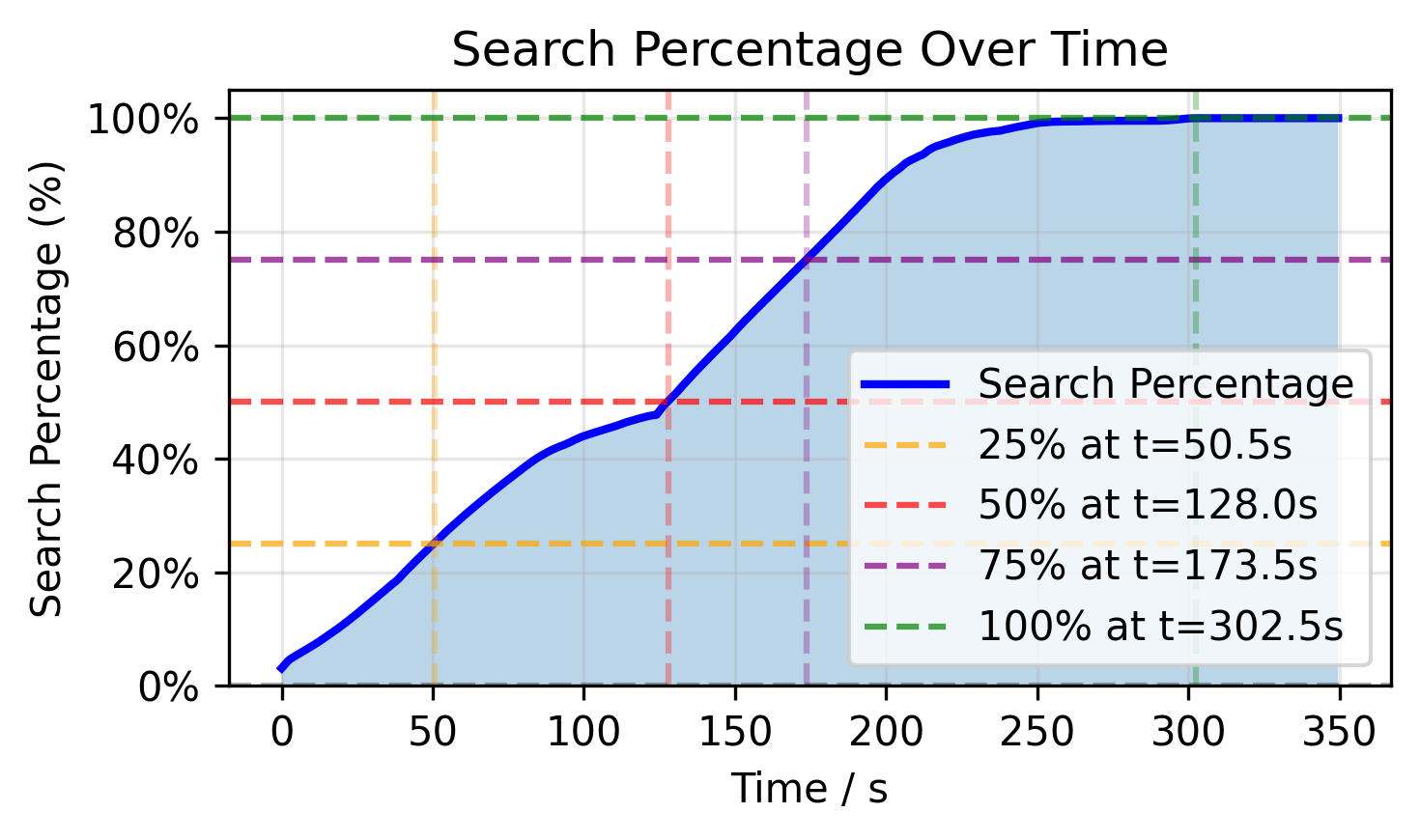}
	\caption{Search coverage progress with the estimator in the control loop.}
	\label{fig:coverage_progress}
\end{figure}

The localization error of the estimator-in-the-loop run is summarized in Fig.~\ref{fig:ei_error}:
containment $|p-\hat p|\le\epsilon$ holds in every frame,
with RMSE $1.42$\,m and median/p95/max errors of $0.9/2.9/5.9$\,m.
Coast frames (no accepted measurement) propagate the prediction with only process-noise growth,
never falling back to a deployment reset.
An offline recomputation (Fig.~\ref{fig:fim_vs_ekf}) finds the FIM approximation
conservative in $67.6\%$ of UAV-frames of the representative run
and optimistic in the rest;
$53.4\%$ of the Monte Carlo containment-failure frames are FIM-optimistic,
yet the FIM approximation maintains safe true-distance margins even in optimistic frames.
\begin{figure}[htbp]
	\centering
	\includegraphics[width=0.62\columnwidth]{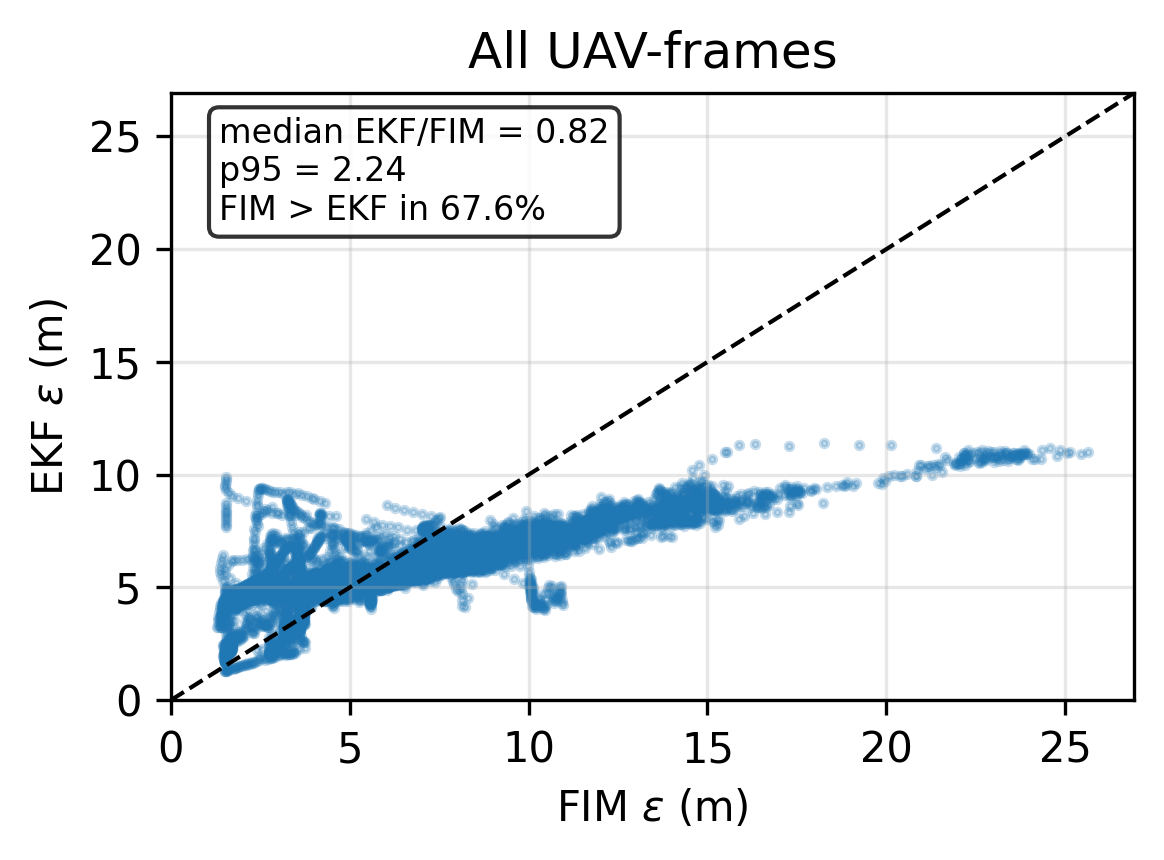}
	\caption{FIM-approximated versus EKF-reported uncertainty radius on the representative run
		(median ratio $0.82$, $95$th percentile $2.25$).}
	\label{fig:fim_vs_ekf}
\end{figure}

\begin{figure}[htbp]
	\centering
	\includegraphics[width=0.85\linewidth]{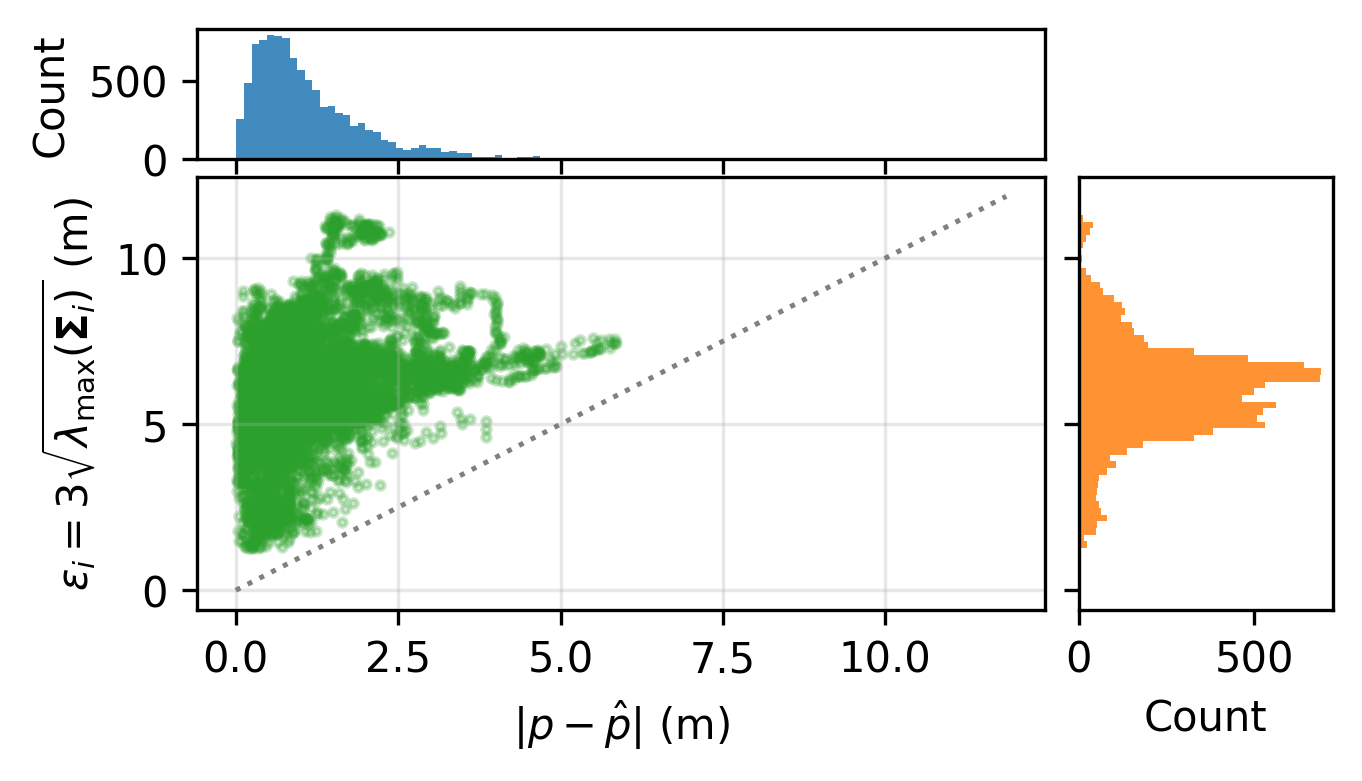}
	\caption{Localization error of the estimator-in-the-loop run:
		scatter of $|p-\hat p|$ versus the uncertainty radius $\epsilon$
		with marginal distributions.}
	\label{fig:ei_error}
\end{figure}

The uncertainty radius
$\epsilon_i=3\sqrt{\lambda_{\max}(\boldsymbol{\Sigma}_i)}$
of the EKF grows along the localization chain,
with UAV 7 maintaining the largest radius in Squad 1
(Fig.~\ref{fig:ekf_epsilon}).

\begin{figure}[htbp]
	\centering
	\includegraphics[width=0.85\linewidth]{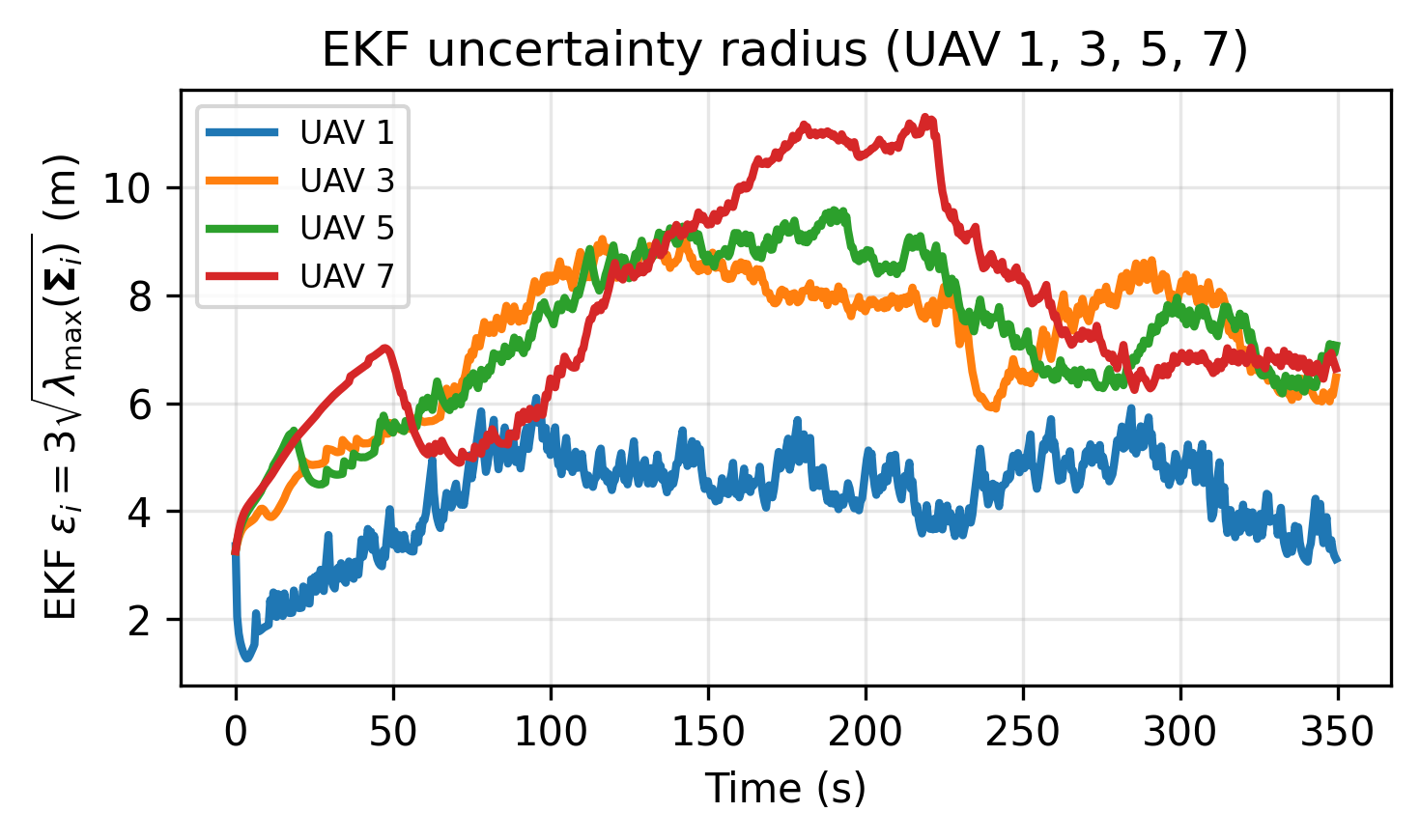}
	\caption{Uncertainty radius
		$\epsilon_i=3\sqrt{\lambda_{\max}(\boldsymbol{\Sigma}_i)}$
		of the EKF for UAVs 1, 3, 5, and 7 (Squad 1).}
	\label{fig:ekf_epsilon}
\end{figure}

The localization-connectivity and safety constraints of the run are shown in Figs.~\ref{fig:h_loc}
and~\ref{fig:h_safe};
the run completes without triggering any constraint-violation check.

\begin{figure}[htbp]
	\centering
	\includegraphics[width=0.85\linewidth]{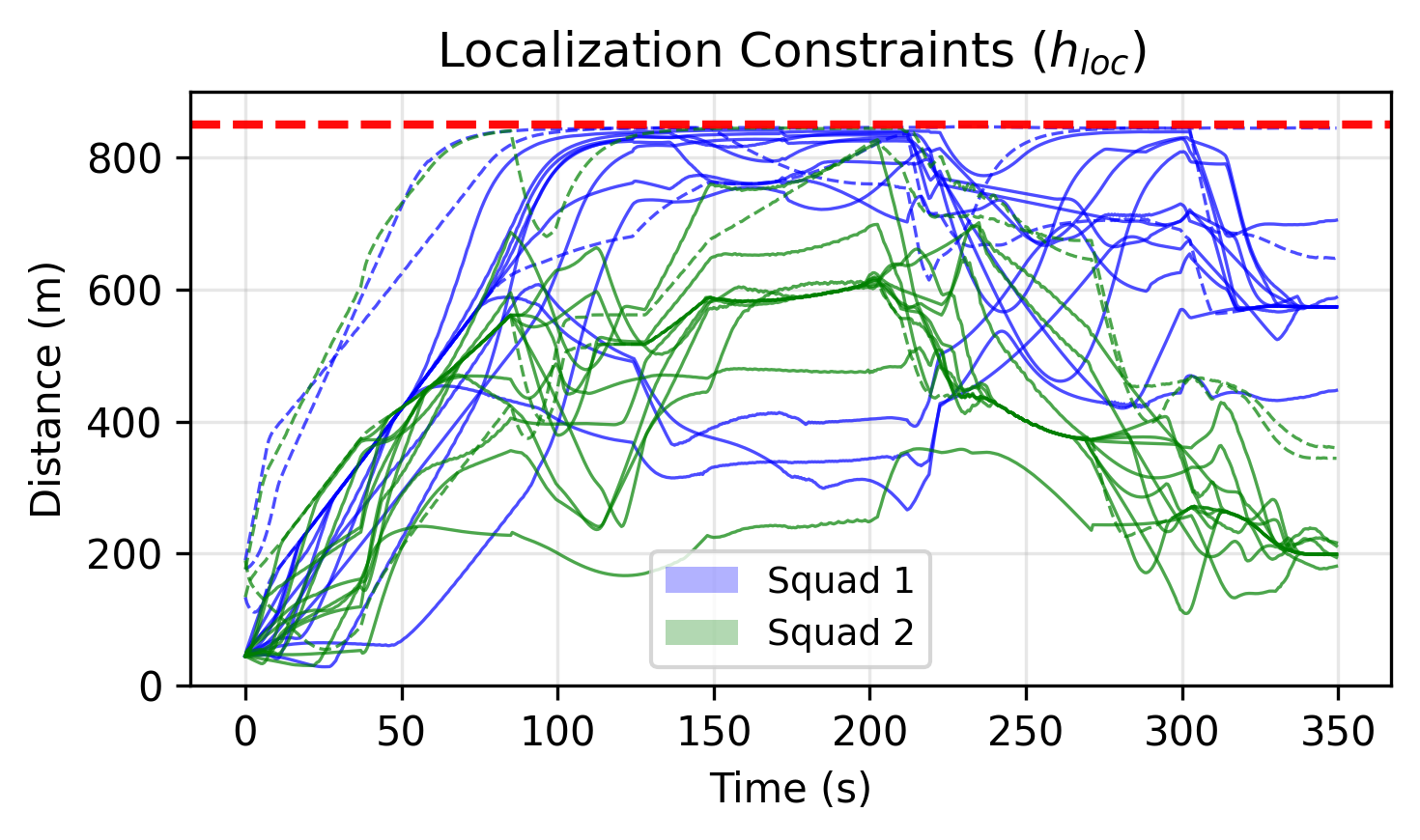}
	\caption{True distances of the fixed localization links versus $d_{\text{loc}}$
		(dashed: bUAV links).}
	\label{fig:h_loc}
\end{figure}

\begin{figure}[htbp]
	\centering
	\includegraphics[width=0.85\linewidth]{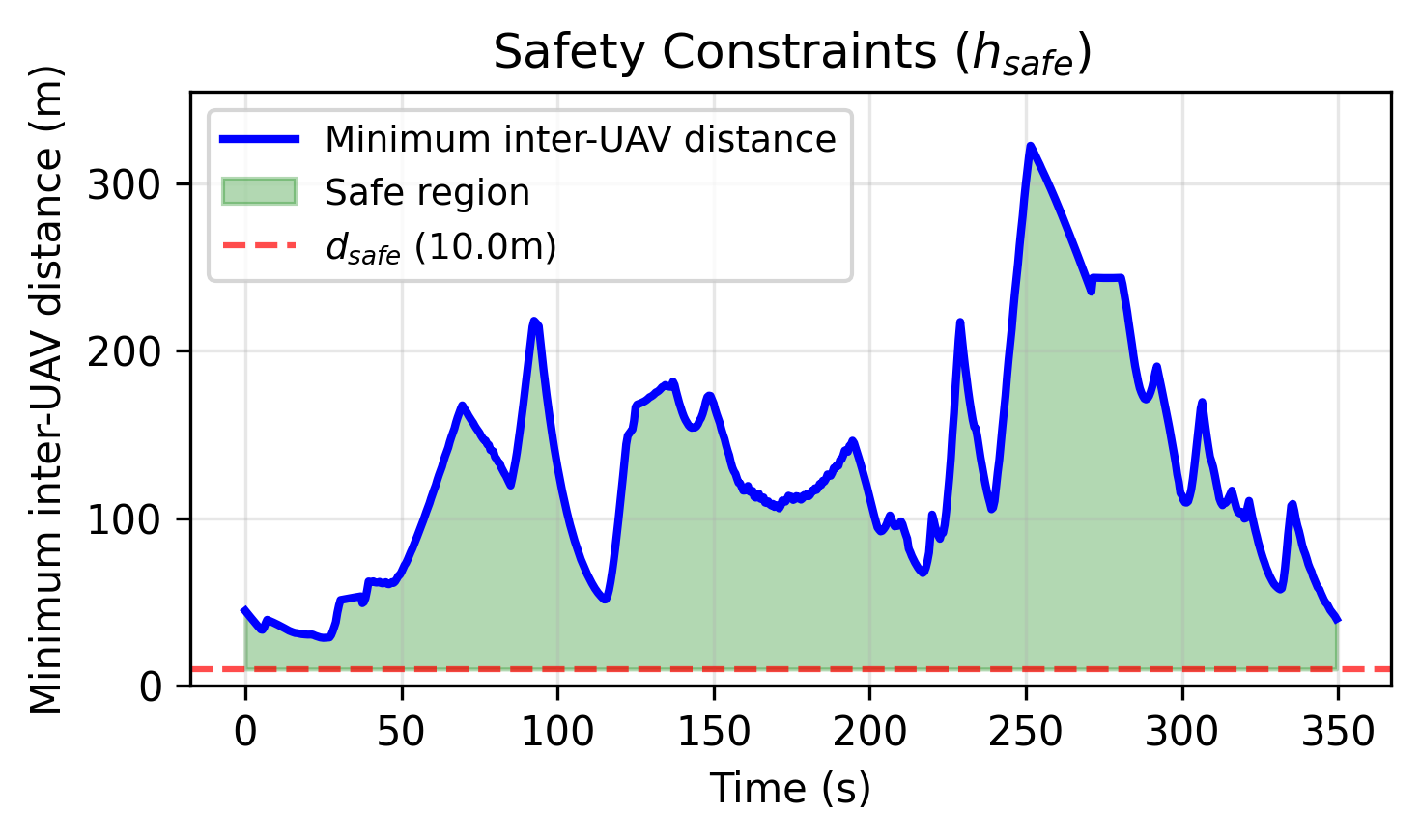}
	\caption{Minimum inter-UAV distance over time
		with the prescribed safe separation $d_{\text{safe}}$.}
	\label{fig:h_safe}
\end{figure}

We analyze key components through ablation (Fig.~\ref{fig:comparison_all}, Table~\ref{tab:all_comparison}),
declaring a violation when any true localization-link distance exceeds $d_{\text{loc}}$:
(1) \emph{Pure CVT}~\cite{cortes_coverage_2004}, which removes all safety and localization constraints, fails at $t=24$\,s (20.0\% coverage) because Voronoi centroids carry sUAVs beyond the bUAVs' localization range;
(2) \emph{Conventional CBF}, which applies standard CBFs without the $l\epsilon$ margin (equivalent to GdMS~\cite{yang2024aerial}), violates the localization range at $t=118$\,s (47.3\% coverage), as links ride at the range limit during tight maneuvers---supporting the uncertainty-aware formulation under the tested conditions;
(3) \emph{CVT+Formation}, which replaces the hierarchical strategy with all-UAV CVT, achieves only 42.4\% coverage.
The full uGdMS variant integrates all components and achieves 100\% coverage.

\begin{figure}[htbp]
	\centering
	\includegraphics[width=0.85\linewidth]{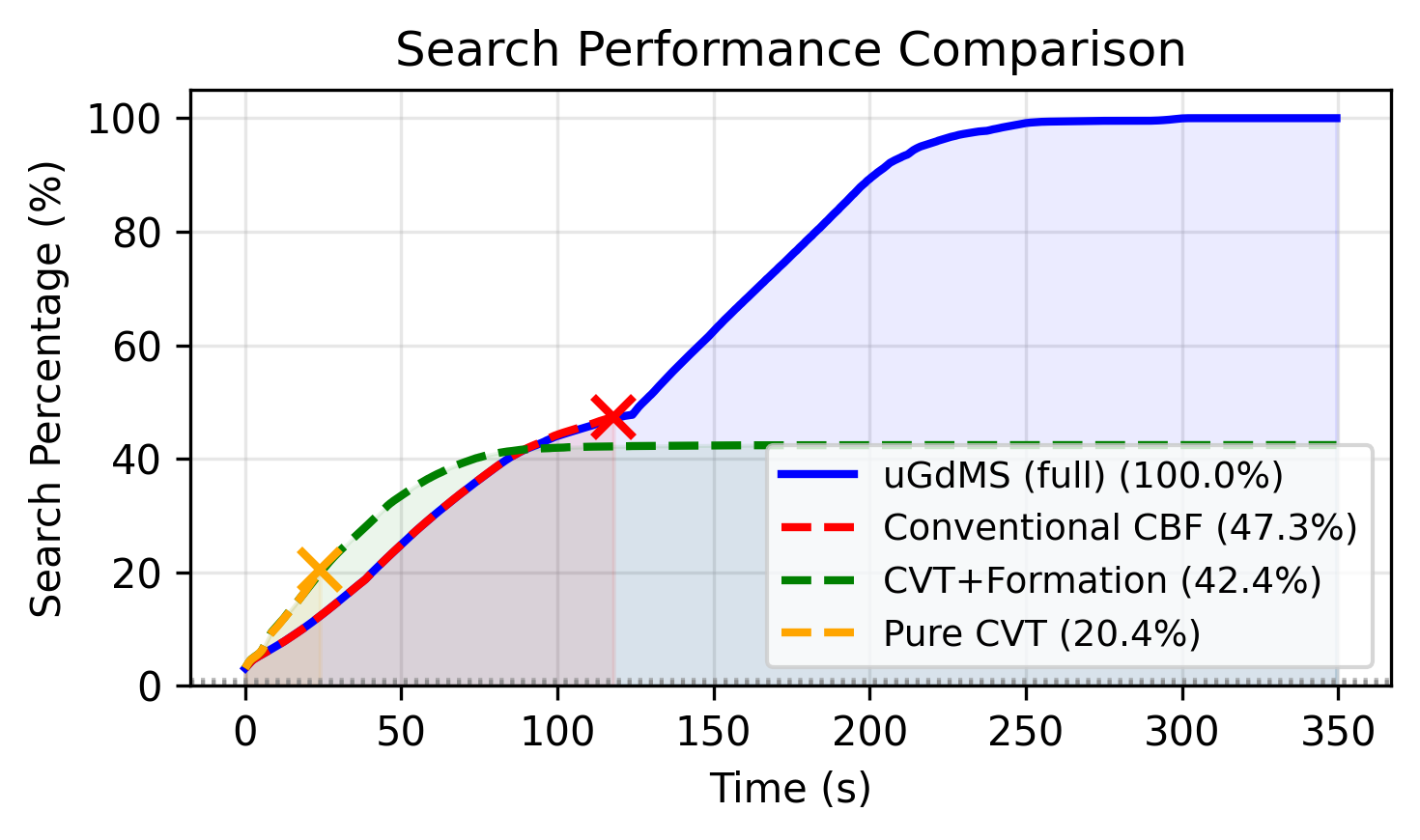}
	\caption{Search performance of the estimator-in-the-loop variants.}
	\label{fig:comparison_all}
\end{figure}

\begin{table}[ht]
\centering
\caption{Component analysis: performance of different algorithm variants.}
\label{tab:all_comparison}
\begin{tabular}{l|c|c|l}
\hline
\textbf{Variant} & \textbf{Time (s)} & \textbf{Coverage (\%)} & \textbf{Status} \\
\hline
Pure CVT & $24$ & $20.0$ & Loc. violation \\
Conventional CBF & $118$ & $47.3$ & Loc. violation \\
CVT+Formation & $350$ & $42.4$ & Partial \\
uGdMS & $302.5$ & $100$ & Success \\
\hline
\end{tabular}
\end{table}

\subsection{Statistical Validation and Scalability}
\label{subsec:statistical}

To characterize completion repeatability with the estimator in the control loop,
we conduct 20 Monte Carlo runs in which the nominal triangular-ladder deployment
is perturbed by independent per-axis uniform noise of $\pm5$\,m.
Each run uses the same EKF configuration,
directed pairwise safety rows,
backward-difference rate compensation,
and input limits as the representative run of Section~\ref{subsec:performance}.
The representative run is retained for the detailed figures,
while the Monte Carlo study reports statistics over the jittered deployments.

All 20 runs reach $100\%$ coverage (Table~\ref{tab:monte_carlo}),
with completion times of $301.0$--$330.0$\,s
and all $9{,}800$ local QP commands per run feasible.
True-distance constraints are checked at every frame:
across all runs the maximum true distance of an assigned localization link is $846.8$\,m
(at least $3.2$\,m below the $d_{\text{loc}}=850$\,m target)
and the minimum inter-UAV distance is $12.25$\,m
(at least $2.25$\,m above $d_{\text{safe}}=10$\,m),
so no true-distance safety or localization violation occurs.
Localization-error containment $\|p-\hat p\|\le\epsilon$
holds in $98.97$--$100\%$ of UAV-frames per run
(median $100\%$; the $98.97\%$ minimum is the worst single run,
while the pooled breach count is the $103$ frames below);
the pooled error statistics are reported in Table~\ref{tab:monte_carlo}.
The $103$ UAV-frames (of $196{,}000$; $700$ control steps ($350$\,s at 2\,Hz) $\times$ 14 UAVs $\times$ 20 runs)
in which containment fails breach the radius
by at most $27.7\%$ (median $16.3\%$);
at these frames the true localization links retain at least $165.9$\,m of margin below $d_{\text{loc}}$
and the inter-UAV distances retain at least $116.1$\,m above $d_{\text{safe}}$,
so the envelope breaches do not translate into constraint violations.
These margins are consistent with the graceful-degradation bound
(Proposition~\ref{prop:graceful_degradation}).
The target $d_{\text{loc}}=850$\,m also lies below the physical ranging limit,
where ranging degrades gradually, providing additional headroom.
The containment failures concentrate in the squad-2 leaders (UAVs 13 and 14)
of a single run (101 of the 103 failures) during the final mission phase
($t\approx288$--$319$\,s) with fresh updates,
consistent with the largest accumulated radii at the chain end,
where a fresh update during fast motion can momentarily outpace the filter covariance;
the remaining two failures are isolated single frames in two other runs.
The EKF is conservative on average (ANEES $0.32$, Table~\ref{tab:monte_carlo}).
Input limits bind in $5.8\%$ of UAV-frames, acting as a feasibility envelope;
the maximum commanded planar speed is $50.9$\,m/s (a control-command bound for the planar abstraction).
The $100\%$ QP feasibility reflects, in part, the permissiveness of the single-integrator model;
higher-order dynamics may require high-order CBFs or an inner tracking loop.
The FIM-vs-EKF ratio is stable across runs
(pooled median $0.84$, $95$th percentile $2.21$ vs $0.82$/$2.25$ on the representative run,
Fig.~\ref{fig:fim_vs_ekf}),
with the FIM radius smaller in $33.8\%$ of UAV-frames pooled across runs.

\begin{table}[ht]
	\centering
	\caption{Monte Carlo estimator-in-the-loop results (20 runs)}
	\label{tab:monte_carlo}
	\begin{tabular}{l|l}
		\hline
		\textbf{Metric} & \textbf{Value} \\
		\hline
		Completion time (s) & $305.7\pm7.9$ ($301.0$--$330.0$) \\
		Coefficient of variation (\%) & $2.6$ \\
		Completion rate (\%) & $100\%$ (20/20 runs) \\
		QP feasibility & $100\%$ ($196{,}000/196{,}000$; 27 re-solves) \\
		Containment per run (\%) & $98.97$--$100\%$ (median $100\%$) \\
		Localization error (m) & $0.82/2.81/9.57$ (median/p95/max) \\
		ANEES & $0.32$ (per-run $0.23$--$0.45$) \\
		Input-limit saturation & $5.8\%$ ($11{,}453/196{,}000$ frames) \\
		\hline
	\end{tabular}
\end{table}

Beyond the Monte Carlo study,
several polygon regions and swarm sizes illustrate coverage behavior only
(Fig.~\ref{fig:capabilities}; legacy examples without estimator-in-the-loop statistics),
covering triangular, trapezoidal, and hexagonal regions with 6--30 sUAVs and $M=3$ bUAVs
(cases (a)--(e): two-squad $M=s+1$; case (f): three squads cyclically sharing three bUAVs);
larger beacon counts were not simulated.
Figure~\ref{fig:polygon_comparison} shows that most configurations reach complete coverage,
while the 6-sUAV case (dashed) cannot reach the farthest region within communication range.

\begin{figure*}[htbp]
	\centering
	\subfloat[]{\includegraphics[width=0.12\linewidth]{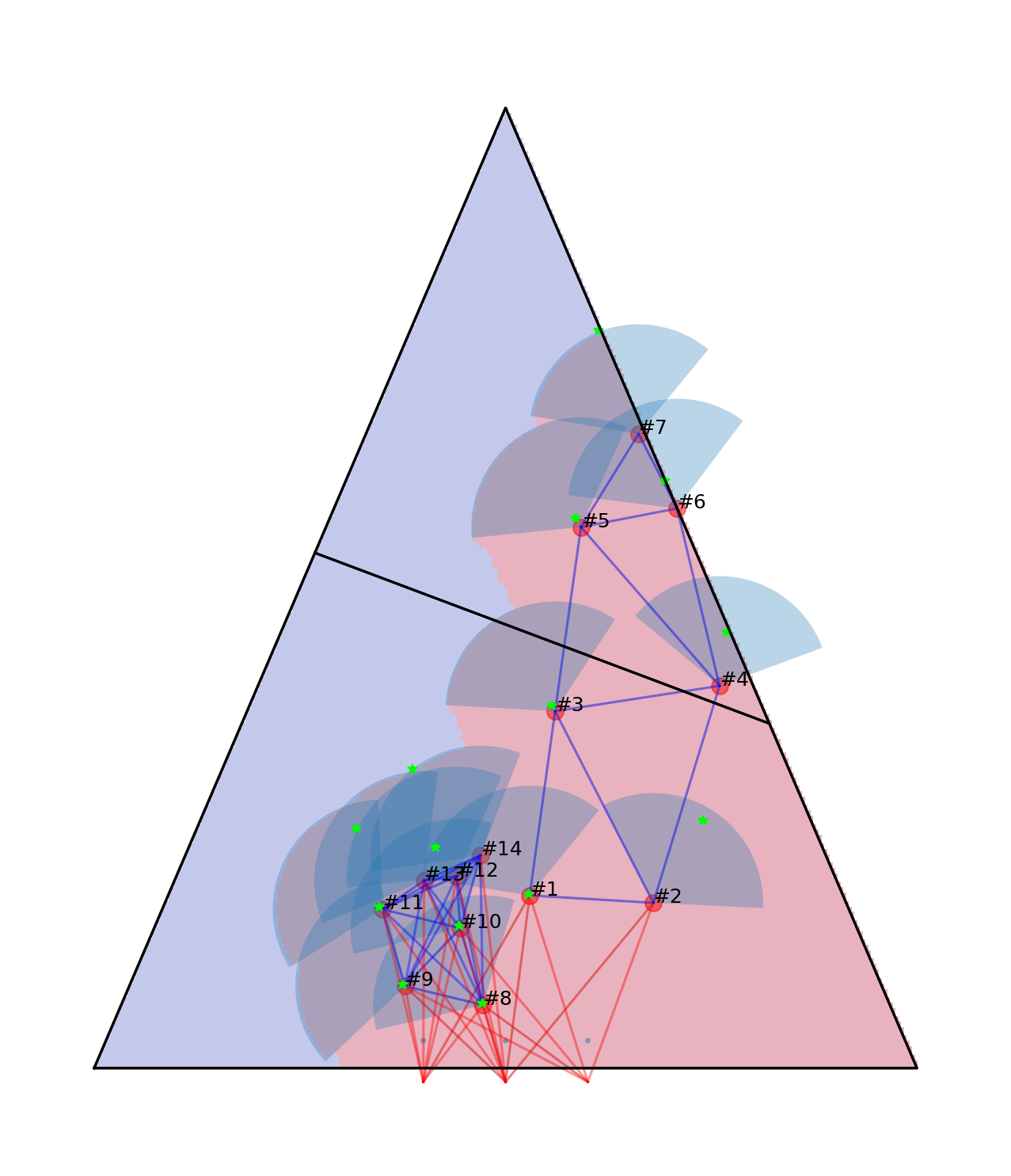}%
		\label{fig:region_triangle}}
	\hfil
	\subfloat[]{\includegraphics[width=0.14\linewidth]{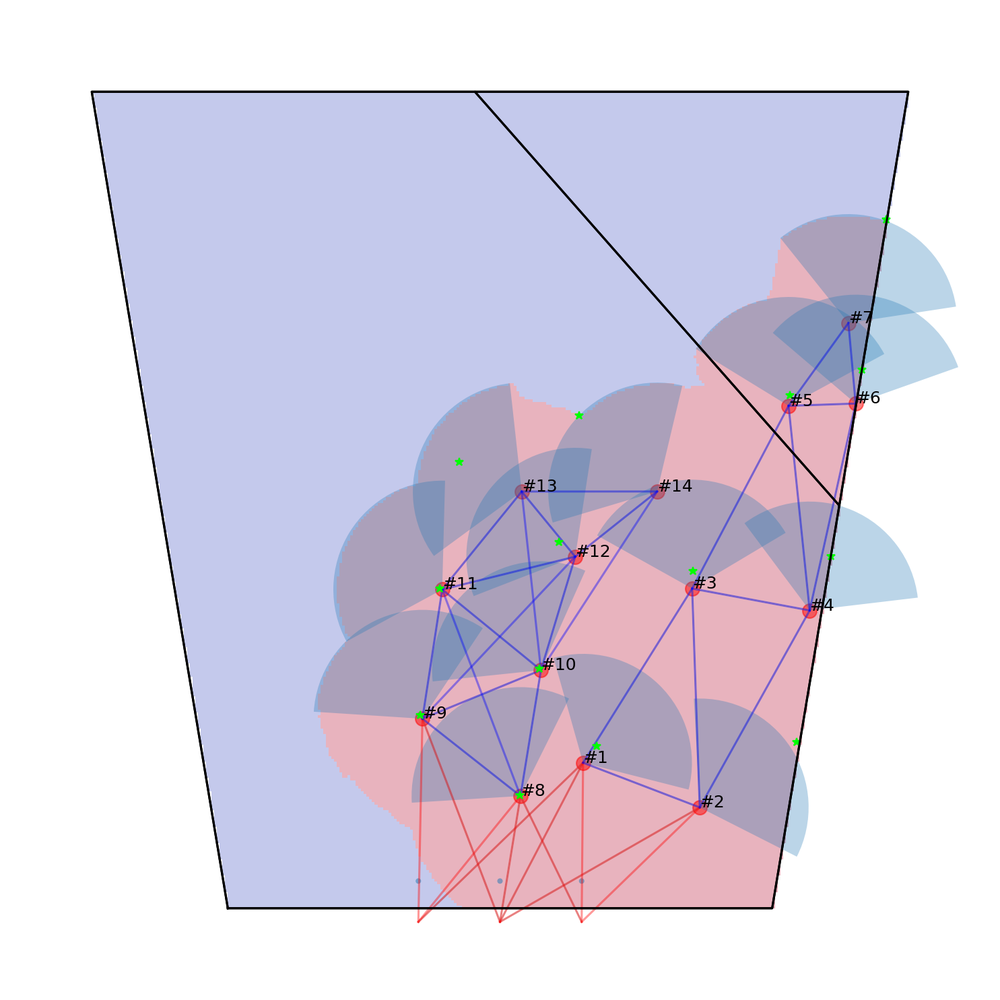}%
		\label{fig:region_trapezoid}}
	\hfil
	\subfloat[]{\includegraphics[width=0.14\linewidth]{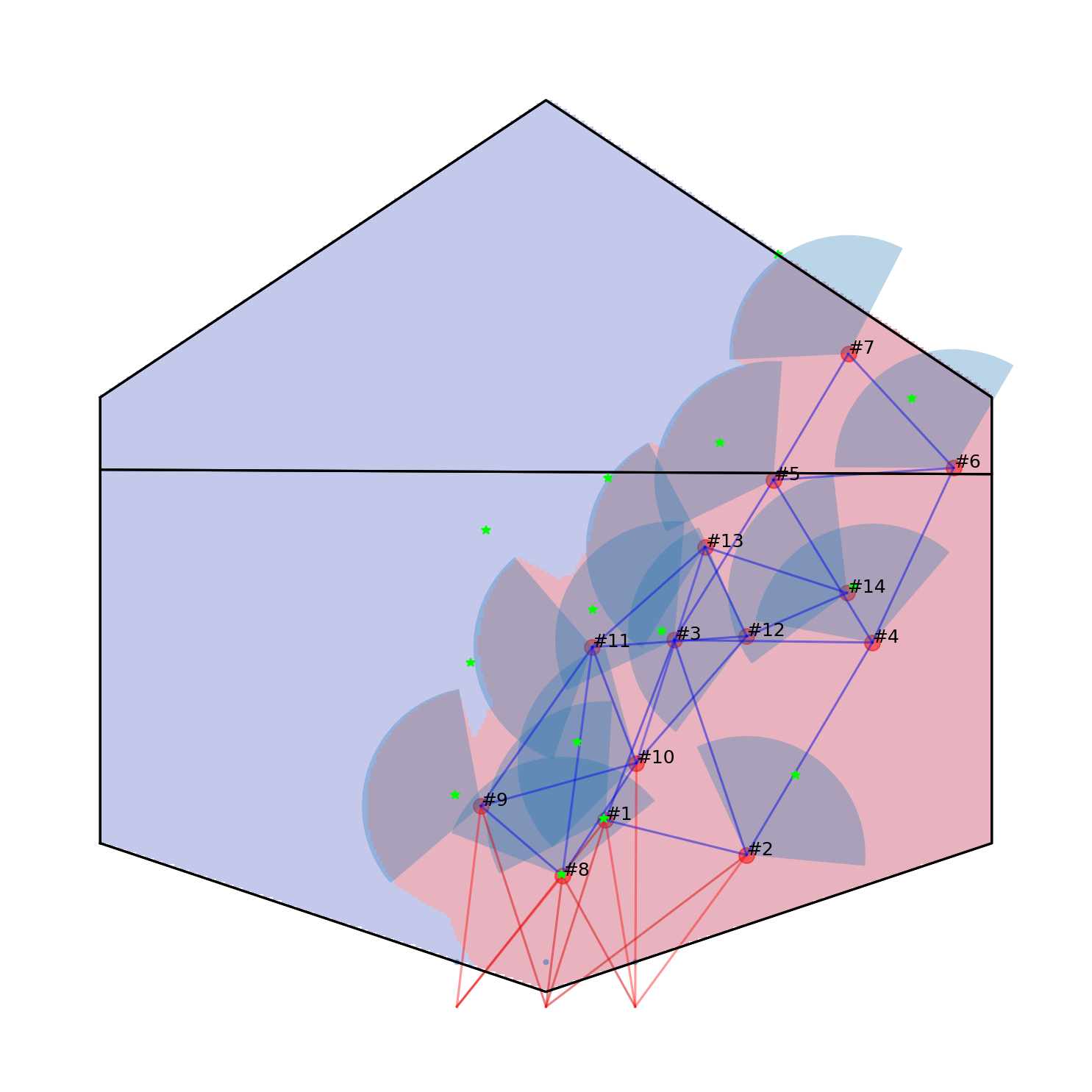}%
		\label{fig:region_hexagon}}
	\hfil
	\subfloat[]{\includegraphics[width=0.14\linewidth]{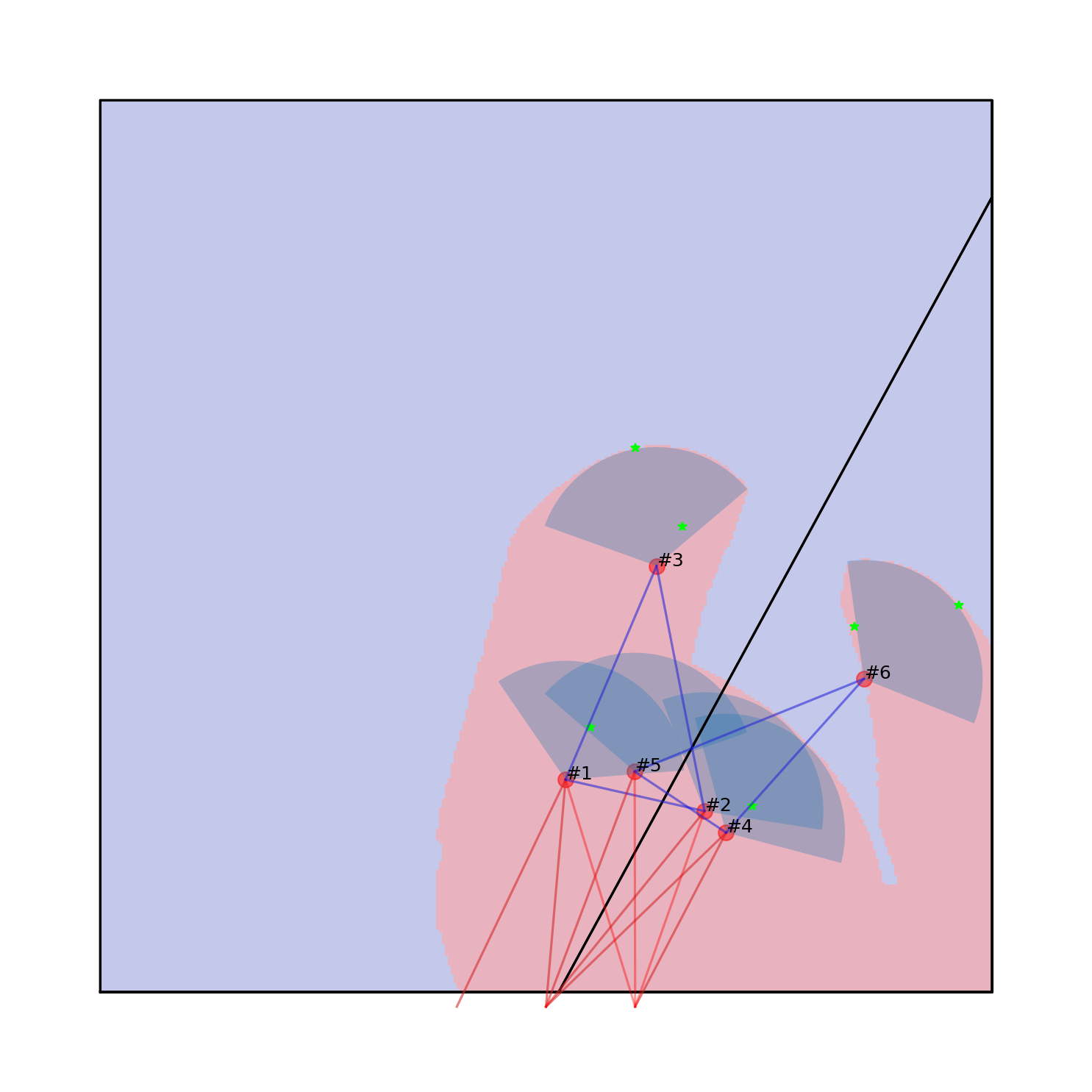}%
		\label{fig:scale_6}}
	\hfil
	\subfloat[]{\includegraphics[width=0.14\linewidth]{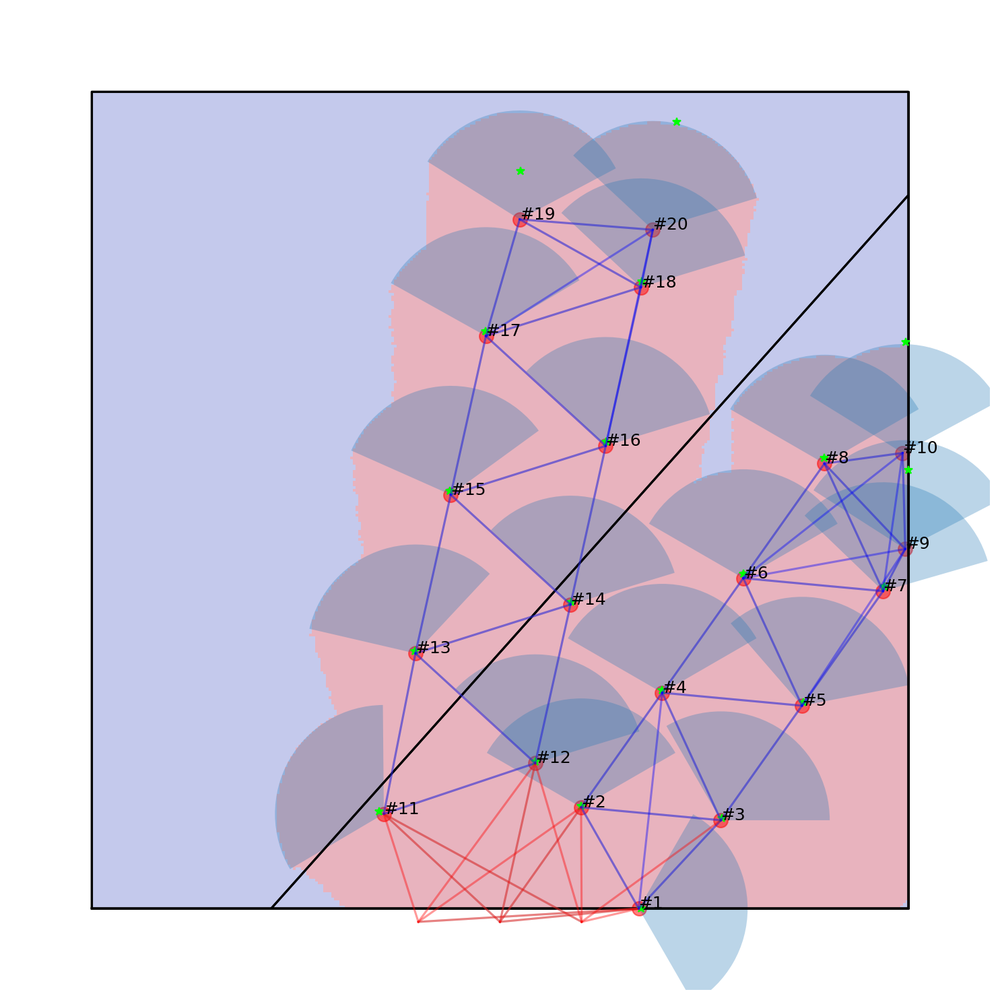}%
		\label{fig:scale_20}}
	\hfil
	\subfloat[]{\includegraphics[width=0.14\linewidth]{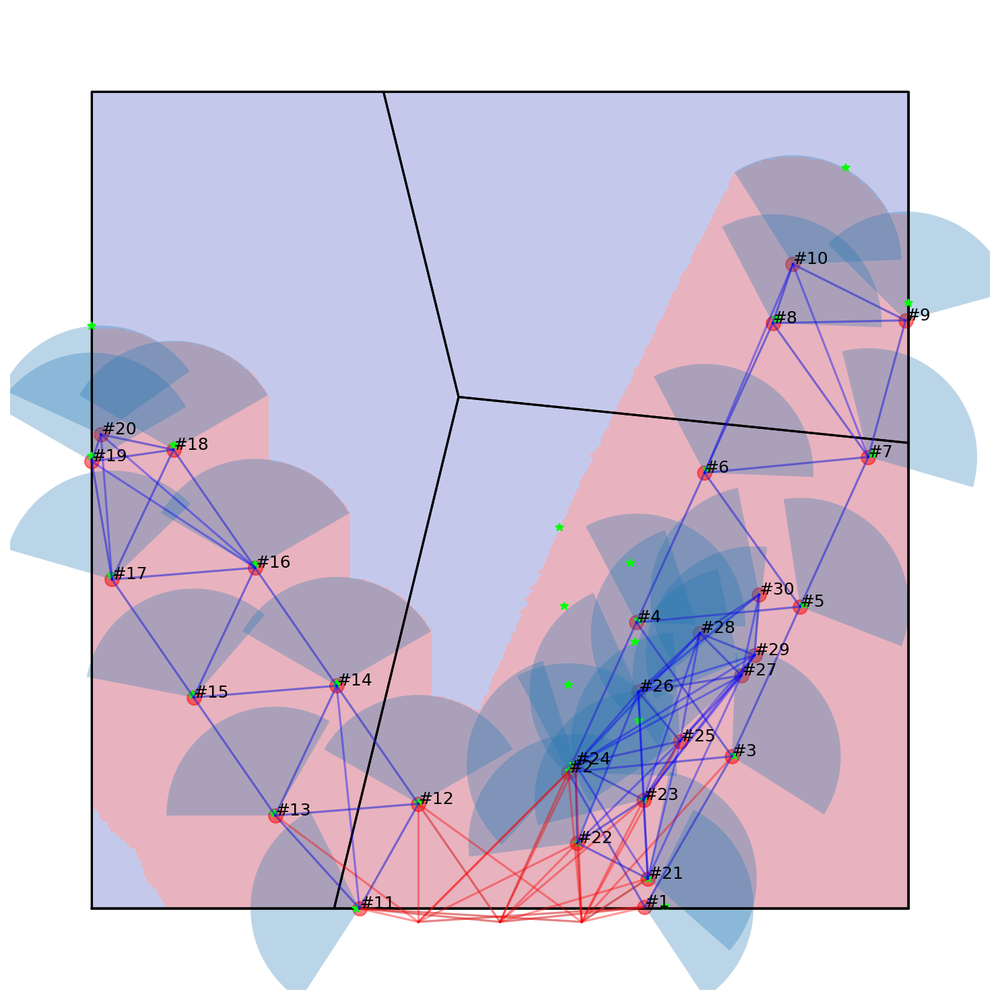}%
		\label{fig:scale_30}}
	\caption{Algorithm capabilities: triangular, trapezoidal, hexagonal, and rectangular regions
		with 6--30 sUAVs and $M=3$ bUAVs.}
	\label{fig:capabilities}
\end{figure*}

\begin{figure}[htbp]
	\centering
	\includegraphics[width=0.85\linewidth]{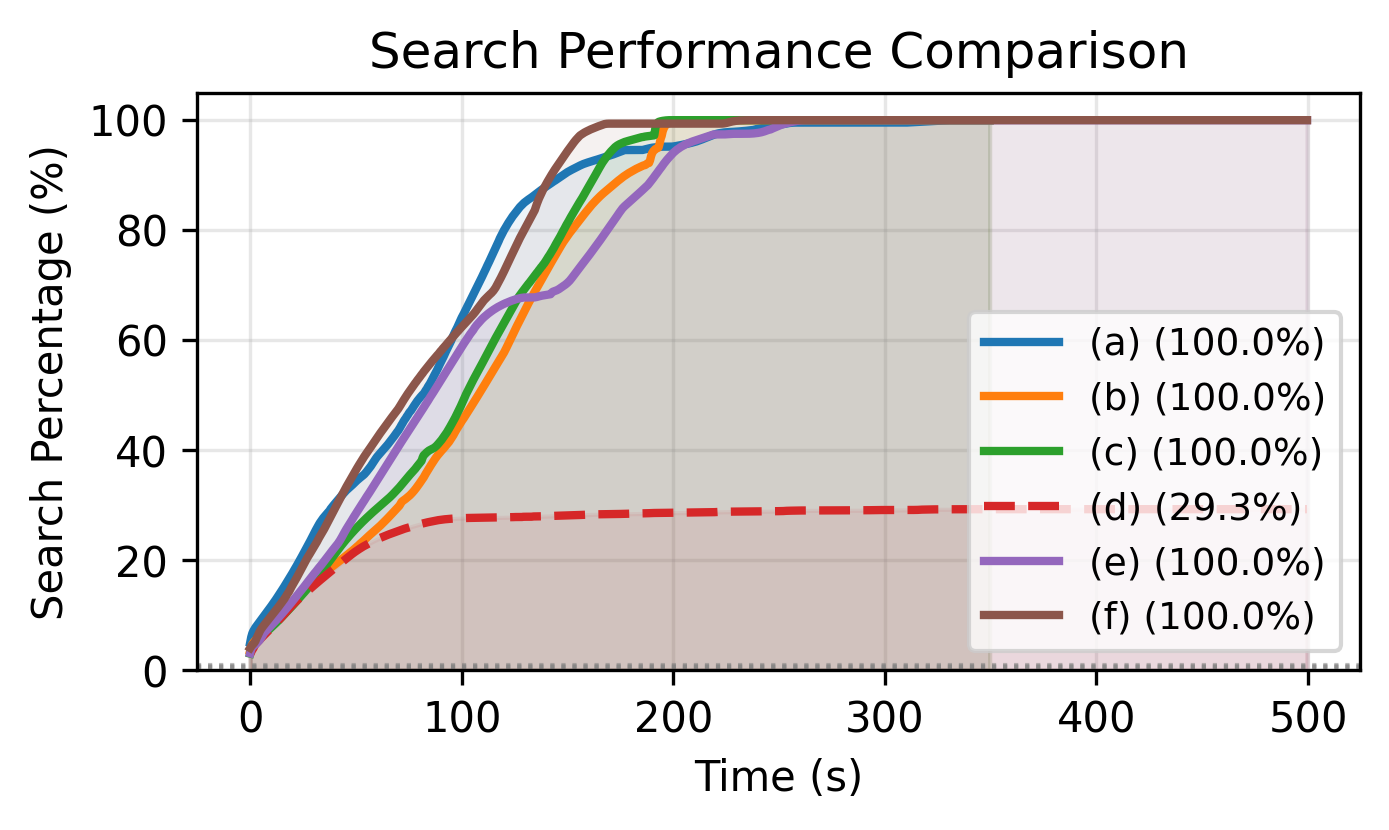}
\caption{Search performance across the cases of Fig.~\ref{fig:capabilities};
		the 6-sUAV case (dashed) cannot reach the farthest region.}
	\label{fig:polygon_comparison}
\end{figure}

\section{Gazebo Implementation Evidence}
\label{sec:validation}

We integrated the uGdMS framework into the MBZIRC ROS 2/Ignition Gazebo maritime simulation
\cite{mbzirc}\footnote{Video demonstration: \url{https://youtu.be/aMveHSiIO-0}},
where quadrotor vehicles receive the planar velocity commands of the proposed controller
and evolve under the simulator's flight dynamics.
The ROS 2 implementation runs the same onboard EKF as Section~\ref{sec:results}
with directed pairwise safety and localization rows as hard constraints,
solving each local QP with OSQP at 2Hz in a $3000\times3000$\,m area (14 sUAVs, 3 bUAVs).
To accommodate the second-order quadrotor dynamics, the distance-task gain is
set to $k_p=3.0$, leaving the hard safety and localization rows unchanged.
Exercising the controller beyond the continuous-time interface of
Proposition~\ref{prop:rate_aware_invariance},
the environment provides high-fidelity evidence under quadrotor flight dynamics.

Fig.~\ref{fig:mbzirc_simulation} shows the swarm executing the search task.
Search progress (Fig.~\ref{fig:mbzirc_search_percentage}) reaches complete coverage at $t=467$\,s,
reflecting the second-order dynamics, the reduced search gain,
and the platform velocity limit.
Neighbor distances (Fig.~\ref{fig:mbzirc_valid_links}) remain within the displayed bounds
($d_\text{loc}=850$\,m, $d_\text{safe}=10$\,m),
with the longest localization link reaching $849.1$\,m.
Across six independent runs, coverage completes in $466\pm1$\,s, the per-run
maximum link remains below $850$\,m (mean $848.2$\,m, worst case $849.1$\,m),
no safety violation occurs, and all local QPs are optimal
(without any re-solve, Table~\ref{tab:mbzirc_summary}).
The representative run (Fig.~\ref{fig:mbzirc_error}) achieves containment
$\|p_i-\hat{p}_i\|\le\epsilon_i$ in all $13{,}580$ UAV-frames,
with median/p95/max errors of $0.85/2.49/4.61$\,m.
These results provide estimator-in-the-loop coverage and constraint evidence
in the high-fidelity environment.

\begin{table}[!ht]
	\caption{Summary of six independent MBZIRC runs.\label{tab:mbzirc_summary}}
	\centering
	\footnotesize
	\setlength{\tabcolsep}{2.5pt}
	\begin{tabular}{lccc|lccc}
		\hline
		Run & Time (s) & Max (m) & QPs &
		Run & Time (s) & Max (m) & QPs \\
		\hline
		1 & 467.5 & 849.14 & 13,580 &
		4 & 465.0 & 848.82 & 13,566 \\
		2 & 465.5 & 847.68 & 13,580 &
		5 & 468.5 & 847.56 & 13,580 \\
		3 & 466.0 & 848.01 & 13,594 &
		6 & 465.5 & 848.03 & 13,594 \\
		\hline
		\multicolumn{8}{l}{Mean: 466.3 s, 848.21 m, 13,582 optimal QPs/run (81,494 total)} \\
		\hline
	\end{tabular}
\end{table}

\begin{figure*}[htbp]
	\centering
	\subfloat[Simulation at $t=0$s.]{\includegraphics[width=0.24\linewidth]{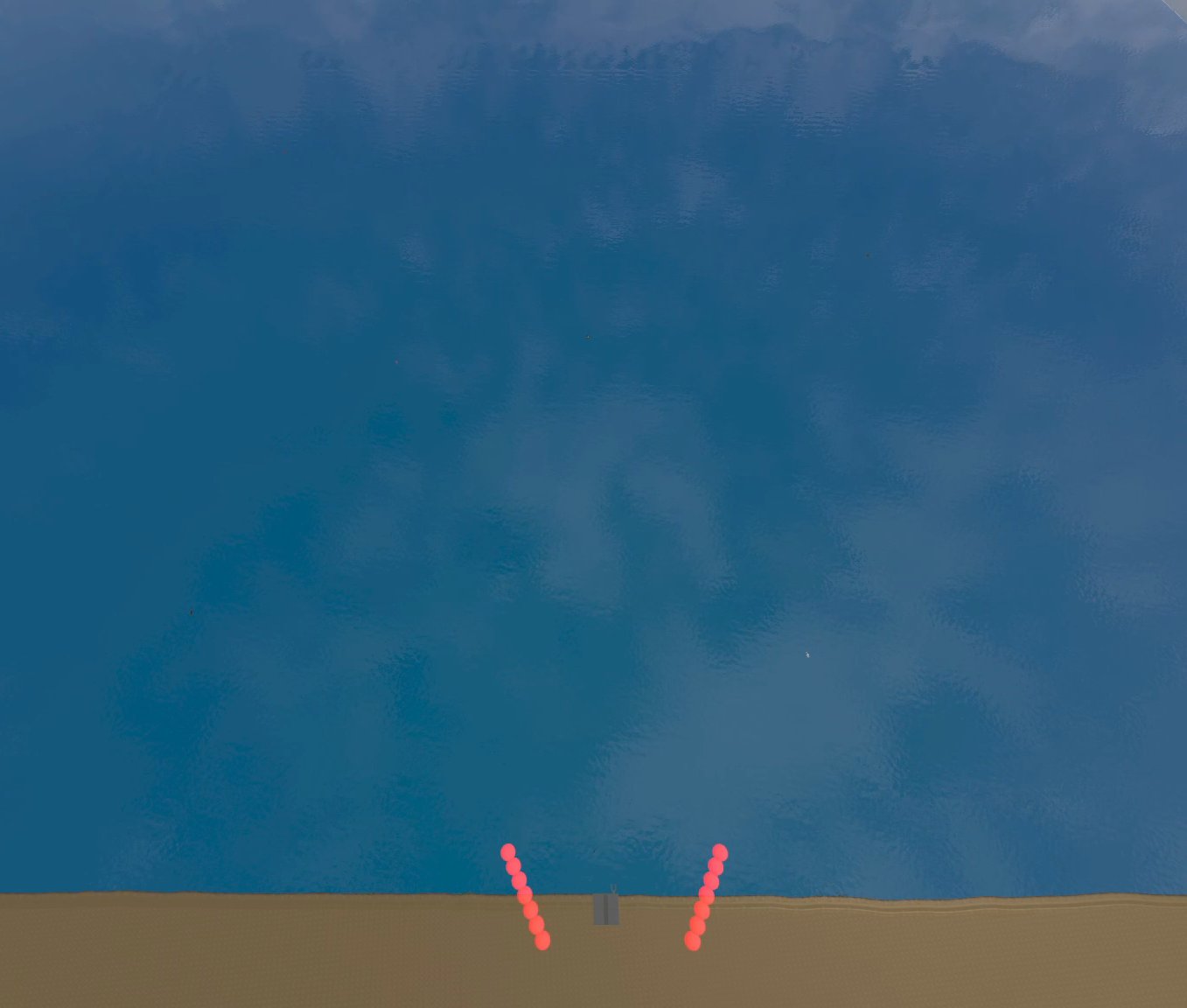}}
	\hfil
	\subfloat[Simulation at $t=100$s.]{\includegraphics[width=0.24\linewidth]{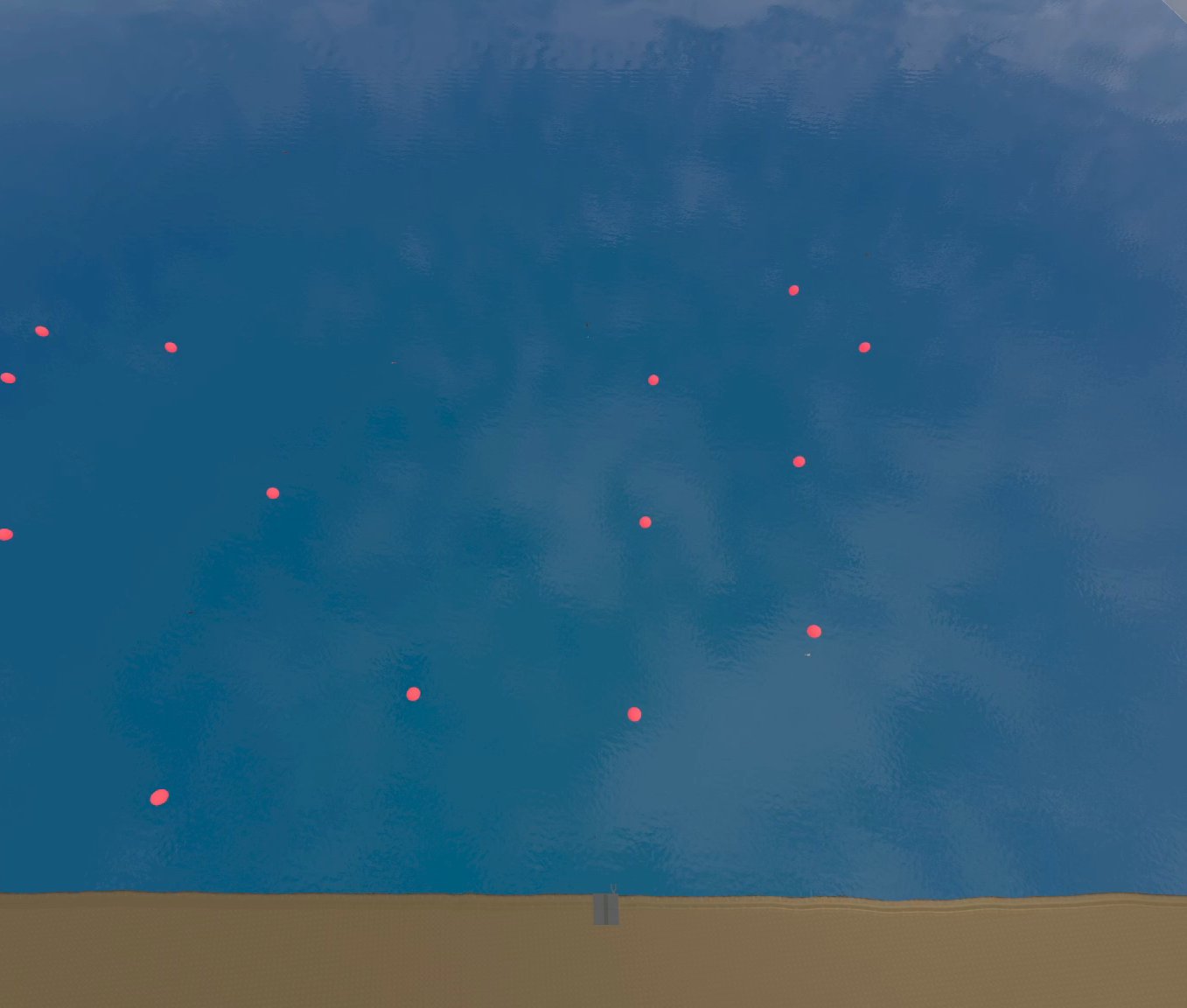}}
	\hfil
	\subfloat[Simulation at $t=200$s.]{\includegraphics[width=0.24\linewidth]{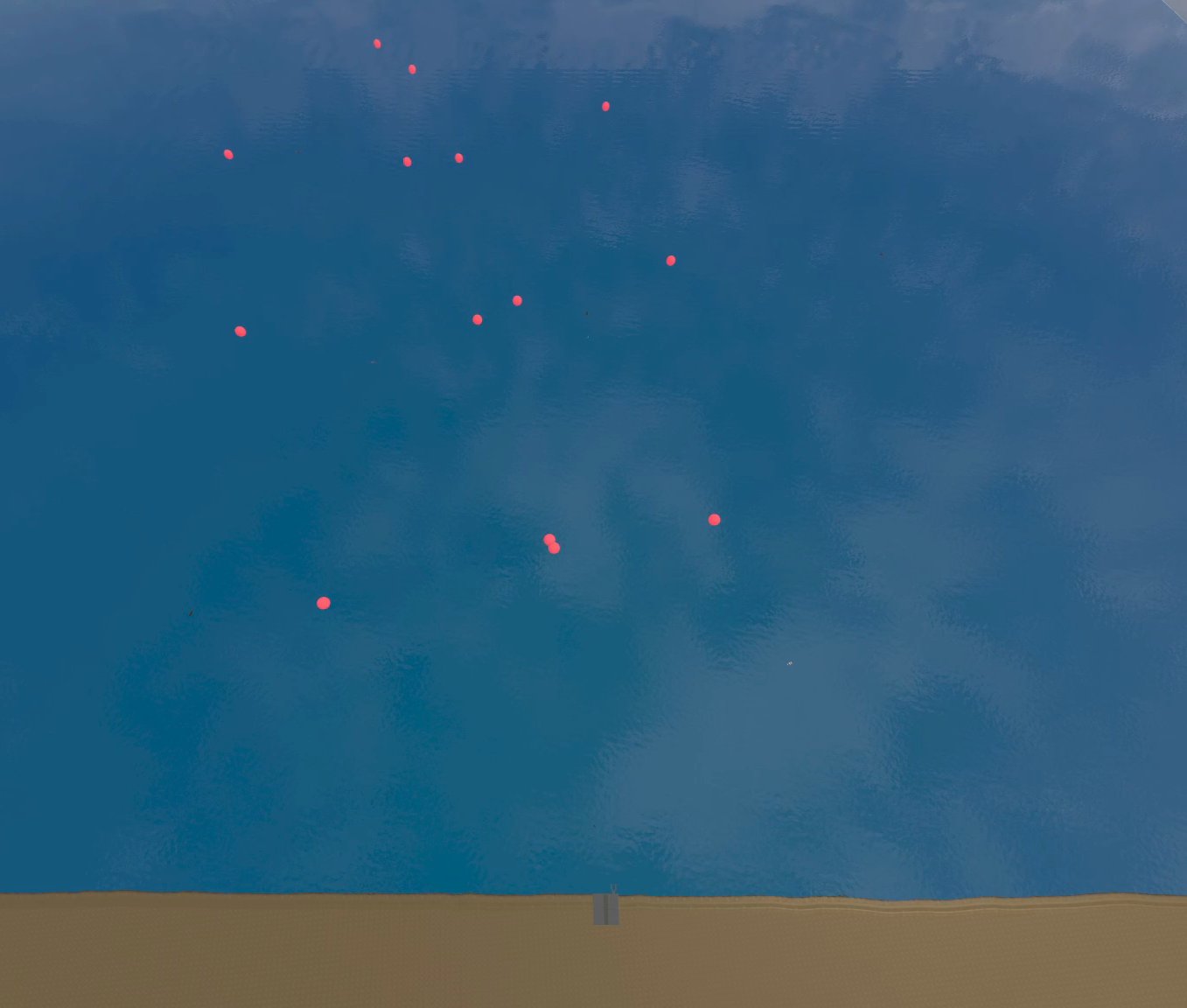}}
	\hfil
	\subfloat[Simulation at $t=400$s (after coverage completion).]{\includegraphics[width=0.24\linewidth]{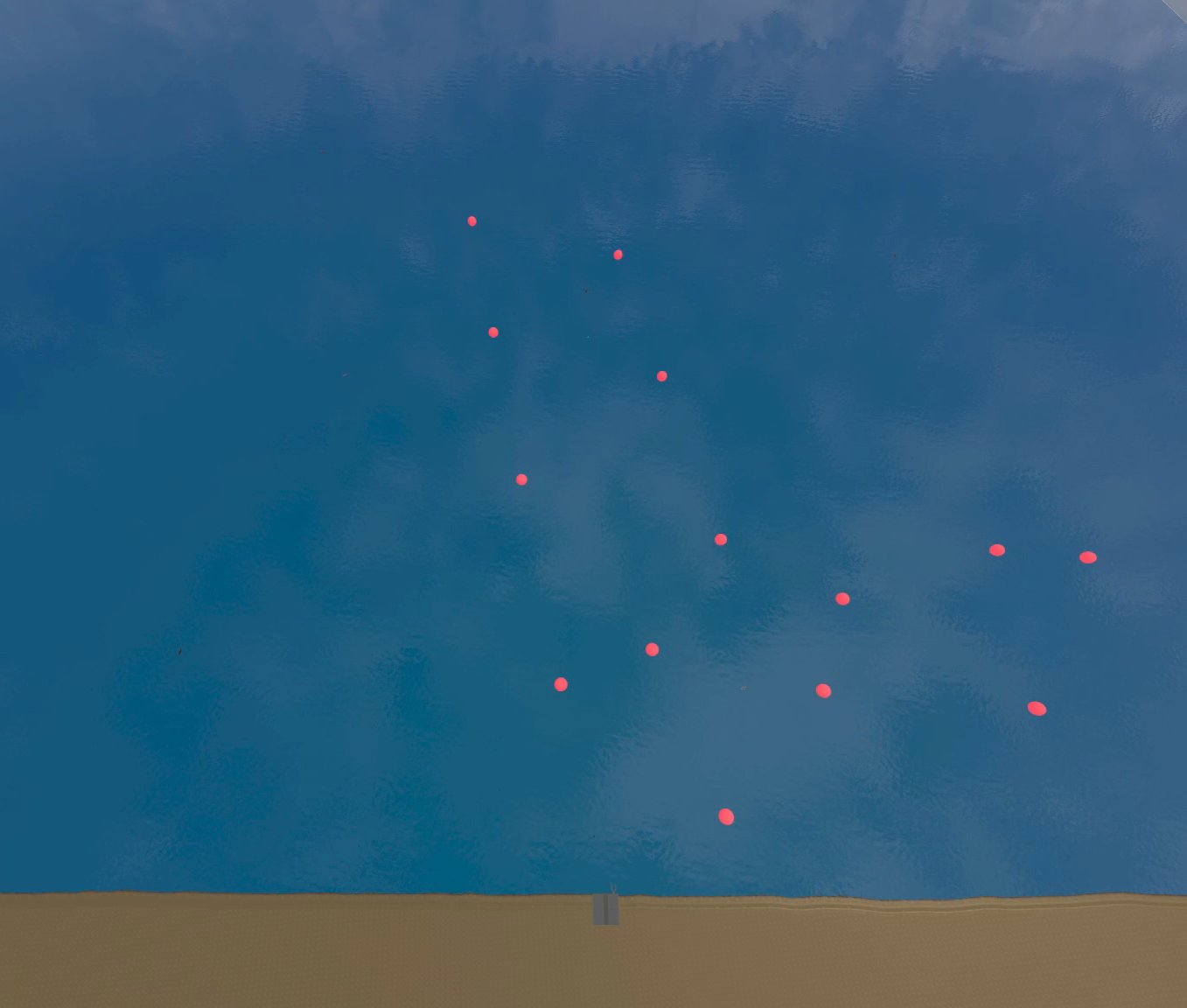}}
	\\
	\subfloat[Map at $t=0$s.]{\includegraphics[width=0.24\linewidth]{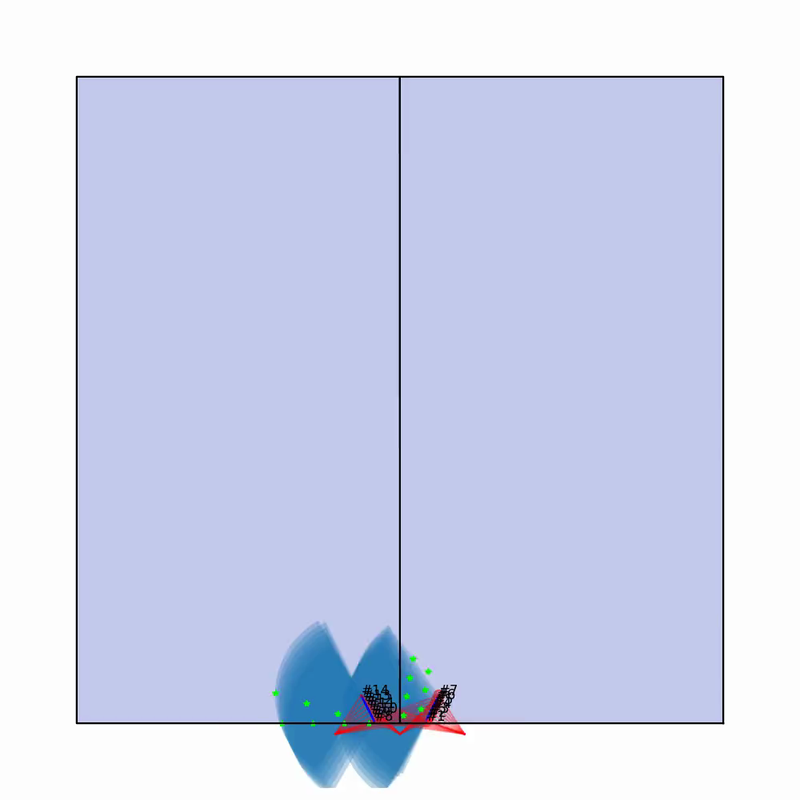}}
	\hfil
	\subfloat[Map at $t=100$s.]{\includegraphics[width=0.24\linewidth]{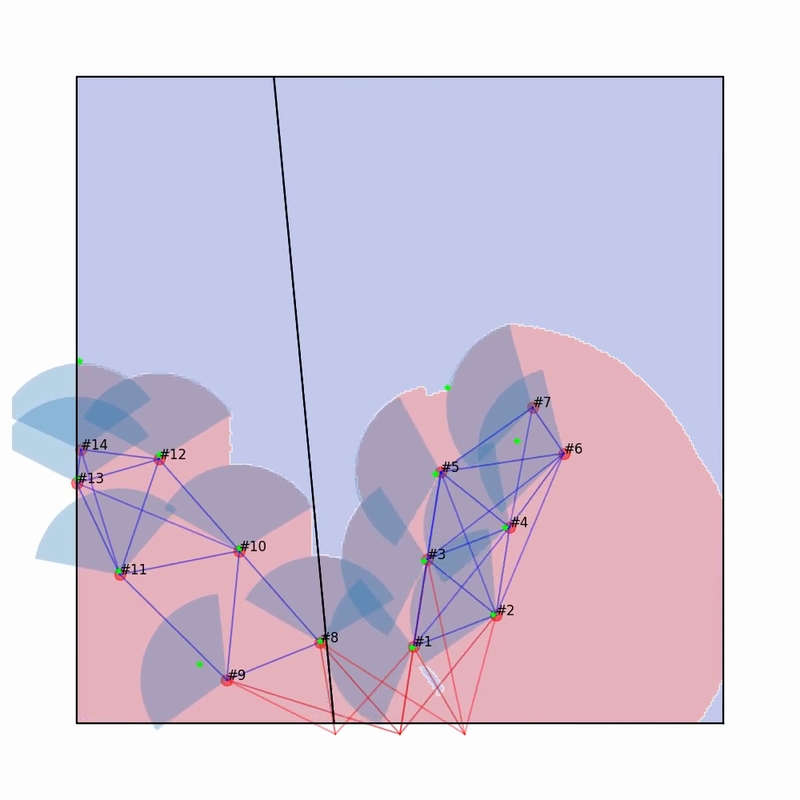}}
	\hfil
	\subfloat[Map at $t=200$s.]{\includegraphics[width=0.24\linewidth]{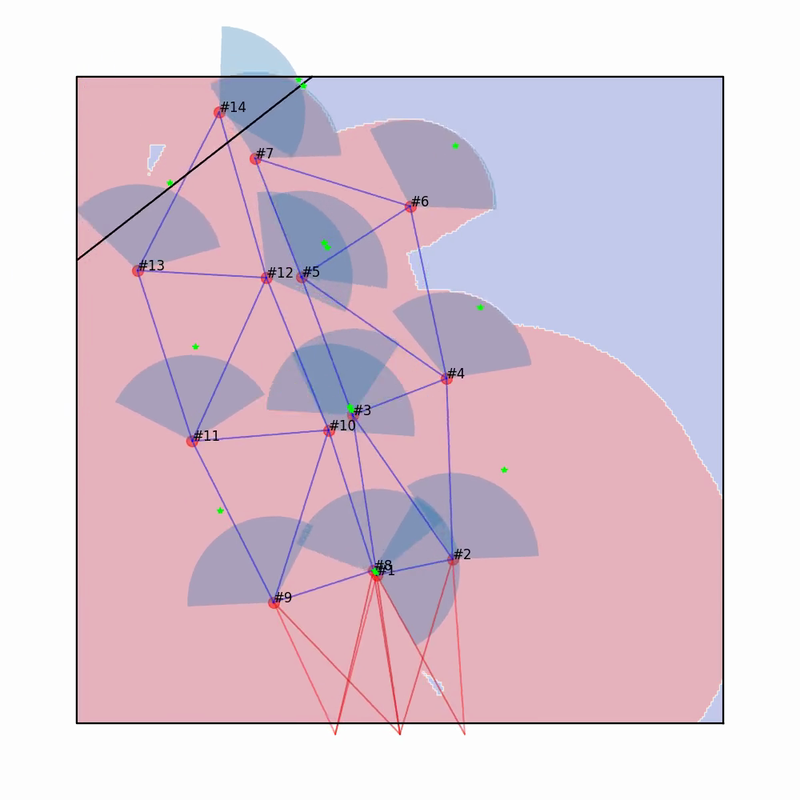}}
	\hfil
	\subfloat[Map at $t=400$s.]{\includegraphics[width=0.24\linewidth]{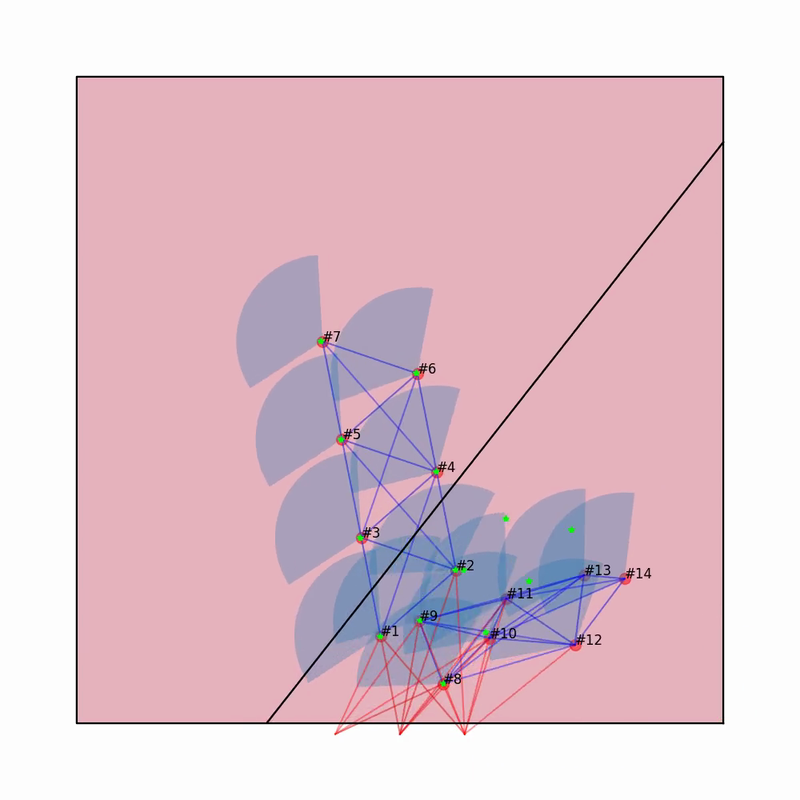}}
	\caption{Cooperative search in the MBZIRC environment:
		Gazebo views (top) and coverage progression (bottom).}
	\label{fig:mbzirc_simulation}
\end{figure*}

\begin{figure}[htbp]
	\centering
	\includegraphics[width=0.95\linewidth]{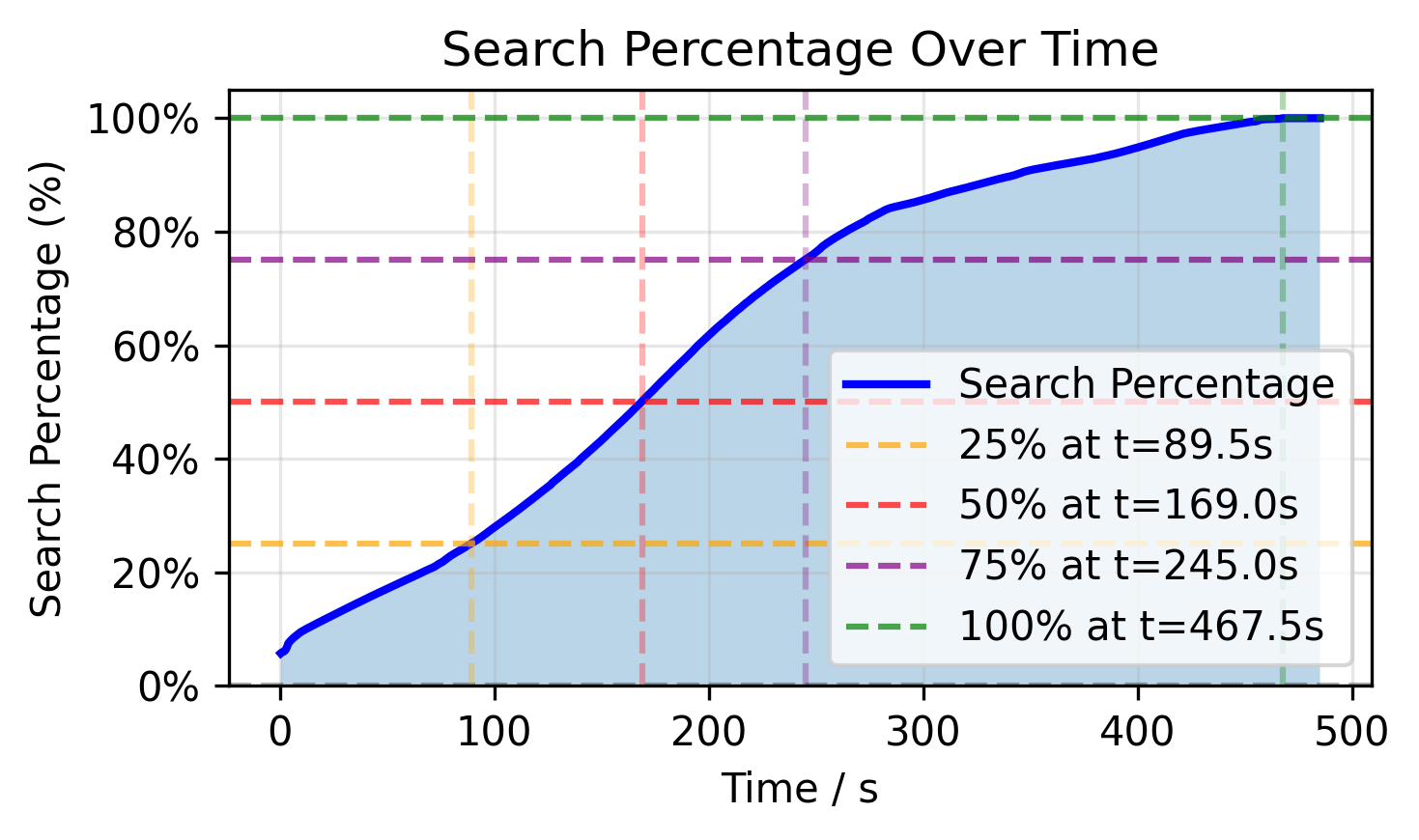}
	\caption{Search progress over time in the MBZIRC simulation.}
	\label{fig:mbzirc_search_percentage}
\end{figure}

\begin{figure}[htbp]
	\centering
	\includegraphics[width=0.95\linewidth]{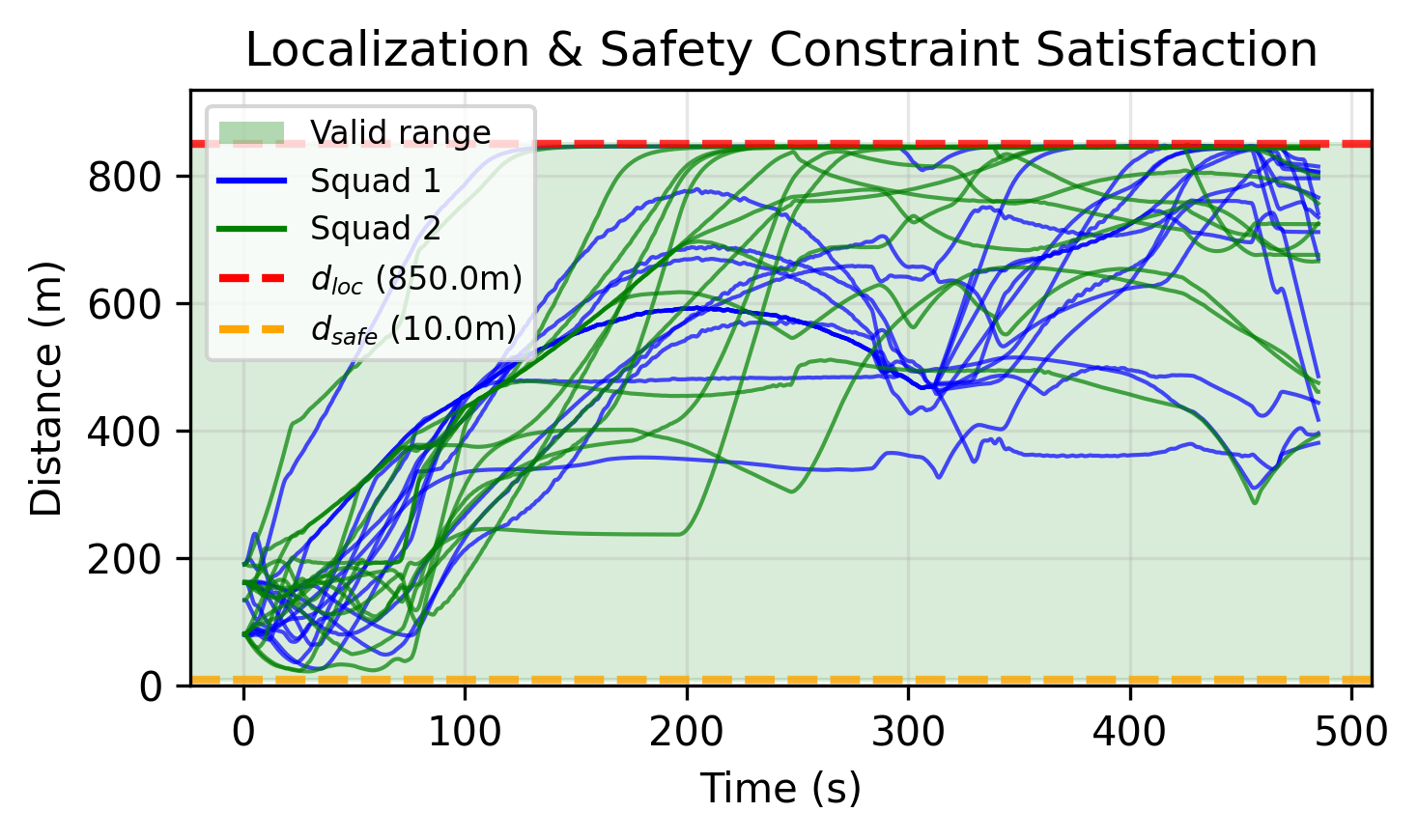}
	\caption{Distances of the assigned localization links in the MBZIRC simulation.}
	\label{fig:mbzirc_valid_links}
\end{figure}

\begin{figure}[htbp]
	\centering
	\includegraphics[width=0.95\linewidth]{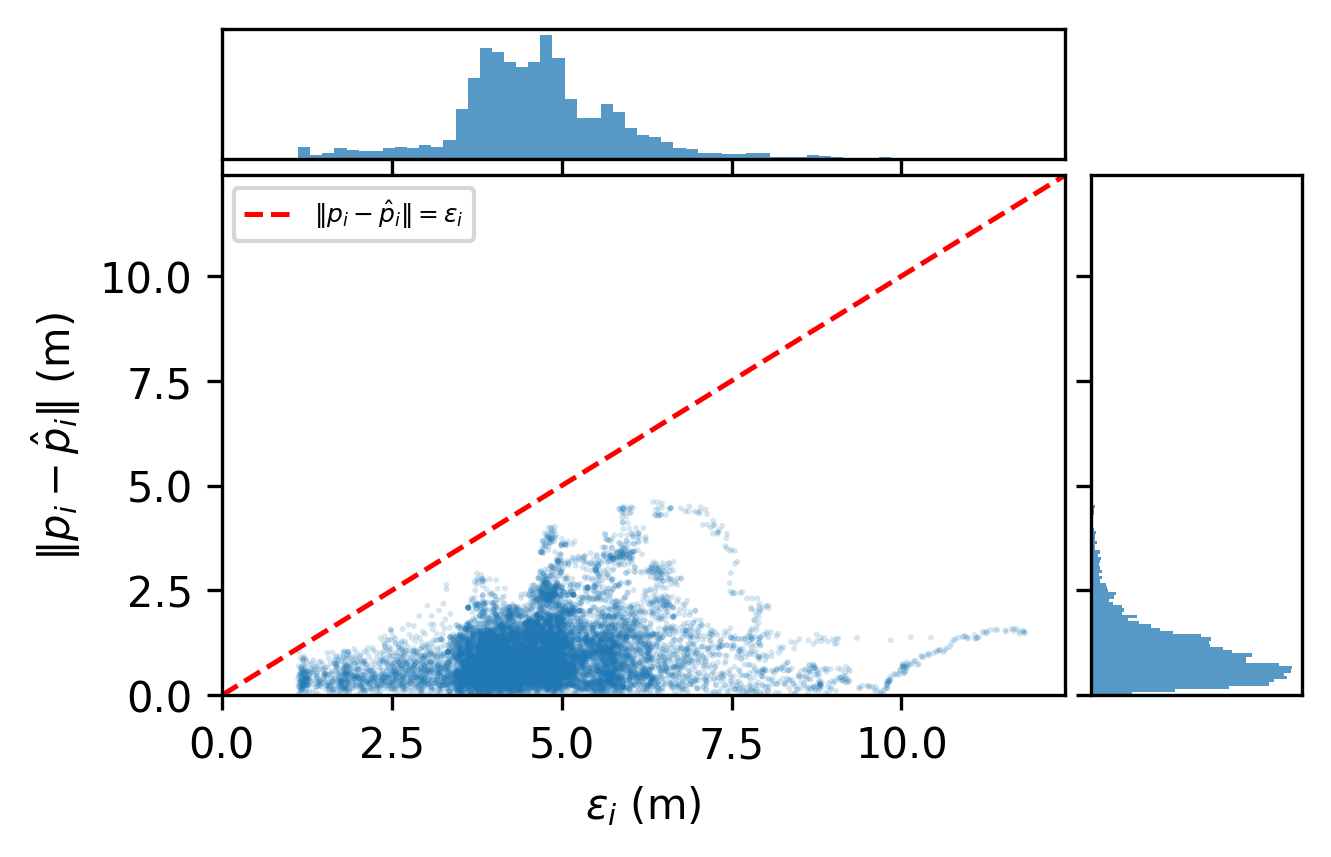}
	\caption{Localization error of the MBZIRC estimator-in-the-loop run:
		$\|p_i-\hat{p}_i\|$ versus $\epsilon_i$ with marginal histograms.}
	\label{fig:mbzirc_error}
\end{figure}

\section{Conclusion}
\label{sec:conclusion}

This paper presented a CBF framework for GNSS-denied maritime swarm search under cascading position uncertainty and forward-looking sensing constraints.
Our contributions are a Fisher-information approximation of cascading localization uncertainty,
rate-aware uncertainty-tightened CBF rows with conditional invariance and graceful-degradation bounds,
decoupled distance and yaw objectives, and a hierarchical search strategy.
Estimator-in-the-loop simulation, 20 Monte Carlo deployments, polygon examples,
and six MBZIRC runs demonstrate complete coverage.
The conditional invariance and graceful-degradation statements of
Propositions~\ref{prop:rate_aware_invariance}--\ref{prop:graceful_degradation}
are supported by the Monte Carlo containment and true-distance statistics.
Future work includes physical-platform experiments and 3D search under disturbances.

\bibliographystyle{IEEEtran}
\bibliography{references}

\end{document}